\documentclass[letterpaper,11pt]{article}
\usepackage[utf8]{inputenc}
\usepackage[T1]{fontenc}
\usepackage[mathscr]{eucal}

\usepackage[compact]{titlesec}
\usepackage{mathpazo,tgcursor,tgtermes}
\usepackage{textcomp}
\usepackage[margin=1in]{geometry}
\usepackage{bbm}
\usepackage{booktabs}
\usepackage{amsmath}
\usepackage{authblk}
\usepackage{amssymb}
\usepackage{graphicx}
\usepackage{mathtools}
\usepackage{tabulary}

\usepackage{xparse}
\usepackage{comment}
\usepackage{color}
\usepackage{amsmath}
\usepackage{amsthm}
\usepackage[linesnumbered, boxed]{algorithm2e}
\usepackage{tikz}
\usepackage{varwidth}
\usepackage[pagebackref,bookmarksnumbered,bookmarksopen,plainpages=false,pdfpagelabels,%
unicode,
breaklinks,colorlinks,citecolor=blue,linkcolor=blue,hyperindex]{hyperref}
\makeatletter 
\let\orgdescriptionlabel\descriptionlabel
\renewcommand*{\descriptionlabel}[1]{%
  \let\orglabel\label
  \let\label\@gobble
  \phantomsection
  \protected@edef\@currentlabel{#1}%
  \let\label\orglabel
  \orgdescriptionlabel{#1}%
}
\makeatother
\setlength\parindent{1em} 

\DeclareMathOperator{\fc}{fc_{LP}}

\DeclareMathOperator{\fcSDP}{fc_{SDP}}
\DeclareMathOperator{\rank}{rk}
\newcommand{\nnegrk}{\rank_{+}} 
\newcommand{\psdrk}{\rank_{\textnormal{psd}}} 

\DeclareMathOperator{\OPT}{OPT}

\DeclareMathOperator{\lca}{lca}
\DeclareMathOperator{\val}{val}
\DeclareMathOperator{\cost}{cost}
\DeclareMathOperator{\leaves}{leaves}
\DeclareMathOperator{\dist}{dist}
\newtheorem{theorem}{Theorem}[section]
\newtheorem{lemma}[theorem]{Lemma}
\newtheorem{remark}[theorem]{Remark}

\newtheorem{definition}[theorem]{Definition}
\newtheorem{corollary}[theorem]{Corollary}
\newtheorem{observation}[theorem]{Observation}
\newcommand{\R}{\mathbb{R}}
\newcommand*{\size}[1]{\left|#1\right|}

\newcommand{\sprod}[2]{\langle #1, #2 \rangle}

\newcommand{\norm}[1]{\left\Vert #1 \right\Vert}
\NewDocumentCommand{\ball}{mmmg}
{\ensuremath{%
    \IfNoValueTF{#4}
    {\mathcal{B}\left(#1, #2, #3\right)}%
    {\mathcal{B}_{#4}\left(#1, #2, #3\right)}
\xspace}}
\newcommand{\vol}[1]{\operatorname{vol}\left(#1\right)}

\newcommand{\diam}[1]{\operatorname{diam}\left(#1\right)}
\newcommand{\restr}[2]{\left.#1\right|_{#2}}

\newcommand*{\set}[2]{\left\{#1\,\middle|\,#2\right\}}
\newcommand{\err}[1]{\operatorname{err}\left(#1\right)}
\newcommand{\symM}{\ensuremath{\mathbb{S}}}
\newcommand{\Problem}[1]{\textnormal{\textsf{#1}}}

\NewDocumentCommand{\conv}{mo}{\operatorname{conv}\left(#1%
    \IfValueT{#2}{\,\middle|\,#2}%
  \right)}
\title{Hierarchical Clustering via Spreading Metrics}
\author[1]{Aurko Roy}
\affil[1]{College of Computing, Georgia Institute of Technology,
  Atlanta, GA,
  USA.
  \textit{Email:}~\texttt{aurko@gatech.edu}}

\author[2]{Sebastian Pokutta}
\affil[2]{ISyE, Georgia Institute of Technology,
  Atlanta, GA,
  USA.
  \textit{Email:}~\texttt{sebastian.pokutta@isye.gatech.edu}}
  
\begin{document}

\maketitle\begin{abstract}
We study the cost function for 
hierarchical clusterings introduced by \cite{DBLP:conf/stoc/Dasgupta16} 
where hierarchies are treated as first-class objects
rather than deriving their cost from projections into flat clusters. It was also
shown in \cite{DBLP:conf/stoc/Dasgupta16} that a top-down algorithm 
returns a hierarchical clustering of cost at most
\(O\left(\alpha_n \log n\right)\) times the cost of the optimal hierarchical clustering,
where \(\alpha_n\) is the approximation ratio of the Sparsest Cut subroutine used.
Thus using the best known approximation algorithm for Sparsest Cut due to Arora-Rao-Vazirani, 
the top-down algorithm returns a hierarchical clustering of cost at most 
\(O\left(\log^{3/2} n\right)\) times the cost of the optimal solution.
We improve this by giving an \(O(\log{n})\)-approximation algorithm for this problem.
Our main technical ingredients are a combinatorial characterization of ultrametrics induced
by this cost function, deriving an Integer Linear Programming (ILP) formulation for this family
of ultrametrics, and showing how to iteratively round an LP relaxation of this formulation by 
using the idea of \emph{sphere growing} which has been extensively used in the context of graph 
partitioning. We also prove that our algorithm returns an \(O(\log{n})\)-approximate 
hierarchical clustering for a generalization of this cost function also studied in \cite{DBLP:conf/stoc/Dasgupta16}.
Experiments show that the hierarchies found by using the ILP formulation as well 
as our rounding algorithm often have better projections into flat clusters than the standard
linkage based algorithms. We conclude with constant factor inapproximability results for this problem:
1) no polynomial size LP or SDP can achieve a constant factor approximation for this problem 
and 2) no polynomial time algorithm can achieve a constant factor approximation under the assumption
of the \emph{Small Set Expansion} hypothesis. 
\end{abstract}

\section{Introduction}
\emph{Hierarchical clustering} is an important method in cluster analysis where a data set is 
recursively partitioned into clusters of successively smaller size. They are typically
represented by rooted trees where the root corresponds to the entire data set, the leaves
correspond to individual data points and the intermediate nodes correspond to a cluster of its
descendant leaves. Such a hierarchy represents several possible \emph{flat clusterings} of the data 
at various levels of granularity; indeed every pruning of this tree returns a possible clustering. 
Therefore in situations where the number of desired clusters is not known beforehand, 
a hierarchical clustering scheme is often preferred to flat clustering.

The most popular algorithms for hierarchical clustering are bottoms-up agglomerative algorithms
like \emph{single linkage}, \emph{average linkage} and \emph{complete linkage}. In terms of theoretical
guarantees these algorithms are known to correctly recover a ground truth clustering if the 
similarity function on the data satisfies corresponding stability properties (see, e.g., \cite{balcan2008discriminative}).
Often, however, one wishes to think of a good clustering as optimizing some kind of cost function 
rather than recovering a hidden ``ground truth''.
This is the standard approach in the classical clustering setting where popular objectives are 
\(k\)-means, \(k\)-median, min-sum and \(k\)-center (see Chapter 14, \cite{friedman2001elements}). 
However as pointed out by \cite{DBLP:conf/stoc/Dasgupta16}
for a lot of popular hierarchical clustering algorithms including linkage based algorithms,
it is hard to pinpoint explicitly the cost function that these algorithms are optimizing. Moreover, much of the
existing cost function based approaches towards hierarchical clustering evaluate a hierarchy based on
a cost function for flat clustering, e.g., assigning the \(k\)-means or \(k\)-median cost to a pruning of this
tree. Motivated by this, \cite{DBLP:conf/stoc/Dasgupta16} introduced a cost function 
for hierarchical clustering where the cost takes into account the entire structure of the tree
rather than just the projections into flat clusterings. This cost function is shown to recover
the intuitively correct hierarchies on several synthetic examples like planted partitions and cliques.
In addition, a top-down graph partitioning
algorithm is presented that outputs a tree with cost at most
\(O(\alpha_n \log{n})\) times the cost of the optimal tree and
where \(\alpha_n\) is the approximation guarantee of the Sparsest Cut subroutine used. 
Thus using the Leighton-Rao algorithm~\cite{leighton1988approximate, leighton1999multicommodity}
or the Arora-Rao-Vazirani algorithm~\cite{arora2009expander} gives an approximation factor of 
\(O\left(\log^2{n}\right)\) and \(O\left(\log^{3/2}n\right)\) respectively.

In this work we give a polynomial time algorithm to recover a hierarchical clustering of cost
at most \(O(\log{n})\) times the cost of the optimal clustering according to this cost function.
We also analyze a generalization of this cost function studied by \cite{DBLP:conf/stoc/Dasgupta16}
and show that our algorithm still gives an \(O(\log{n})\) approximation in this setting.
We do this by viewing the cost function in terms of the ultrametric it induces on the data,
writing a convex relaxation for it and concluding by analyzing a popular rounding scheme
used in graph partitioning algorithms. We also implement the integer program, its LP relaxation,
and the rounding algorithm 
and test it on some synthetic and real world data sets to compare the cost of the rounded solutions
to the true optimum as well as to compare its performance to other hierarchical clustering
algorithms used in practice. Our experiments suggest that the hierarchies found by this algorithm are often
better than the ones found by linkage based algorithms as well as the \(k\)-means algorithm in terms of the error of
the best pruning of the tree compared to the ground truth.

\subsection{Related Work}
The immediate precursor to this work is \cite{DBLP:conf/stoc/Dasgupta16} where the cost function 
for evaluating a hierarchical clustering was introduced. Prior to this there has been a long line 
of research on hierarchical clustering in the context of phylogenetics and taxonomy (see, e.g.,
\cite{jardine1971mathematical, sneath1973numerical, felsenstein2004inferring}). Several authors have also
given theoretical justifications for the success of the popular linkage based algorithms 
for hierarchical clustering (see, e.g. \cite{jardine1968construction, zadeh2009uniqueness, ackerman2010characterization}).
In terms of cost functions, one approach has been to evaluate a hierarchy in terms of the \(k\)-means or \(k\)-median
cost that it induces (see \cite{dasgupta2005performance}). The cost function and the top-down algorithm in 
\cite{DBLP:conf/stoc/Dasgupta16} can also be seen as a theoretical justification for several graph 
partitioning heuristics that are used in practice. 

Besides this prior work on hierarchical clustering we are also motivated by the long line of work in the
classical clustering setting where a popular strategy is to study convex relaxations of these problems 
and to round an optimal fractional solution into an integral one with the aim of getting a good approximation to the cost function.
A long line of work (see, e.g., \cite{charikar1999constant, jain2001approximation,
jain2003greedy, charikar2012dependent}) has employed this approach on LP relaxations for the \(k\)-median problem,
including \cite{li2013approximating} which gives the best known approximation factor of \(1 + \sqrt{3} + \varepsilon\).
Similarly, a few authors have studied LP and SDP relaxations for the \(k\)-means problem
(see, e.g., \cite{peng2005new, peng2007approximating, awasthi2015relax}), while one of the best known
algorithms for kernel \(k\)-means and spectral clustering is due to \cite{recht2012factoring} which approximates
the nonnegative matrix factorization (NMF) problem by LPs.

LP relaxations for hierarchical clustering have also been studied in \cite{ailon2005fitting} where
the objective is to fit a tree metric to a data set given pairwise  dissimilarities. While the LP relaxation 
and rounding algorithm in \cite{ailon2005fitting} is similar in flavor, 
the result is incomparable to ours (see Section~\ref{sec:discussion} for a discussion). 
Another work that is indirectly related to our approach is \cite{di2015finding} where the authors
study an ILP to obtain a closest ultrametric to arbitrary functions on a discrete set.
Our approach is to give a combinatorial characterization of the ultrametrics induced by the cost function
of \cite{DBLP:conf/stoc/Dasgupta16} which allows us to use the tools from \cite{di2015finding} to model
the problem as an ILP. The natural LP relaxation of this ILP turns out to be closely related to LP relaxations 
considered before for several graph partitioning problems 
(see, e.g., \cite{leighton1988approximate, leighton1999multicommodity, even1999fast, krauthgamer2009partitioning})
and we use a rounding technique studied in this context to round this LP relaxation.

Recently, we became aware of independent work by \cite{charikar2016approximate} obtaining 
similar results for hierarchical clustering.
In particular \cite{charikar2016approximate} improve the approximation factor
to \(O\left(\sqrt{\log{n}}\right)\) by showing how to round a spreading metric SDP relaxation 
for this cost function. The analysis of this rounding procedure also enabled them
to show that the top-down heuristic of \cite{DBLP:conf/stoc/Dasgupta16}
actually returns an \(O(\sqrt{\log{n}})\) approximate clustering rather than an
\(O\left(\log^{3/2}{n}\right)\) approximate clustering. 
They also analyzed a very similar LP relaxation using the 
\emph{divide-and-conquer approximation algorithms using spreading metrics} paradigm
of \cite{even2000divide} together with a result of \cite{bartal2004graph} to show 
an \(O(\log{n})\) approximation. Finally, they also gave similar constant factor inapproximability
results for this problem.

\subsection{Contribution}
While studying convex relaxations of optimization problems is fairly natural,
for the cost function introduced in \cite{DBLP:conf/stoc/Dasgupta16} however, it is not immediately clear
how one would go about writing such a relaxation.
Our first contribution is to give a combinatorial characterization of the family of ultrametrics
induced by this cost function on hierarchies. Inspired by the approach in \cite{di2015finding} where
the authors study an integer linear program for finding the closest ultrametric, we are able to 
formulate the problem of finding the minimum cost hierarchical clustering as an integer linear
program. Interestingly and perhaps unsurprisingly, the specific family of ultrametrics 
induced by this cost function give rise to linear constraints studied before in the context of
finding balanced separators in weighted graphs. We then show how to round an optimal fractional solution using the 
\emph{sphere growing} technique first introduced in \cite{leighton1988approximate} (see also
\cite{garg1996approximate, even1999fast, charikar2003clustering}) to recover a tree of cost
at most \(O(\log{n})\) times the optimal tree for this cost function. 
The generalization of this cost function involves scaling every pairwise distances by an arbitrary
strictly increasing function \(f\) satisfying \(f(0) = 0\). We modify the integer linear program for
this general case and show that the rounding algorithm still finds a hierarchical clustering of
cost at most \(O(\log{n})\) times the optimal clustering in this setting.
We also show a constant factor inapproximability result for this problem for any polynomial sized
LP and SDP relaxations and under the assumption of the \emph{Small Set Expansion} hypothesis.
We conclude with an experimental study of the integer linear program and the rounding algorithm on
some synthetic and real world data sets to show that the approximation algorithm often recovers clusters close
to the true optimum (according to this cost function) and that its projections into flat clusters often
has a better error rate than the linkage based algorithms and the \(k\)-means algorithm.

\section{Preliminaries}\label{sec:preliminaries}
A similarity based clustering problem consists of a dataset \(V\) of \(n\) points and a 
\emph{similarity function} \(\kappa : V \times V \to \R_{\ge 0}\) such that 
\(\kappa(i, j)\) is a measure of the similarity between \(i\) and \(j\) for any \(i, j \in V\).
We will assume that the similarity function is symmetric i.e.,
\(\kappa(i, j) = \kappa(j, i)\) for every \(i, j \in V\). Note that we do not make any assumptions about
the points in \(V\) coming from an underlying metric space. For a given instance of a clustering problem
we have an associated weighted complete graph \(K_n\) with vertex set \(V\) and 
weight function given by \(\kappa\). A \emph{hierarchical clustering} of \(V\) is a tree \(T\)
with a designated root \(r\) and with the elements of \(V\) as its leaves, i.e., \(\leaves(T) = V\). 
For any set \(S \subseteq V\)
we denote the \emph{lowest common ancestor} of \(S\) in 
\(T\) by \(\lca(S)\). For pairs of points \(i, j \in V\) we will abuse the notation
for the sake of simplicity and denote \(\lca(\{i, j\})\)
simply by \(\lca(i, j)\). For a node \(v\) of \(T\) we denote the subtree of \(T\) rooted
at \(v\) by \(T[v]\). The following cost function was introduced by \cite{DBLP:conf/stoc/Dasgupta16}
to measure the quality of the hierarchical clustering \(T\)
\begin{align}
 \cost(T) \coloneqq \sum_{\{i, j\} \in E(K_n)} \kappa(i, j)\size{\leaves(T[\lca(i, j)])}\label{cost}.
\end{align}
The intuition behind this cost function is as follows. Let \(T\) be a hierarchical clustering with 
designated root \(r\) so that \(r\) represents the whole data set \(V\). Since \(\leaves(T) = V\),
every internal node \(v \in T\) represents a cluster of its descendant leaves, with the leaves themselves
representing singleton clusters of \(V\). Starting from \(r\) and going down the tree, 
every distinct pair of points \(i, j \in V\) will be eventually separated at the leaves.
If \(\kappa(i, j)\) is large, i.e., \(i\) and \(j\) are very similar to each other then we would like
them to be separated as far down the tree as possible if \(T\) is a good clustering of \(V\). 
This is enforced in the cost function~\eqref{cost}: if \(\kappa(i, j)\) is large then
the number of leaves of \(\lca(i, j)\) should be small i.e., \(\lca(i, j)\) should be far from the
root \(r\) of \(T\). Such a cost function is not unique however; 
see Section~\ref{sec:discussion} for some other cost functions of a similar flavor.

Note that while requiring \(\kappa\) to be non-negative might seem like an artificial restriction,
cost function~\eqref{cost} breaks down when all the \(\kappa(i, j) < 0\), since in this
case the trivial clustering \(r, T^*\) where \(T^*\) is the star graph with
\(V\) as its leaves is always the minimizer. 
Therefore in the rest of this work we will assume that \(\kappa \ge 0\). 
This is not a restriction compared to \cite{DBLP:conf/stoc/Dasgupta16}, since the 
Sparsest Cut algorithm used as a subroutine also
requires this assumption.
Let us now briefly recall the notion of an ultrametric. 

\begin{definition}[Ultrametric]\label{def:ultrametric}
 An \emph{ultrametric} on a set \(X\) of points is a distance function \(d: X \times X \to
 \R\) satisfying the following properties for every \(x, y, z \in X\)
 \begin{enumerate}
  \item \textbf{Nonnegativity:} \(d(x, y) \ge 0\) with \(d(x, y) = 0\) iff \(x = y\)
  
  \item \textbf{Symmetry:} \(d(x, y) = d(y, x)\)
  
  \item \textbf{Strong triangle inequality:} \(d(x, y) \le \max\{d(y, z), d(z, x)\}\)
 \end{enumerate}

\end{definition}

Under the cost function~\eqref{cost}, one can interpret the tree \(T\) 
as inducing an ultrametric \(d_T\) on \(V\) given by 
\(d_T(i, j) \coloneqq \size{\leaves(T[\lca\left(i, j\right)])} - 1\). This is an ultrametric since 
\(d_T(i, j) = 0\) iff \(i = j\) and for any triple \(i, j, k \in V\) we have \(d_T(i, j) \le 
\max\{d_T(i, k), d_T(j, k)\}\). 
The following definition introduces the notion of \emph{non-trivial ultrametrics}. These turn 
out to be precisely the ultrametrics that are induced by tree decompositions of \(V\) corresponding
to cost function~\eqref{cost}, as we will show in Corollary~\ref{cor:equivalent}.

\begin{definition}\label{def:non-trivial}
An ultrametric \(d\) on a set of points \(V\)
is \emph{non-trivial} if the following conditions hold.
\begin{enumerate}
\item For every non-empty set \(S \subseteq V\), there is a pair of points \(i, j \in S\) such
that \(d(i, j) \ge \size{S} - 1\).\label{def:non-trivial:spreading}

\item For any \(t\) if \(S_t\) is an equivalence class of \(V\) under the relation 
\(i \sim j\) iff \(d(i, j) \le t\), then \(\max_{i, j \in S_t} d(i, j) \le \size{S_t} - 1\).
\label{def:non-trivial:hereditary}
\end{enumerate}
\end{definition}

Note that for an equivalence class \(S_t\) where \(d(i, j) \le t\) for
every \(i, j \in S_t\) it follows from Condition~\ref{def:non-trivial:spreading} that
\(t \ge \size{S_t} - 1\). Thus in the case when \(t = \size{S_t} - 1\) the two conditions
imply that the maximum distance between any two points in \(S\) is \(t\) and that there is a pair
\(i, j \in S\) for which this maximum is attained. The following lemma shows
that non-trivial ultrametrics behave well under restrictions to equivalence classes
\(S_t\) of the form \(i \sim j\) iff \(d(i, j) \le t\). 

\begin{lemma}\label{lem:restriction}
 Let \(d\) be a non-trivial ultrametric on \(V\) and let \(S_t \subseteq V\) be an equivalence
 class under the relation \(i \sim j\) iff \(d(i, j) \le t\). Then
 \(d\) restricted to \(S_t\) is a non-trivial ultrametric on \(S_t\). 
\end{lemma}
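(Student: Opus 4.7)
The plan is to verify the two defining conditions of Definition~\ref{def:non-trivial} for $d$ restricted to $S_t$. Ultrametric-ness itself is immediate because nonnegativity, symmetry, and the strong triangle inequality are preserved under restriction to a subset, so the real work is in the two numerical conditions.

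For Condition~\ref{def:non-trivial:spreading}, I would simply quote the same condition for the original ultrametric $d$ on $V$: any nonempty $S \subseteq S_t$ is in particular a nonempty subset of $V$, so there already exist $i,j \in S$ with $d(i,j) \ge |S|-1$. This part is a one-liner.

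For Condition~\ref{def:non-trivial:hereditary}, fix a threshold $t'$ and let $S'_{t'} \subseteq S_t$ be an equivalence class of $S_t$ under the relation $i \sim j$ iff $d(i,j) \le t'$. I would split into two cases according to the relation between $t'$ and $t$. If $t' \le t$, then for any $i \in S'_{t'}$ and any $j \in V$ with $d(i,j) \le t'$ we automatically have $d(i,j) \le t$, hence $j \in S_t$; consequently the $S_t$-equivalence class $S'_{t'}$ coincides with the $V$-equivalence class of $i$ under the level-$t'$ relation, and Condition~\ref{def:non-trivial:hereditary} applied in $V$ at level $t'$ gives $\max_{i,j \in S'_{t'}} d(i,j) \le |S'_{t'}|-1$. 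If instead $t' > t$, then every pair in $S_t$ already satisfies $d(i,j) \le t \le t'$, so there is only one $S_t$-class at level $t'$, namely $S_t$ itself; applying Condition~\ref{def:non-trivial:hereditary} in $V$ at level $t$ yields $\max_{i,j \in S_t} d(i,j) \le |S_t|-1$, which is exactly what we need.

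The only subtle point, and the part I would be most careful about, is justifying that the equivalence class $S'_{t'}$ inside $S_t$ agrees with the corresponding equivalence class inside $V$ when $t' \le t$; this uses the strong triangle inequality to guarantee that the level-sets really are equivalence classes (so transitivity lets us talk about ``the'' class of $i$) together with the observation that distances $\le t'$ never escape $S_t$ when $t' \le t$. Once this bookkeeping is in place, both cases reduce mechanically to Condition~\ref{def:non-trivial:hereditary} for $d$ on $V$, and the lemma follows.
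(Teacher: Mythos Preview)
Your proposal is correct and follows essentially the same approach as the paper's proof: both handle Condition~\ref{def:non-trivial:spreading} by directly invoking the same condition on \(V\), and both handle Condition~\ref{def:non-trivial:hereditary} by splitting into the cases \(t' \le t\) (where the \(S_t\)-class coincides with a \(V\)-class) and \(t' > t\) (where the only \(S_t\)-class is \(S_t\) itself, and one applies Condition~\ref{def:non-trivial:hereditary} at level \(t\) in \(V\)). The only cosmetic difference is notation (\(t'\) versus the paper's \(r\)).
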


\begin{proof}
 Clearly \(d\) restricted to \(S_t\) is an ultrametric on \(S_t\) and so we need to establish that
 it satisfies Conditions~\ref{def:non-trivial:spreading} and \ref{def:non-trivial:hereditary} of 
 Definition~\ref{def:non-trivial}. Let \(S \subseteq S_t\)
 be any set. Since \(d\) is a non-trivial ultrametric on \(V\) it follows that 
 there is a pair \(i, j \in S\) with \(d(i, j) \ge \size{S} - 1\), and so \(d\) restricted to
 \(S_t\) satisfies Condition~\ref{def:non-trivial:spreading}. 
 
 If \(S'_r\) is an equivalence class in \(S_t\) under the relation \(i \sim j\) iff \(d(i, j) \le r\) then clearly
 \(S'_r = S_t\) if \(r > t\). Since \(d\) is a non-trivial ultrametric on \(V\), it follows that
 \(\max_{i, j \in S'_r} d(i, j) = \max_{i, j \in S_t} d(i, j) \le \size{S_t} - 1 = \size{S'_r} - 1\).
 Thus we may assume that \(r \le t\). Consider an \(i \in S'_r\) and let \(j \in V\) be such that \(d(i, j) \le r\). 
 Since \(r \le t\) and \(i \in S_t\),
 it follows that \(j \in S_t\) and so \(j \in S'_r\). In other words \(S'_r\) is an equivalence class in 
 \(V\) under the relation \(i \sim j\) iff \(d(i, j) \le r\). Since \(d\) is an ultrametric on \(V\) it follows that
 \(\max_{i, j \in S'_r} d(i, j) \le \size{S'_r} - 1\). Thus \(d\) restricted to \(S_t\) satisfies
 Condition~\ref{def:non-trivial:hereditary}.
\end{proof}

The intuition behind the two conditions in Definition~\ref{def:non-trivial} is as follows.
Condition~\ref{def:non-trivial:spreading} imposes a certain lower bound by
ruling out trivial ultrametrics where, e.g., \(d(i, j) = 1\) for every distinct pair
\(i, j \in V\).
On the other hand Condition~\ref{def:non-trivial:hereditary} discretizes  
and imposes an upper bound on \(d\) by
restricting its range to the set \(\{0, 1, \dots, n-1\}\) (see Lemma~\ref{lem:discrete}).
This rules out the other spectrum of triviality where for example \(d(i, j) = n\) for every 
distinct pair \(i, j \in V\) with \(\size{V} = n\). 

\begin{lemma}\label{lem:discrete}
 Let \(d\) be a non-trivial ultrametric on the set \(V\) as in Definition~\ref{def:non-trivial}. Then
 the range of \(d\) is contained in the set \(\{0, 1, \dots, n - 1\}\) with \(\size{V} = n\).
\end{lemma}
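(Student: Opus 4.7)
The plan is to proceed by strong induction on $n = \size{V}$, using Lemma~\ref{lem:restriction} to pass to strictly smaller equivalence classes whenever a pair of points $i, j$ fails to realize the global diameter.

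First I would observe that because $d$ satisfies the strong triangle inequality, for any threshold $\tau \ge 0$ the relation $i \sim_\tau j \iff d(i, j) \le \tau$ is actually an equivalence relation: reflexivity and symmetry are immediate, and transitivity follows from $d(i, k) \le \max\{d(i, j), d(j, k)\} \le \tau$. So the classes appearing in Condition~\ref{def:non-trivial:hereditary} are genuinely well-defined, and two points lie in the same class precisely when their distance is at most $\tau$. Setting $t^* \coloneqq \max_{i, j \in V} d(i, j)$, the whole set $V$ is then a single $\sim_{t^*}$-class, so Condition~\ref{def:non-trivial:hereditary} forces $t^* \le n - 1$, while Condition~\ref{def:non-trivial:spreading} applied to $S = V$ supplies a pair witnessing $t^* \ge n - 1$. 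Hence $t^* = n - 1$, which already establishes the upper bound $d \le n - 1$.

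Next I would upgrade this to the full integrality statement by strong induction on $n$. The base case $n = 1$ is trivial. For the inductive step, fix $i, j \in V$ with $t = d(i, j) > 0$ and let $S$ be the $\sim_t$-class containing $i$, which automatically contains $j$ as well since $d(i,j) = t$. If $S = V$, then every pair of points of $V$ has distance at most $t$, forcing $t^* \le t$; combined with $t \le t^* = n - 1$ this pins down $t = n - 1$. If instead $S \subsetneq V$, then $\size{S} < n$, and Lemma~\ref{lem:restriction} says that $d$ restricted to $S$ is itself a non-trivial ultrametric on $S$; the inductive hypothesis then places $d(i, j)$ in $\{0, 1, \dots, \size{S} - 1\} \subseteq \{0, 1, \dots, n - 1\}$.

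The only step that really needs care is the equivalence-relation argument for $\sim_\tau$, since everything downstream—both the definition of $S$ and the identification $t = t^*$ in the case $S = V$—rests on it. Once that is in hand, Lemma~\ref{lem:restriction} does all the heavy lifting and the induction closes immediately.
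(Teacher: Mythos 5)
Your proof is correct, and its skeleton is the same as the paper's: establish that $\max_{i,j} d(i,j) = n-1$ by playing Condition~\ref{def:non-trivial:spreading} (applied to $S = V$) against Condition~\ref{def:non-trivial:hereditary} (applied to the single class at the maximal threshold), and then induct on $\size{V}$ using Lemma~\ref{lem:restriction} to pass to proper equivalence classes. The one genuine difference is the decomposition: the paper fixes the single threshold $n-2$, partitions $V$ into classes $V_1,\dots,V_m$, applies induction inside each $V_l$, and asserts that cross-class pairs have $d(i,j) = n-1$; you instead take, for each pair $i,j$, the class $S$ at the threshold $t = d(i,j)$ itself, and split on whether $S = V$ (forcing $t = n-1$) or $S \subsetneq V$ (so induction on $\size{S} < n$ applies). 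Your variant is slightly more careful at exactly the point where the paper is terse: for a cross-class pair one only knows a priori that $n-2 < d(i,j) \le n-1$, and ruling out non-integer values in that interval requires an argument of the kind you give (the class of the pair at its own distance), whereas the paper's ``$d(i,j) \ge n-1$ by definition of the equivalence relation'' implicitly rounds up. So your route buys a cleaner treatment of the boundary case at no extra cost, while the paper's single-partition version has the small advantage of exhibiting the clique-like block structure at level $n-2$ that is reused later (e.g.\ in Lemma~\ref{lem:bijection-ultrametric}).
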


\begin{proof}
 We will prove this by induction on \(\size{V}\). The base case when \(\size{V} = 1\) is trivial. 
 Therefore, we now assume that \(\size{V} > 1\). By Condition~\ref{def:non-trivial:spreading} 
 there is a pair \(i, j \in V\) such that \(d(i, j) \ge n - 1\). Let 
 \(t = \max_{i, j \in V} d(i, j)\), then the only equivalence class under the relation \(i \sim j\) iff
 \(d(i, j) \le t\) is \(V\). By Condition~\ref{def:non-trivial:hereditary} it follows that
 \(\max_{i, j \in V} d(i, j) = t = n - 1\). Let \(V_1, \dots V_m\) denote the set of equivalence classes
 of \(V\) under the relation \(i \sim j\) iff \(d(i, j) \le n - 2\). Note that \(m > 1\) as there
 is a pair \(i, j \in V\) with \(d(i, j) = n -1\), and therefore each \(V_l
 \subsetneq V\). By Lemma~\ref{lem:restriction}, \(d\) restricted to each of these \(V_i\)'s is 
 a non-trivial ultrametric on those sets. The claim then follows immediately: 
 for any \(i, j \in V\) either \(i, j \in V_l\) for some \(V_l\) in which case by the induction hypothesis
 \(d(i, j) \in \left\{0, 1, \dots, \size{V_l} - 1\right\}\), or \(i \in V_l\) and \(j \in V_{l'}\) for \(l \neq l'\) in 
 which case \(d(i, j) = n - 1\).
\end{proof}

\section{Ultrametrics and Hierarchical Clusterings}
\label{sec:main}

We start with the following easy lemma about the lowest common ancestors of subsets
of \(V\) in a hierarchical clustering \(T\) of \(V\). 

\begin{lemma}\label{lem:lca}
 Let \(S \subseteq V\) with \(\size{S} \ge 2\). If \(r = \lca(S)\) then there is a pair 
 \(i, j \in S\) such that \(\lca(i, j) = r\).
\end{lemma}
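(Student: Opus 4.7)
The plan is to argue via the children of $r$ in the tree $T$. Since $r = \lca(S)$, every element of $S$ lies in the subtree $T[r]$. I would consider the children $c_1, \dots, c_k$ of $r$ in $T$; their subtrees $T[c_1], \dots, T[c_k]$ partition $T[r] \setminus \{r\}$, and since $|S| \ge 2$ and each element of $S$ is a leaf (so cannot equal $r$), each $s \in S$ belongs to exactly one $T[c_\ell]$.

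The key step is to observe that at least two distinct children of $r$ must contain a point of $S$. If all of $S$ were contained in $T[c_\ell]$ for a single child $c_\ell$, then $c_\ell$ would itself be a common ancestor of $S$ strictly below $r$, contradicting the defining property of $\lca(S)$ as the lowest such ancestor. So I can pick $i \in S \cap T[c_\ell]$ and $j \in S \cap T[c_{\ell'}]$ with $\ell \ne \ell'$.

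It then remains to check that $\lca(i, j) = r$ for this pair. On the one hand $r$ is a common ancestor of $i$ and $j$ since both lie in $T[r]$. On the other hand, any common ancestor of $i$ and $j$ that is a proper descendant of $r$ would have to lie inside some single $T[c_m]$ (as the subtrees of the children are disjoint), which is impossible because $i$ and $j$ live in different $T[c_\ell], T[c_{\ell'}]$. Hence $r$ is the lowest common ancestor of $\{i, j\}$.

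I do not anticipate a substantial obstacle here; the only thing to be slightly careful about is making the "lowest" part of the definition of $\lca$ precise enough to rule out the single-child case, which is exactly what drives the whole argument.
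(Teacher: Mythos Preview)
Your argument is correct, but it differs from the paper's proof. The paper proceeds by induction on \(\size{S}\): for \(\size{S}>2\) it removes an arbitrary point \(i\), sets \(r'=\lca(S\setminus\{i\})\), shows that \(r=\lca(i,r')\), and then either applies the induction hypothesis (if \(i\in T[r']\)) or pairs \(i\) with any \(j\in S\cap T[r']\) (if \(i\notin T[r']\)). Your approach instead looks directly at the children of \(r\) and uses the partition of \(T[r]\setminus\{r\}\) into their subtrees to find two points of \(S\) in distinct child subtrees. Your route is the more elementary one: it avoids induction entirely and isolates exactly the structural fact that drives the lemma, namely that \(S\) cannot be confined to a single child subtree of \(\lca(S)\). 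The paper's inductive proof is slightly longer but has the minor advantage of not explicitly invoking the children of \(r\), working only with the \(\lca\) operator and subtree containment.
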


\begin{proof}
We will proceed by induction on \(|S|\). If \(|S| = 2\) then the claim is trivial and so we may
assume \(|S| > 2\). Let \(i \in S\) be an arbitrary point
and let \(r' = \lca(S\setminus \{i\})\). We claim that \(r = \lca(i, r')\). Clearly the 
subtree rooted at \(\lca(i, r')\) contains \(S\) and since \(T[r]\) is the smallest such tree it 
follows that \(r \in T[\lca(i, r')]\). 

Conversely, \(T[r]\) contains \(S\setminus \{i\}\) and so
\(r' \in T[r]\) and since \(i \in T[r]\), it follows that \(\lca(i, r') \in T[r]\).
Thus we conclude that \(r = \lca(i, r')\). 

If \(\lca(i, r') = r'\), then we are done by the induction hypothesis. Thus we may assume that
\(i \notin T[r']\). Consider any \(j \in S\) such that \(j \in T[r']\). Then we have that
\(\lca(i, j) = r\) as \(\lca(i, r') = r\) and \(j \in T[r']\) and \(i \notin T[r']\).
\end{proof}

We will now show that non-trivial ultrametrics on \(V\) as in Definition~\ref{def:non-trivial} are exactly
those that are induced by hierarchical clusterings on \(V\) under cost function~\eqref{cost}.
The following lemma shows the forward direction: the ultrametric \(d_T\) induced by any hierarchical
clustering \(T\) is non-trivial.

\begin{lemma}\label{lem:hierarchy-nontrivial}
 Let \(T\) be a hierarchical clustering on \(V\) and let \(d_T\) be the ultrametric on \(V\) induced by it.
 Then \(d_T\) is non-trivial.
\end{lemma}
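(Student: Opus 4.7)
The plan is to verify that $d_T$ satisfies both defining conditions of a non-trivial ultrametric in turn, with the key observation being that $\lca$-subtrees in $T$ realize natural ``closed neighborhoods'' for the relation $i \sim j$ iff $d_T(i,j) \le t$.

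For Condition~\ref{def:non-trivial:spreading}, I would take an arbitrary non-empty $S \subseteq V$ and set $r = \lca(S)$. The case $|S| = 1$ is immediate (the condition asks for $d_T(i,j) \ge 0$, which holds vacuously or trivially). For $|S| \ge 2$, I would invoke Lemma~\ref{lem:lca} to obtain a pair $i, j \in S$ with $\lca(i,j) = r$. Since $S \subseteq \leaves(T[r])$, this gives $d_T(i,j) = |\leaves(T[r])| - 1 \ge |S| - 1$, as required.

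For Condition~\ref{def:non-trivial:hereditary}, I would fix an equivalence class $S_t$ of the relation $i \sim j$ iff $d_T(i,j) \le t$, and pick a pair $i^*, j^* \in S_t$ attaining the maximum $m \coloneqq \max_{i,j \in S_t} d_T(i,j)$. Set $r^* = \lca(i^*, j^*)$, so $|\leaves(T[r^*])| = m + 1$. The key claim is that $\leaves(T[r^*]) \subseteq S_t$: for any $k \in \leaves(T[r^*])$, the ancestor $\lca(k, i^*)$ sits inside $T[r^*]$, so
\[
d_T(k, i^*) \;=\; |\leaves(T[\lca(k, i^*)])| - 1 \;\le\; |\leaves(T[r^*])| - 1 \;=\; m \;\le\; t,
\]
where the last inequality uses $i^*, j^* \in S_t$. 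Thus $k \sim i^*$ and hence $k \in S_t$. This inclusion gives $m + 1 = |\leaves(T[r^*])| \le |S_t|$, i.e., $m \le |S_t| - 1$, which is exactly Condition~\ref{def:non-trivial:hereditary}.

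Neither step looks like a serious obstacle: the first is essentially a restatement of Lemma~\ref{lem:lca}, and the second hinges only on the elementary fact that for any two leaves $k, \ell$ of $T[r^*]$, their lowest common ancestor still lies in $T[r^*]$, so $d_T(k, \ell)$ cannot exceed the ``diameter'' $m$ realized at $r^*$. The only thing to be mildly careful about is the degenerate case $|S_t| = 1$, where the maximum is $0$ and the bound $0 \le 0$ holds trivially, so the argument above is really only needed when $|S_t| \ge 2$.
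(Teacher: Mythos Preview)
Your proof is correct and follows essentially the same approach as the paper. The only cosmetic difference is in Condition~\ref{def:non-trivial:hereditary}: the paper works with \(r = \lca(S_t)\) and argues by contradiction (assuming some pair has \(d_T(i,j) > |S_t|-1\) and finding a leaf of \(T[r]\) outside \(S_t\)), while you take the max-attaining pair directly and show \(\leaves(T[r^*]) \subseteq S_t\); the underlying observation---that every leaf of the relevant subtree is within \(d_T\)-distance \(t\) of a fixed element of \(S_t\)---is identical.
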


\begin{proof}
Let \(S \subseteq V\) be arbitrary and \(r = \lca(S)\), then \(T[r]\) has at least \(|S|\) 
leaves. By Lemma~\ref{lem:lca} there must be a pair \(i, j \in S\) such that 
\(r = \lca(i, j)\) and so \(d_T(i, j) \ge |S| - 1\). This satisfies Condition~\ref{def:non-trivial:spreading}
of non-triviality. 

For any \(t\), let \(S_t\) be a non-empty equivalence class under
the relation \(i \sim j\) iff \(d_T(i, j) \le t\). Since \(d_T\) satisfies Condition~\ref{def:non-trivial:spreading}
it follows that \(\size{S_t} - 1 \le t\). Let us assume for the sake of contradiction that
there is a pair \(i, j \in S_t\) such that \(d_T(i, j) > \size{S_t} - 1\). Let \(r = \lca(S_t)\); 
using the definition of \(d_T\) it follows that \(t + 1 \ge \size{\leaves \left( T[r]\right)} > \size{S_t}\) since \(i, j \in S_t\). 
Let \(k \in \leaves\left(T[r]\right) \setminus S_t\) be an arbitrary point, then for every \(l \in S_t\) it follows
that \(d_T(k, l) \le \size{\leaves(T[r])} - 1 \le t\) since the subtree rooted at \(r\) contains both \(k\) and \(l\).
This is a contradiction to \(S_t\) being an equivalence class under \(i \sim j\) iff \(d_T(i, j) \le t\)
since \(k \notin S_t\). Thus \(d_T\) also satisfies Condition~\ref{def:non-trivial:hereditary} of Definition~\ref{def:non-trivial}.
\end{proof}

The following crucial lemma shows the converse: every non-trivial ultrametric
on \(V\) is realized by a hierarchical clustering \(T\) of \(V\).

\begin{lemma}\label{lem:bijection-ultrametric}
For every non-trivial ultrametric \(d\) on \(V\) there is a hierarchical clustering
\(T\) on \(V\) such that for any pair \(i, j \in V\) we have
\begin{align*}
d_T(i, j) = \size{\leaves(T[\lca\left(i, j\right)])} - 1 = d(i, j).
\end{align*}
Moreover this hierarchy can be constructed in time \(O\left(n^3\right)\) by Algorithm~\ref{algo:buildtree} 
where \(\size{V} = n\).
\end{lemma}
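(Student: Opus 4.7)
The plan is to construct $T$ top-down, recursively peeling off the highest distance level of the ultrametric. Concretely, given a subset $U \subseteq V$ with $|U| \ge 2$, let $t = \max_{i,j \in U} d(i,j)$ and partition $U$ into the equivalence classes $U_1,\dots,U_m$ of the relation $i \sim j$ iff $d(i,j) \le t-1$. The algorithm creates a fresh root $r$, recurses on each $U_l$ to obtain a subtree $T_l$, and attaches each $T_l$ as a child of $r$ (the base case $|U|=1$ returns a single leaf). This is exactly what Algorithm~\ref{algo:buildtree} will do.

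To prove correctness, I would induct on $|U|$. The key invariant I need is that at each recursive call on a set $U$, the quantity $t = \max_{i,j\in U} d(i,j)$ equals $|U|-1$; granting this, the cross-class pairs $i \in U_l$, $j \in U_{l'}$ with $l \ne l'$ satisfy $d(i,j) = t = |U|-1$ (because $i \not\sim j$ forces $d(i,j) > t-1$, and by definition of $t$ we have $d(i,j) \le t$), while their lca in the constructed tree is $r$ with $|\leaves(T[r])| = |U|$, giving $d_T(i,j) = |U|-1 = d(i,j)$. For pairs $i,j$ lying in the same $U_l$, the lca in $T$ coincides with their lca in $T_l$, and the inductive hypothesis, applied to the restriction $d|_{U_l}$ (which is non-trivial by Lemma~\ref{lem:restriction}), yields $d_T(i,j) = d(i,j)$. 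Finally, since $t$ is achieved by some pair, at least two equivalence classes exist at threshold $t-1$, so each $U_l$ is a proper subset of $U$ and the recursion strictly descends.

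The main obstacle, modest but essential, is establishing the invariant $t = |U|-1$ at every recursive call. Condition~\ref{def:non-trivial:spreading} applied to the full set $U$ gives some pair with $d \ge |U|-1$, so $t \ge |U|-1$. For the reverse inequality, observe that $U$ is itself the unique equivalence class at threshold $t$, so Condition~\ref{def:non-trivial:hereditary} yields $t \le |U|-1$. At deeper recursive calls, the argument goes through identically because $d$ restricted to $U_l$ remains a non-trivial ultrametric by Lemma~\ref{lem:restriction}, so both conditions of Definition~\ref{def:non-trivial} are available on $U_l$.

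For the running time, computing the pairwise maximum and the equivalence classes on a set of size $k$ takes $O(k^2)$ operations. The recursion depth is at most $n$ since each call strictly shrinks the set, and the work on all sibling subproblems at any fixed recursion depth sums to at most $O(n^2)$ because the $U_l$'s are disjoint. Multiplying these gives the claimed $O(n^3)$ bound.
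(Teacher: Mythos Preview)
Your proposal is correct and follows essentially the same top-down inductive construction as the paper's proof. One small point worth tightening: your inference that cross-class pairs satisfy $d(i,j) = t$ from $t-1 < d(i,j) \le t$ tacitly assumes $d$ is integer-valued; the paper closes exactly this step by invoking Lemma~\ref{lem:discrete}, which shows the range of a non-trivial ultrametric lies in $\{0,1,\dots,n-1\}$.
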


\begin{proof}
 The proof is by induction on \(n\). The base case when \(n = 1\) is straightforward. We now suppose that
 the statement is true for sets of size \(< n\). Note that \(i \sim j\) iff \(d(i, j) \le n - 2\) is an
 equivalence relation on \(V\) and thus partitions \(V\) into \(m\) equivalence classes \(V_1, \dots, V_m\).
 We first observe that \(m > 1\) since by Condition~\ref{def:non-trivial:spreading} there is a pair
 of points \(i, j \in V\) such that \(d(i, j) \ge n - 1\) and in particular 
 \(\size{V}_l < n\) for every \(l \in \{1, \dots, m\}\). 
 By Lemma~\ref{lem:restriction}, \(d\) restricted to any \(V_l\) is a non-trivial ultrametric on \(V_l\)
 and there is a pair of points \(i, j \in V_l\) such that \(d(i, j) = \size{V_l} - 1\) by 
 Conditions~\ref{def:non-trivial:spreading} and \ref{def:non-trivial:hereditary}.
 Therefore by the induction hypothesis we construct trees \(T_1, \dots, T_m\) such that 
 for every \(l \in \{1, \dots, m\}\) we have \(\leaves(T_l) = V_l\).
 Further for any pair of points \(i, j \in V_l\) for some \(l \in \{1, \dots, m\}\), we also have
 \( d(i, j) = d_{T_l}(i, j)\).
 
 We construct the tree \(T\) as follows: we first add a root \(r\) and then connect the root \(r_l\)
 of \(T_l\) to \(r\) for every \(l \in \{1, \dots, m\}\).
 Consider a pair of points \(i, j \in V\). If \(i, j \in V_l\) for some \(l \in \{1, \dots, m\}\)
 then we are done since \(d_{T_l}(i, j) = d_T(i, j)\) as \(\lca(i, j) \in T_l\).
 If \(i \in V_l\) and \(j \in V_{l'}\) for some \(l \neq l'\) then \(d(i, j) = n - 1\) since 
 \(d(i, j) \ge n - 1\) by definition of the equivalence relation and the range of \(d\) lies in \(\{0, 1, \dots, n - 1\}\)
 by Lemma~\ref{lem:discrete}. Moreover \(i\) and \(j\) are leaves in \(T_l\) and \(T_{l'}\)
 respectively, and thus by construction of \(T\) we have \(\lca(i, j) = r\), i.e., 
 \(d_T(i, j) = n - 1\) and so the claim follows. 
 Algorithm~\ref{algo:buildtree} simulates this inductive argument
 can be easily implemented to run in time \(O\left(n^3\right)\).
\end{proof}

Lemmas~\ref{lem:hierarchy-nontrivial} and \ref{lem:bijection-ultrametric} together imply
the following corollary about the equivalence of hierarchical clusterings 
and non-trivial ultrametrics.

\begin{corollary}\label{cor:equivalent}
 There is a bijection between the set of hierarchical clusterings \(T\) on \(V\) and 
 the set of non-trivial ultrametrics \(d\) on \(V\) satisfying the following conditions.
 \begin{enumerate}
  \item For every hierarchical clustering \(T\) on \(V\), there is a non-trivial ultrametric \(d_T\) defined
  as \(d_T(i, j) \coloneqq \size{\leaves{T[\lca(i, j)]}} - 1\) for every \(i, j \in V\).
  
  \item For every non-trivial ultrametric \(d\) on \(V\), there is a hierarchical clustering \(T\) on \(V\)
  such that for every \(i, j \in V\) we have \(\size{\leaves{T[\lca(i, j)]}} - 1 = d(i, j)\).
 \end{enumerate}
Moreover this bijection can be computed in \(O(n^3)\) time, where \(\size{V} = n\).
\end{corollary}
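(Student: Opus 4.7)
The plan is to package the two preceding lemmas into a bijection. Define the forward map $\Phi$ sending a hierarchical clustering $T$ to the function $d_T(i,j) \coloneqq \size{\leaves(T[\lca(i,j)])} - 1$, and the backward map $\Psi$ sending a non-trivial ultrametric $d$ to the tree $T_d$ produced by Algorithm~\ref{algo:buildtree}. The content of the corollary is then: both maps are well-defined, and they are mutual inverses. The $O(n^3)$ runtime claim is inherited directly from Lemma~\ref{lem:bijection-ultrametric}.

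Well-definedness and one of the two compositions are essentially free from the preceding work. Lemma~\ref{lem:hierarchy-nontrivial} shows that $d_T$ is a non-trivial ultrametric for every hierarchical clustering $T$, so $\Phi$ lands in the correct target set. Lemma~\ref{lem:bijection-ultrametric} shows that Algorithm~\ref{algo:buildtree} outputs a tree $T_d$ on leaf set $V$ satisfying $d_{T_d}(i,j) = d(i,j)$ for all $i,j \in V$, i.e., $\Phi \circ \Psi = \mathrm{id}$. So $\Psi$ is a right inverse of $\Phi$, which immediately implies $\Phi$ is surjective and $\Psi$ is injective.

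What remains is $\Psi \circ \Phi = \mathrm{id}$, i.e., that starting from a tree $T$, forming $d_T$, and running Algorithm~\ref{algo:buildtree} returns $T$ (up to the standard convention of identifying trees that agree after contracting any internal node with a single child, as such nodes do not change $\lca$'s or $d_T$). I would prove this by induction on $n = \size{V}$, with the base case $n=1$ trivial. For the inductive step, Lemma~\ref{lem:lca} gives some pair $i,j \in V$ with $\lca(i,j)$ equal to the root $r$ of $T$, so $d_T(i,j) = n - 1$. Combined with Lemma~\ref{lem:discrete}, this shows that the equivalence classes of $V$ under $i \sim j \iff d_T(i,j) \le n-2$ are precisely the leaf sets of the children of $r$: two leaves in the same child-subtree $T[c]$ have $d_T$-distance at most $\size{\leaves(T[c])} - 1 \le n - 2$, while two leaves in different child-subtrees have $\lca$ equal to $r$ and therefore $d_T$-distance $n-1$. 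Since Algorithm~\ref{algo:buildtree} first splits $V$ into exactly these equivalence classes and then recurses, applying Lemma~\ref{lem:restriction} to pass the non-triviality hypothesis down and invoking the induction hypothesis on each child-subtree completes the argument.

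The main (and only) obstacle is the bookkeeping around tree equivalence: without disallowing single-child internal nodes, $\Phi$ is not injective, so the bijection should be stated for trees in this canonical form, matching the output shape of Algorithm~\ref{algo:buildtree}. Apart from this convention, everything else is a clean assembly of Lemmas~\ref{lem:discrete}, \ref{lem:lca}, \ref{lem:hierarchy-nontrivial}, and \ref{lem:bijection-ultrametric}, and the complexity bound is immediate.
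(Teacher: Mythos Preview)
Your proposal is correct and takes essentially the same approach as the paper: the paper offers no separate proof of this corollary, simply stating that Lemmas~\ref{lem:hierarchy-nontrivial} and~\ref{lem:bijection-ultrametric} ``together imply'' it. Your write-up in fact goes further than the paper, supplying the inductive argument for $\Psi \circ \Phi = \mathrm{id}$ and flagging the single-child-internal-node convention needed for $\Phi$ to be injective---a genuine subtlety the paper leaves implicit.
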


\begin{algorithm}[!htb]\label{algo:buildtree}
\DontPrintSemicolon 
\KwIn{Data set \(V\) of \(n\) points, non-trivial ultrametric \(d: V \times V\to \R_{\ge 0}\)}
\KwOut{Hierarchical clustering \(T\) of \(V\) with root \(r\)} 
\(r\gets\) arbitrary choice of designated root in \(V\)\\
\(X \gets \{r\}\)\\
\(E \gets \emptyset\)\\
\uIf{\(n = 1\)}{
  \(T \gets (X, E)\) \\  
  \Return{\(r, T\)}\;
  }
\Else{
 Partition \(V\) into \(\{V_1, \dots V_m\}\) under the equivalence relation 
 \(i \sim j\) iff \(d(i, j) < n - 1\)\\
 \For{\(l \in \{1, \dots, m\}\)}{
  Let \(r_l, T_l\) be output of Algorithm~\ref{algo:buildtree} on \(V_l, \restr{d}{V_l}\)\\
  \(X \gets X \cup V(T_l)\)\\
  \(E \gets E \cup \{r, r_l\}\)\\
  }
  \(T \gets (X, E)\) \\
  \Return{\(r, T\)}
}
\caption{Hierarchical clustering of \(V\) from non-trivial ultrametric}
\end{algorithm}

Therefore to find the hierarchical clustering of minimum cost, it suffices to minimize \(\sprod{\kappa}{d}\) over
non-trivial ultrametrics \(d: V \times V \to \{0, \dots, n-1\}\), where \(V\) is the data set. Note that
the cost of the ultrametric \(d_T\) corresponding to a tree \(T\)
is an affine offset of \(\cost(T)\). In particular, we have \(\sprod{\kappa}{d_T} = \cost(T) - \sum_{\{i, j\} \in E(K_n)}
\kappa(i, j)\).

A natural approach is to formulate this problem as an Integer Linear Program (ILP)
and then study LP or SDP relaxations of it. 
We consider the following ILP for this problem that is motivated by 
\cite{di2015finding}. 
We have the variables \(x^{1}_{ij}, \dots, x^{n-1}_{ij}\) for every distinct pair \(i, j \in V\) 
with \(x^{t}_{ij} = 1\) if and only
if \(d(i, j) \ge t\). For any positive integer \(n\), let \([n] \coloneqq \{1, 2, \dots, n\}\).

\begin{align}\label{ilp:ultrametric}
\tag{ILP-ultrametric}
\min \qquad & \sum_{t = 1}^{n-1} \sum_{\{i, j\} \in E(K_n)} \kappa (i, j) x^t_{ij}\\ 
 \text{s.t.} \qquad & x^t_{ij} \ge x^{t+1}_{ij} \quad \qquad \forall i, j \in V, t \in [n - 2] \label{eq:nonincreasing}\\
 \qquad & x^t_{ij} + x^t_{jk} \ge x^t_{ik}  \quad \qquad \forall i, j, k \in V, t \in [n-1] \label{eq:triangle}\\
 \qquad & \sum_{i, j \in S} x^{t}_{ij} \ge 2 \quad \qquad \forall t\in [n-1], S \subseteq V, |S| = t + 1\label{eq:spreading}\\
 \qquad & \sum_{i, j \in S} x^{|S|}_{ij} \le \size{S}  \left(\sum_{i, j \in S} x^t_{ij} + 
 \sum_{\substack{i \in S\\ j \notin S}}\left(1 - x^t_{ij}
 \right)\right) \forall t\in [n-1], S \subseteq V \label{eq:hereditary}\\
 \qquad & x^t_{ij} = x^t_{ji} \quad\qquad \forall i, j \in V, t \in [n-1] \label{eq:symmetry}\\
 \qquad & x^t_{ii} = 0 \quad\qquad \forall i \in V, t \in [n-1] \label{eq:identity}\\
 \qquad & x^t_{ij} \in \{0, 1\} \quad\qquad \forall i, j \in V, t \in [n-1]
\end{align}

Constraints~\eqref{eq:nonincreasing} and \eqref{eq:identity}
follow from the interpretation of the variables \(x^t_{ij}\): 
if \(d(i, j) \ge t\), i.e., \(x^t_{ij} = 1\) then clearly \(d(i, j) \ge t-1\) and so
\(x^{t-1}_{ij} = 1\). Furthermore, for any \(i \in V\) we have \(d(i, i) = 0\) and 
so \(x^t_{ii} = 0\) for every \(t \in [n-1]\).
Note that constraint~\eqref{eq:triangle} is the same as the \emph{strong triangle inequality}
(Definition~\ref{def:ultrametric}) since
the variables \(x^t_{ij}\) are in \(\{0, 1\}\).
Constraint~\ref{eq:symmetry} ensures that
the ultrametric is symmetric. 
Constraint~\ref{eq:spreading} ensures the ultrametric satisfies 
Condition~\ref{def:non-trivial:spreading} of 
non-triviality: for every \(S\subseteq V\)
of size \(t + 1\) we know that there must be points \(i, j \in S\) such that
\(d(i, j) = d(j, i) \ge t\) or in other words \(x^t_{ij}  = x^t_{ji} = 1\). 
Constraint~\ref{eq:hereditary} ensures that the ultrametric 
satisfies Condition~\ref{def:non-trivial:hereditary} of non-triviality. To see this
note that the constraint is active only when \(\sum_{i, j \in S} x^t_{ij} = 0\)
and \(\sum_{i \in S, j \notin S} (1 - x^t_{ij}) = 0\). 
In other words \(d(i, j) \le t - 1\) for every \(i, j \in S\) and \(S\) is a maximal such 
set since if \(i \in S\) and \(j \notin S\) then \(d(i, j) \ge t\). Thus \(S\)
is an equivalence class under the relation \(i \sim j\) iff \(d(i, j) \le t-1\) and so
for every \(i, j \in S\) we have \(d(i, j) \le \size{S} - 1\) or equivalently \(x^{\size{S}}_{ij} = 0\).
The ultrametric \(d\) represented 
by a feasible solution \(x^t_{ij}\) is given by 
\(d(i, j) = \sum_{t = 1}^{n-1} x^t_{ij}\). 

\begin{definition}\label{def:cliques}
For any \(\left\{x^t_{ij} \mid t \in [n-1], i, j \in V\right\}\) let 
\(E_t\) be defined as \(E_t \coloneqq \left\{\{i, j\} \mid x^t_{ij} = 0\right\}\). Note that
if \(x^t_{ij}\) is feasible for \ref{ilp:ultrametric} then 
\(E_{t} \subseteq E_{t+1}\) for any \(t\) since \(x^t_{ij} \ge x^{t+1}_{ij}\).
The sets \(\{E_t\}_{t = 1}^{n-1}\) induce a natural sequence of graphs \(\{G_t\}_{t=1}^{n-1}\)
where \(G_t = (V, E_t)\) with \(V\) being the data set.
\end{definition}

For a fixed \(t \in \{1, \dots, n-1\}\) it is instructive to study the combinatorial properties
of the so called \emph{layer-\(t\) problem}, 
where we restrict ourselves to the constraints corresponding to that particular \(t\)
and drop constraints~\eqref{eq:nonincreasing} and \eqref{eq:hereditary} since they
involve different layers in their expression.

\begin{align}
 \label{ilp:layer} \tag{ILP-layer} \min \qquad &\sum_{\{i, j\} \in E(K_n)} \kappa (i, j) x^t_{ij} \\
 \text{s.t.} \qquad & x^t_{ij} + x^t_{jk} \ge x^t_{ik} \quad\qquad \forall i, j, k \in V \label{eq:layer:triangle}\\
       \qquad & \sum_{i, j \in S} x^t_{ij} \ge 2 \quad \qquad \forall S \subseteq V, \size{S} = t + 1 \label{eq:layer:nontrivial}\\
       \qquad & x^t_{ij} = x^t_{ji}  \quad \qquad \forall i, j \in V \label{eq:layer:symmetry} \\
       \qquad & x^t_{ii} = 0 \quad \qquad \forall i \in V \label{eq:layer:identity}\\
       \qquad & x^t_{ij} \in \{0, 1\} \quad\qquad \forall i, j \in V
\end{align}

The following lemma provides a combinatorial characterization of feasible solutions to the layer-\(t\)
problem.

\begin{lemma}\label{lem:cliques}
 Let \(G_t = (V, E_t)\) be the graph as in Definition~\ref{def:cliques} corresponding to a solution \(x^t_{ij}\) to the
 layer-\(t\) problem \ref{ilp:layer}. Then \(G_t\) is a disjoint union of cliques of size
 \(\le t\). Moreover this exactly characterizes all feasible solutions of \ref{ilp:layer}.
\end{lemma}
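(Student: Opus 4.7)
The plan is to pass back and forth between the variable description $x^t_{ij} \in \{0,1\}$ and the combinatorial graph $G_t$, using the identity/symmetry constraints to make $E_t$ a well-defined undirected graph, the triangle inequality to force $E_t$ to be transitively closed, and the spreading constraint to bound the clique sizes. Then the converse is a direct verification.

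First I would establish the disjoint union of cliques property. Define the relation $i \sim j$ on $V$ by $i \sim j$ iff $x^t_{ij} = 0$, i.e., $\{i,j\} \in E_t$. Reflexivity is immediate from constraint~\eqref{eq:layer:identity}, and symmetry from constraint~\eqref{eq:layer:symmetry}. For transitivity, suppose $x^t_{ij} = 0$ and $x^t_{jk} = 0$; the triangle inequality~\eqref{eq:layer:triangle} gives $x^t_{ik} \le x^t_{ij} + x^t_{jk} = 0$, and since $x^t_{ik} \in \{0,1\}$, we conclude $x^t_{ik} = 0$. Hence $\sim$ is an equivalence relation and each equivalence class forms a clique in $G_t$; distinct classes have no edges between them, so $G_t$ is a disjoint union of cliques.

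Next I would bound the clique sizes using constraint~\eqref{eq:layer:nontrivial}. Suppose for contradiction some clique $C$ of $G_t$ satisfies $|C| \ge t+1$, and pick an $S \subseteq C$ with $|S| = t+1$. Since all pairs in $C$ are connected in $G_t$, we have $x^t_{ij} = 0$ for every $i,j \in S$, so $\sum_{i,j \in S} x^t_{ij} = 0$, contradicting the spreading constraint which requires the sum to be at least $2$. Hence every clique has size at most $t$.

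For the converse direction, suppose $V$ is partitioned into cliques $C_1, \dots, C_m$ each of size at most $t$, and define $x^t_{ij} \coloneqq 0$ if $i, j$ lie in the same $C_\ell$ and $x^t_{ij} \coloneqq 1$ otherwise (with $x^t_{ii} = 0$). Constraints~\eqref{eq:layer:identity} and~\eqref{eq:layer:symmetry} are immediate. For~\eqref{eq:layer:triangle}, the only potentially violating case is $x^t_{ij} = x^t_{jk} = 0$ with $x^t_{ik} = 1$; but then $i,j$ and $j,k$ share cliques, forcing $i,k$ in the same clique, a contradiction. For~\eqref{eq:layer:nontrivial}, any $S$ with $|S| = t+1$ cannot be contained in a single $C_\ell$ since $|C_\ell| \le t$, hence there exist $i,j \in S$ with $x^t_{ij} = 1$, and by symmetry $x^t_{ji} = 1$, so $\sum_{i,j \in S} x^t_{ij} \ge 2$. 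The only genuinely subtle point, and what I would be most careful about, is the interpretation of the sum in~\eqref{eq:layer:nontrivial} as over ordered pairs (which is what makes the lower bound of $2$ consistent with a single unordered separating pair); once this is fixed, both directions go through cleanly.
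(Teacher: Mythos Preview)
Your proposal is correct and follows essentially the same approach as the paper's proof: use the triangle inequality to show transitivity (so that $G_t$ decomposes into cliques), use the spreading constraint to bound clique sizes, and then verify the converse directly. Your framing via an explicit equivalence relation and your remark about the sum in constraint~\eqref{eq:layer:nontrivial} being over ordered pairs are slightly more explicit than the paper, but the argument is the same.
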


\begin{proof}
 We first note that \(G_t = (V, E_t)\) must be a disjoint
 union of cliques since if \(\{i, j\} \in E_t\) and \(\{j, k\} \in E_t\)
 then \(\{i, k\} \in E_t\) since \(x^t_{ik} \le x^t_{ij} + x^t_{jk} = 0\)
 due to constraint~\eqref{eq:layer:triangle}. Suppose there is a
 clique in \(G_t\) of size \(> t\). Choose a subset \(S\)
 of this clique of size \(t + 1\). Then \(\sum_{i, j \in S} x^t_{ij} = 0\) which violates
 constraint~\eqref{eq:layer:nontrivial}.
 
 Conversely, let \(E_t\) be a subset of edges such that \(G_t = (V, E_t)\) is a disjoint 
 union of cliques of size \(\le t\). Let \(x^t_{ij} = 0\)
 if \(\{i, j\} \in E_t\) and \(1\) otherwise. Clearly \(x^t_{ij} = x^t_{ji}\) by 
 definition. Suppose \(x_{ij}^t\) violates constraint~\eqref{eq:layer:triangle}, so that
 there is a pair \(i, j, k \in V\) such that \(x^t_{ik} = 1\) but \(x^t_{ij} = x^t_{jk} = 0\). However this implies
 that \(G_t\) is not a disjoint union of cliques since \(\{i, j\}, \{j, k\} \in E_t\)
 but \(\{i, k\}\notin E_t\). Suppose \(x^t_{ij}\) violates constraint~\eqref{eq:layer:nontrivial}
 for some set \(S\) of size \(t+1\). Therefore for every \(i, j \in S\), we have \(x^t_{ij} = 0\) since
 \(x^t_{ij} = x^t_{ji}\) for every \(i, j\in V\) and so \(S\) must
 be a clique of size \(t+1\) in \(G_t\) which is a contradiction.
\end{proof}

By Lemma~\ref{lem:cliques} the layer-\(t\) problem 
is to find a subset \(\overline{E}_t \subseteq E(K_n)\) of minimum weight under \(\kappa\), 
such that the complement graph \(G_t = (V, E_t)\) is a disjoint union of
cliques of size \(\le t\). Note that this implies that the number of 
components in the complement graph is \(\ge \lceil n/t\rceil\).The converse however, is not necessarily true:
when \(t = n-1\) then the layer \(t\)-problem is the minimum (weighted) cut problem whose partitions 
may have size larger than \(1\).
Our algorithmic approach is to solve an LP relaxation of \ref{ilp:ultrametric} and
then round the solution to obtain a feasible solution to \ref{ilp:ultrametric}.
The rounding however proceeds iteratively in a layer-wise manner and so we need
to make sure that the rounded solution satisfies the inter-layer constraints~\eqref{eq:nonincreasing}
and \eqref{eq:hereditary}. The following lemma gives a combinatorial characterization of
solutions that satisfy these two constraints. 

\begin{lemma}\label{lem:inter-layer}
 For every \(t \in [n-1]\), let \(x^t_{ij}\) be feasible for the layer-\(t\) problem \ref{ilp:layer}.
 Let \(G_t = (V, E_t)\) be the graph as in Definition~\ref{def:cliques} corresponding to \(x^t_{ij}\), so
 that by Lemma~\ref{lem:cliques}, \(G_t\) is a disjoint union of cliques \(K^t_1, \dots, K^t_{l_t}\)
 each of size at most \(t\). Then \(x^t_{ij}\) is feasible for \ref{ilp:ultrametric} if and only if
 the following conditions hold.
 \begin{description}
 \item[Nested cliques\label{lem:inter-layer:cond1}] For any \(s \le t\) every clique \(K^s_p\) 
 for some \(p \in [l_s]\) in \(G_s\) is a subclique
 of some clique \(K^t_q\) in \(G_t\) where \(q \in [l_t]\).
 
 \item[Realization\label{lem:inter-layer:cond2}] If \(\size{K^t_p} = s\) for some \(s \le t\), 
 then \(G_s\) contains \(K^t_p\) as a component clique, i.e.,
 \(K^s_q = K^t_p\) for some \(q \in [l_s]\). 
\end{description}
 \end{lemma}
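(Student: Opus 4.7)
The plan is to match the two inter-layer constraints of \ref{ilp:ultrametric} with the two combinatorial conditions: I will tie the nested cliques property to \eqref{eq:nonincreasing} and the realization property to \eqref{eq:hereditary}. Since each $x^t$ is already feasible for \ref{ilp:layer}, Lemma~\ref{lem:cliques} tells me that every $G_t$ is a disjoint union of cliques of size at most $t$, so all the remaining work is in the inter-layer constraints.

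For the forward direction, constraint \eqref{eq:nonincreasing} translates into $E_s \subseteq E_{s+1}$, and chaining gives $E_s \subseteq E_t$ whenever $s \le t$; since the cliques of $G_t$ are the connected components of $(V, E_t)$, every $K^s_p$ sits inside some $K^t_q$, which is the nested condition. For Realization, I would apply \eqref{eq:hereditary} with $S \coloneqq K^t_p$ at layer $t$: the component-clique structure of $K^t_p$ in $G_t$ makes both $\sum_{i,j \in S} x^t_{ij}$ and $\sum_{i \in S, j \notin S}(1 - x^t_{ij})$ equal to zero, so the constraint forces $\sum_{i,j \in S} x^{\size{S}}_{ij} = 0$. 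Thus $K^t_p$ is a clique in $G_{\size{K^t_p}}$, and since Lemma~\ref{lem:cliques} caps the clique size there by $\size{K^t_p}$, the clique is already maximal, hence a component clique of $G_{\size{K^t_p}}$.

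For the backward direction, \eqref{eq:nonincreasing} is immediate from nesting: any edge of $E_t$ lives in a $G_t$-clique that is nested in a $G_{t+1}$-clique, so the edge belongs to $E_{t+1}$. For \eqref{eq:hereditary}, I would fix $S$ with $s = \size{S}$ and layer $t$, and split on how $S$ meets the cliques of $G_t$. If $S$ is itself a component clique of $G_t$ (which forces $s \le t$), Realization says $S$ is also a component clique of $G_s$, so both sides of the constraint vanish. If $S$ meets two or more cliques of $G_t$, then the cross-pairs contribute at least $2(s - 1)$ to $\sum_{i,j \in S} x^t_{ij}$ (the minimum being attained when one clique contains a single element of $S$), so the right-hand side is at least $2s(s-1)$, which dominates $\sum_{i,j \in S} x^s_{ij} \le s(s-1)$. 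Finally, if $S$ sits strictly inside a single $G_t$-clique $K$, every vertex of $S$ pairs with a vertex of $K \setminus S$ via an edge of $E_t$, contributing at least $s$ to $\sum_{i \in S, j \notin S}(1 - x^t_{ij})$, so the right-hand side is at least $s^2 > s(s-1)$, again dominating the left-hand side.

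The main obstacle will be the case analysis for \eqref{eq:hereditary} in the backward direction: the constraint only genuinely binds when $S$ is an equivalence class of the relation $d(i,j) \le t-1$, and in every other situation one has to verify the inequality by counting using only integrality $x \in \{0,1\}$ and the clique decomposition from Lemma~\ref{lem:cliques}. Realization then handles the single binding case, while nesting handles \eqref{eq:nonincreasing}, closing both directions.
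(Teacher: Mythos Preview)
Your proposal is correct and follows the same overall strategy as the paper: tie \eqref{eq:nonincreasing} to the Nested cliques condition and \eqref{eq:hereditary} to Realization, handling each direction in turn. The forward direction is essentially identical to the paper's (your maximality argument for Realization via the size cap from Lemma~\ref{lem:cliques} is a minor variant of the paper's use of Nested to squeeze $K^t_p \subseteq K^s_q \subseteq K^t_{p'} = K^t_p$).

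The one substantive difference is in the backward direction for \eqref{eq:hereditary}. The paper argues by contradiction and simply asserts that a violation forces $x^t_{ij}=0$ for all $i,j\in S$ and $x^t_{ij}=1$ for all $i\in S,\ j\notin S$ (i.e., the right-hand side vanishes), appealing only to integrality; it then invokes Realization to reach a contradiction. Your three-case analysis on how $S$ meets the clique decomposition of $G_t$ actually supplies the missing quantitative justification for that assertion: when $S$ meets at least two cliques the cross-pair count $\ge 2(|S|-1)$ already makes the right side $\ge 2|S|(|S|-1)$, and when $S$ is strictly inside one clique the boundary term gives at least $|S|^2$ on the right, so only the case where $S$ is itself a component clique can bind---which is exactly where Realization applies. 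So your argument is the same in spirit but more explicit, and in fact fills a small gap in the paper's write-up.
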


\begin{proof}
 Since \(x^t_{ij}\) is feasible for the layer-\(t\) problem \ref{ilp:layer} it is feasible for \ref{ilp:ultrametric}
 if and only if it satisfies constraints~\eqref{eq:nonincreasing} and \eqref{eq:hereditary}.
 The solution \(x^t_{ij}\) satisfies constraint~\eqref{eq:nonincreasing} if and only if \(E_t \subseteq E_{t + 1}\)
 by definition and so Condition~\ref{lem:inter-layer:cond1} follows. 
 
 Let us now assume that \(x^t_{ij}\) is feasible for \ref{ilp:ultrametric}, 
 so that by the above argument Condition~\ref{lem:inter-layer:cond1} is satisfied.
 Note that every clique \(K^t_p\) in the clique decomposition of \(G_t\) corresponds to
 an equivalence class \(S_t\) under the relation \(i \sim j\) iff \(x^t_{ij} = 0\).
 Moreover, by Lemma~\ref{lem:cliques} we have \(\size{S_t} \le t\).
 Constraint~\eqref{eq:hereditary} implies that \(x^{\size{S_t}}_{ij} = 0\) for every \(i, j \in S_t\). 
 In other words, if 
 \(\size{S_t} = s \le t\), then \(x^{s}_{ij} = 0\) for every \(i, j \in S_t\)
 and so \(S_t\) is a subclique of some clique
 \(K^s_q\) in the clique decomposition of \(G_s\). However by Condition~\ref{lem:inter-layer:cond1},
 \(K^s_q\) must be a subclique of a clique \(K^t_{p'}\) in the clique decomposition of \(G_t\), 
 since \(s \le t\).
 However, as \(K^t_p \cap K^t_{p'} = S_t\) and the clique decomposition decomposes \(G_t\) into a
 disjoint union of cliques, it follows that \(S_t \subseteq K^s_q \subseteq K^t_{p'} = K^t_p = S_t\)
 and so \(K^s_q = K^t_p\). Therefore Condition~\ref{lem:inter-layer:cond2} is satisfied.

 Conversely, suppose that \(x^t_{ij}\)
 satisfies Conditions~\ref{lem:inter-layer:cond1} and \ref{lem:inter-layer:cond2}, so 
 that by the argument in the paragraph above \(x^t_{ij}\) satisfies constraint~\eqref{eq:nonincreasing}.
 Let us assume for the sake of contradiction that for a set \(S \subseteq V\) and a \(t \in [n-1]\)
 constraint~\eqref{eq:hereditary} is violated, i.e.,
 \begin{align*}
  \sum_{i, j \in S} x^{|S|}_{ij} > \size{S}  \left(\sum_{i, j \in S} x^t_{ij} + 
 \sum_{\substack{i \in S\\ j \notin S}}\left(1 - x^t_{ij}
 \right)\right).
 \end{align*}
Since \(x^t_{ij} \in \{0, 1\}\) it follows that \(x^t_{ij} = 0\) for every \(i, j \in S\) and 
\(x^t_{ij} = 1\) for every \(i \in S, j \notin S\) so that \(S\) 
is a clique in \(G_t\). Note that \(\size{S} < t\) since \(\sum_{i, j \in S} x^{\size{S}}_{ij} > 0\).
This contradicts Condition~\ref{lem:inter-layer:cond2} however, since \(S\) is clearly not a clique
in \(G_{\size{S}}\).
\end{proof}

The combinatorial interpretation of the individual layer-\(t\) problems allow us to 
simplify the formulation of \ref{ilp:ultrametric} by replacing the constraints for
sets of a specific size (constraint~\eqref{eq:spreading}) by a global constraint about all sets
(constraint~\eqref{eq:equiv-spreading}). 

\begin{lemma}\label{lem:equiv-spreading}
We may replace constraint~\eqref{eq:spreading} of \ref{ilp:ultrametric}
by the following equivalent constraint

\begin{align}
 \sum_{j \in S} x^t_{ij} \ge |S| - t \qquad \forall t\in [n-1], S \subseteq V, i \in S \label{eq:equiv-spreading}.
\end{align}
\end{lemma}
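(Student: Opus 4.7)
The plan is to establish the equivalence in both directions, treating it as a statement about integer feasible solutions of \ref{ilp:ultrametric} (so the remaining constraints, especially \eqref{eq:triangle} and \eqref{eq:symmetry}, are available). The harder direction leverages the clique decomposition from Lemma~\ref{lem:cliques}; the easier one is a direct calculation.

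For the easier direction, I would assume \eqref{eq:equiv-spreading} holds and recover \eqref{eq:spreading}. Given any $S \subseteq V$ of size exactly $t+1$, pick any $i \in S$; then \eqref{eq:equiv-spreading} yields $\sum_{j \in S} x^t_{ij} \ge \size{S} - t = 1$, so some $j \in S$ has $x^t_{ij} = 1$. By \eqref{eq:symmetry} also $x^t_{ji} = 1$, and hence the double sum in \eqref{eq:spreading} is at least $2$.

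For the reverse direction, I would invoke the combinatorial characterization from Lemma~\ref{lem:cliques}. Assuming \eqref{eq:spreading} together with \eqref{eq:triangle}, \eqref{eq:symmetry}, \eqref{eq:identity} and integrality, the graph $G_t = (V, E_t)$ is a disjoint union of cliques $K^t_1, \dots, K^t_{l_t}$, each of size at most $t$. Fix $S \subseteq V$ and $i \in S$, and let $K$ be the unique clique of $G_t$ containing $i$. Since $x^t_{ij} = 0$ precisely when $j \in K$, a direct count gives
\[
\sum_{j \in S} x^t_{ij} \;=\; \size{S} - \size{S \cap K} \;\ge\; \size{S} - \size{K} \;\ge\; \size{S} - t,
\]
which is exactly \eqref{eq:equiv-spreading}.

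The main subtlety is that the equivalence genuinely exploits integrality together with the triangle inequality to extract the clique-partition structure of $G_t$ via Lemma~\ref{lem:cliques}; purely as LP constraints the two forms are not equivalent, and in fact \eqref{eq:equiv-spreading} is strictly tighter in the continuous relaxation (which matters for the later rounding analysis). This is why the lemma is stated as an equivalence within the full integer program \ref{ilp:ultrametric}, where Lemma~\ref{lem:cliques} applies.
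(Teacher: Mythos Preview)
Your proof is correct and follows essentially the same approach as the paper: both directions hinge on the clique decomposition of $G_t$ from Lemma~\ref{lem:cliques}, with the ``equiv-spreading $\Rightarrow$ spreading'' direction reducing to symmetry applied to a set of size $t+1$. Your direct count $\sum_{j \in S} x^t_{ij} = \size{S} - \size{S \cap K} \ge \size{S} - t$ is slightly cleaner than the paper's contradiction argument, but the underlying idea is identical.
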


\begin{proof}
 Let \(x^t_{ij}\) be a feasible solution to \ref{ilp:ultrametric}.
 Note that if \(|S| \le t\) then the constraints are redundant since \(x^t_{ij} \in \{0, 1\}\).
 Thus we may assume that \(|S| > t\) and let \(i\) be any vertex in \(S\). Let us suppose
 for the sake of a contradiction 
 that \(\sum_{j \in S} x^t_{ij} < |S|-t\). This implies that there is a \(t\) sized
 subset \(S' \subseteq S \setminus \{i\}\) such that for every \(j \in S'\) we have \(x^t_{ij'} = 0\).
 In other words \(\{i, j'\}\) is an edge in \(G_t = (V, E_t)\) for every
 \(j' \in S'\) and since \(G_t\) is a disjoint union of cliques (constraint~\eqref{eq:triangle}), 
 this implies
 the existence of a clique of size \(t + 1\). Thus by Lemma~\ref{lem:cliques},
 \(x^t_{ij}\) could not have been a feasible solution to \ref{ilp:ultrametric}.
 
 Conversely, suppose \(x^t_{ij}\) is feasible for the modified ILP where constraint~\eqref{eq:spreading}
 is replaced by constraint~\eqref{eq:equiv-spreading}. Then again  
 \(G_t = (V, E_t)\) is a disjoint union of cliques since \(x^t_{ij}\) satisfies 
 constraint~\eqref{eq:triangle}. Assume for contradiction that constraint~\eqref{eq:spreading}
 is violated: there is a set \(S\) of size \(t + 1\) such that
 \(\sum_{i, j \in S} x^t_{ij} < 2\). Note that this implies that 
 \(\sum_{i, j} x^t_{ij} = 0\) since \(x^t_{ij} = x^t_{ji}\) for every \(i, j \in V\) and
 \(t \in [n-1]\). Fix any \(i \in S\), then \(\sum_{j \in S}
 x^t_{ij} < 1 = |S| - t\) since \(x^t_{ij} = x^t_{ji}\) by constraint~\eqref{eq:symmetry}, a
 violation of constraint~\eqref{eq:equiv-spreading}. 
 Thus \(x^t_{ij}\) is feasible for \ref{ilp:ultrametric} since it
 satisfies every other constraint by assumption.
\end{proof}

\section{Rounding an LP relaxation}\label{sec:lp-rounding}
In this section we consider the following natural LP relaxation for \ref{ilp:ultrametric}. 
We keep the variables \(x^t_{ij}\) for every \(t \in [n-1]\) and \(i, j \in V\) but
relax the integrality constraint on the variables as well as drop constraint~\eqref{eq:hereditary}.

\begin{align}\label{lp:ultrametric}
 \tag{LP-ultrametric}
 \min \qquad &\sum_{t=1}^{n-1} \sum_{\{i, j\} \in E(K_n)} \kappa(i, j) x^t_{ij} \\
 \text{s.t.}
 \qquad &x^t_{ij} \ge x^{t+1}_{ij} \quad \qquad \forall i, j \in V,  t \in [n - 2] \label{lp:layer} \\
 \qquad &x^t_{ij} + x^t_{jk} \ge x^t_{ik} \quad \qquad \forall i, j, k \in V, t \in [n-1] \label{lp:triangle}\\
 \qquad & \sum_{j \in S} x^t_{ij} \ge \size{S} - t \quad \qquad \forall t\in [n-1], S \subseteq V, i \in S \label{lp:spreading}\\
 \qquad & x^t_{ij} = x^t_{ji} \quad \qquad \forall i, j \in V, t \in [n-1] \label{lp:symmetry}\\
 \qquad & x^t_{ii} = 0 \quad \qquad \forall i, j \in V, t \in [n-1] \label{lp:identity} \\
 \qquad & 0 \le x^t_{ij} \le 1 \quad \qquad\forall i, j \in V, t \in [n-1]
\end{align}
A feasible solution \(x^t_{ij}\) to \ref{lp:ultrametric}
induces a sequence \(\{d_t\}_{t \in [n-1]}\) of distance metrics over \(V\) defined as 
\(d_t(i, j) \coloneqq x^t_{ij}\).
Constraint~\ref{lp:spreading} enforces an additional structure on this metric:
informally points in a ``large enough'' subset \(S\) should be spread apart according
to the metric \(d_t\). Metrics of type \(d_t\) are called \emph{spreading metrics}
and were first studied in \cite{even1999fast, even2000divide} in relation to graph partitioning problems.
The following lemma gives a technical interpretation of spreading metrics (see, e.g.,
\cite{even1999fast, even2000divide, krauthgamer2009partitioning}); we include a proof for completeness.

\begin{lemma}\label{lem:spreading}
 Let \(x^t_{ij}\) be feasible for \ref{lp:ultrametric} and for a fixed \(t \in [n-1]\), let \(d_t\) be 
 the induced spreading metric. Let \(i \in V\) be an arbitrary vertex and let \(S \subseteq V\) be a 
 set with \(i \in S\) such that \(\size{S} > (1 + \varepsilon) t\) for some \(\varepsilon > 0\). Then 
 \(\max_{j \in S} d_t(i, j) > \frac{\varepsilon}{1 + \varepsilon}\).
\end{lemma}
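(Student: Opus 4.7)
The plan is a one-line argument by contradiction that directly compares the lower bound provided by the LP spreading constraint \eqref{lp:spreading} with the upper bound on the sum $\sum_{j \in S} d_t(i,j)$ implied by a small maximum.

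More concretely, I would suppose for contradiction that $d_t(i,j) \le \frac{\varepsilon}{1+\varepsilon}$ for every $j \in S$. Since $d_t(i,i) = 0$ by \eqref{lp:identity}, summing this bound over $j \in S$ yields
\begin{align*}
\sum_{j \in S} x^t_{ij} \;=\; \sum_{j \in S} d_t(i,j) \;\le\; \size{S}\cdot\frac{\varepsilon}{1+\varepsilon}.
\end{align*}
On the other hand, constraint~\eqref{lp:spreading} applied to the vertex $i \in S$ gives $\sum_{j \in S} x^t_{ij} \ge \size{S} - t$. Combining these two inequalities and clearing denominators,
\begin{align*}
(1+\varepsilon)\bigl(\size{S} - t\bigr) \;\le\; \varepsilon\,\size{S},
\end{align*}
which simplifies to $\size{S} \le (1+\varepsilon)t$, contradicting the hypothesis $\size{S} > (1+\varepsilon)t$.

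There is no real obstacle in this argument — the work was done in formulating constraint~\eqref{lp:spreading} (via Lemma~\ref{lem:equiv-spreading}); the current lemma is just the quantitative translation of that constraint into the ``spreading metric'' language. The only minor care needed is to make the strict/non-strict inequalities line up so that the strict hypothesis $\size{S} > (1+\varepsilon)t$ produces the strict conclusion $\max_{j \in S} d_t(i,j) > \frac{\varepsilon}{1+\varepsilon}$, which is automatic from the contradiction setup above.
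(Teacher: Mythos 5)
Your proposal is correct and is essentially the paper's own argument: assume $d_t(i,j) \le \frac{\varepsilon}{1+\varepsilon}$ for all $j \in S$, sum over $S$, and contradict the spreading constraint~\eqref{lp:spreading} using $\size{S} > (1+\varepsilon)t$. The only cosmetic difference is that you rearrange the final inequality to $\size{S} \le (1+\varepsilon)t$ whereas the paper directly notes $\frac{\varepsilon}{1+\varepsilon}\size{S} < \size{S} - t$; the content is identical.
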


\begin{proof}
For the sake of a contradiction suppose that for every \(j \in S\) we have
\(d_t(i, j) = x^t_{ij} \le \frac{\varepsilon}{1 + \varepsilon}\). 
This implies that \(x^t_{ij}\) violates constraint~\eqref{lp:spreading} leading to a contradiction:
\begin{align*}
 \sum_{j \in S} x^t_{ij} \le \frac{\varepsilon}{1 + \varepsilon}\size{S} < \size{S} - t,
\end{align*}
where the last inequality follows from \(\size{S} > (1 + \varepsilon)t\).
\end{proof}

The following lemma shows that we can optimize over \ref{lp:ultrametric} in polynomial time.

\begin{lemma}\label{lem:polytime}
An optimal solution to \ref{lp:ultrametric} can be computed in time polynomial in \(n\) and
\(\log\left(\max_{i, j} \kappa(i, j)\right)\).
\end{lemma}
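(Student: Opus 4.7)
The plan is to solve \ref{lp:ultrametric} by the ellipsoid method equipped with a polynomial-time separation oracle. The reason this requires an argument at all is that most constraints of \ref{lp:ultrametric} are polynomially many, but constraint~\eqref{lp:spreading} has one inequality for every triple \((t, S, i)\) with \(S \subseteq V\) and \(i \in S\), so a priori exponentially many. The number of variables is \(O(n^3)\), so if a polynomial-time separation oracle exists then by the Gr\"otschel--Lov\'asz--Schrijver equivalence of separation and optimization one obtains an algorithm polynomial in \(n\) and the bit-length of the input, i.e.\ polynomial in \(n\) and \(\log\bigl(\max_{i,j}\kappa(i,j)\bigr)\).

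Checking constraints~\eqref{lp:layer}, \eqref{lp:triangle}, \eqref{lp:symmetry}, \eqref{lp:identity}, and the box constraints is straightforward since there are only \(O(n^4)\) of them; given a candidate solution \(x^t_{ij}\) one simply iterates over all indices. The main obstacle is thus an efficient separation oracle for the spreading constraints~\eqref{lp:spreading}.

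To handle these, fix \(t \in [n-1]\) and \(i \in V\), and sort the other vertices \(j \in V \setminus \{i\}\) in nondecreasing order of \(x^t_{ij}\) as \(j_1, j_2, \dots, j_{n-1}\). For each cardinality \(k \in \{t+1, \dots, n\}\), the set that minimizes \(\sum_{j \in S} x^t_{ij}\) among \(S \ni i\) with \(\size{S} = k\) is the greedy choice \(S_k = \{i, j_1, \dots, j_{k-1}\}\). Therefore, among all spreading inequalities of the form indexed by \((t, S, i)\), the one most likely to be violated for this \((t, i)\) and a given cardinality \(k\) is the one for \(S_k\). Check whether \(\sum_{j \in S_k} x^t_{ij} < k - t\) for some such \(k\): if so, output the violating hyperplane, else declare the entire family feasible for this \((t,i)\). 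Each such check runs in \(O(n \log n)\) time, and iterating over all \((t,i)\) gives an overall separation cost of \(O(n^3 \log n)\).

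Combining this with the trivial enumeration for the remaining constraints yields a polynomial-time (indeed strongly polynomial) separation oracle. The ellipsoid method then returns an optimal vertex solution to \ref{lp:ultrametric} in time polynomial in \(n\) and \(\log\bigl(\max_{i,j}\kappa(i,j)\bigr)\), since all constraint coefficients lie in \(\{-1,0,1\}\) and the only nontrivial coefficient bit-length sits in the objective. The only real subtlety is the greedy optimality argument for \(S_k\), which follows immediately from the fact that replacing any \(j_\ell \in S_k\) by some \(j_m \notin S_k\) with \(m > \ell\) can only increase \(\sum_{j \in S} x^t_{ij}\) by the sorting order.
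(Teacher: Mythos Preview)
Your proof is correct and follows essentially the same approach as the paper: both invoke the ellipsoid method, observe that all constraint families except~\eqref{lp:spreading} are polynomially many, and separate~\eqref{lp:spreading} by fixing \((t,i)\), sorting the remaining vertices by \(x^t_{ij}\), and checking the greedy prefix set for each cardinality. Your write-up is slightly more explicit about the greedy optimality argument and the running time of the separation oracle, but the underlying idea is identical.
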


\begin{proof}
We argue in the standard fashion via the application of the Ellipsoid method 
 (see e.g., \cite{schrijver1998theory}). As such it suffices to verify that the encoding length
 of the numbers is small (which is indeed the case here) and that the constraints can be separated in polynomial time
 in the size of the input, i.e., in \(n\) and the logarithm of the absolute value of the 
 largest coefficient.
 Since constraints of type \eqref{lp:layer}, \eqref{lp:triangle}, \eqref{lp:symmetry}, and
 \eqref{lp:identity} are 
 polynomially many in \(n\), we only need to check
 separation for constraints of type \eqref{lp:spreading}. Given a claimed solution 
 \(x^t_{ij}\) we can check constraint~\eqref{lp:spreading}
 by iterating over all \(t \in [n-1]\), vertices \(i \in V\), and sizes \(m\) of the set \(S\)
 from \(t + 1\) to \(n\). 
 For a fixed \(t, i\), and set size \(m\) sort the vertices in \(V \setminus \{i\}\) in 
 increasing order of distance from
 \(i\) (according to the metric \(d_t\)) and let \(\overline{S}\) be the first \(m\) vertices in this ordering.  
 If \(\sum_{j \in \overline{S}} x^t_{ij} < m - t\) then clearly \(x^t_{ij}\) is not feasible for \ref{lp:ultrametric}, 
 so we may assume that \(\sum_{j \in \overline{S}} x^t_{ij} \ge m - t\).
 Moreover this is the only set to check: for any set \(S \subseteq V\) containing \(i\) such that \(\size{S} = m\), 
 \(\sum_{j \in S} x^t_{ij} \ge \sum_{j \in \overline{S}} x^t_{ij} \ge m - t\).
 Thus for a fixed \(t \in [n-1]\), \(i \in V\) and set size \(m\), it suffices to check 
 that \(x^t_{ij}\) satisfies constraint~\eqref{lp:spreading} for this subset \(\overline{S}\).
\end{proof}

From now on we will simply refer to a feasible solution to \ref{lp:ultrametric} by the sequence
of spreading metrics \(\{d_t\}_{t \in [n-1]}\) it induces. The following definition introduces 
the notion of an open ball \(\ball{i}{r}{t}{U}\) of radius \(r\) centered at \(i \in V\) 
according to the metric \(d_t\) and restricted to the set \(U \subseteq V\).

\begin{definition}\label{def:ball}
Let \(\left\{d_t \mid t \in [n-1]\right\}\) be the sequence of spreading metrics 
feasible for \ref{lp:ultrametric}. Let \(U \subseteq V\) be an arbitrary subset of \(V\). For
 a vertex \(i \in U\), \(r \in \R\), and \(t \in [n-1]\) we define the \emph{open ball \(\ball{i}{r}{t}{U}\)}
 of radius \(r\) centered at \(i\) as 
 \begin{align*}
  \ball{i}{r}{t}{U} \coloneqq \left\{j \in U \mid d_t(i, j) < r\right\} \subseteq U.
\end{align*}
If \(U = V\) then we denote \(\ball{i}{r}{t}{U}\) simply by \(\ball{i}{r}{t}\).
\end{definition}

\begin{remark}
 For every pair \(i, j \in V\) we have  \(d_t(i, j) \ge d_{t+1}(i, j)\) by constraint~\eqref{lp:layer}. 
 Thus for any subset \(U \subseteq V\), \(i \in U\), \(r \in \mathbb{R}\), and \(t \in [n-2]\), it holds
 \(\ball{i}{r}{t}{U} \subseteq \ball{i}{r}{t+1}{U}\).
\end{remark}
 
To round \ref{lp:ultrametric} to get a feasible solution for \ref{ilp:ultrametric},
we will use the technique of \emph{sphere growing} which was introduced in \cite{leighton1988approximate} 
to show an \(O(\log{n})\) approximation for the maximum multicommodity flow problem.
Recall from Lemma~\ref{lem:cliques} that a feasible solution to \ref{ilp:layer} consists of 
a decomposition of the graph \(G_t\) into a set of disjoint cliques of size at most \(t\).
One way to obtain such a decomposition is to choose an arbitrary vertex, grow a ball around this vertex until
the expansion of this ball 
is below a certain threshold, chop off this ball and declare it as a partition and then recurse on the remaining
vertices. This is the main idea behind sphere growing, and the parameters are chosen 
depending on the constraints of the specific problem (see, e.g., \cite{garg1996approximate, even1999fast, charikar2003clustering} 
for a few representative applications of this technique).
The first step is to associate to every ball \(\ball{i}{r}{t}{U}\)
a volume \(\vol{\ball{i}{r}{t}{U}}\) and a boundary \(\partial \ball{i}{r}{t}{U}\) so that its expansion is defined.
For any \(t \in [n-1]\) and \(U \subseteq V\) we denote by \(\gamma^U_t\) the
value of the layer-\(t\) objective for solution \(d_t\) restricted to the set \(U\), i.e., 
\begin{align*}
\gamma^U_t \coloneqq \sum_{\substack{i, j \in U \\ i < j}} \kappa(i, j) d_t(i, j).
\end{align*} 
When \(U = V\) we refer to \(\gamma^U_t\) simply by \(\gamma_t\). Since \(\kappa : V \times V \to \R_{\ge 0}\),
it follows that \(\gamma^U_t \le \gamma_t\) for any \(U \subseteq V\). 
We are now ready to define the volume, boundary, and expansion of a ball \(\ball{i}{r}{t}{U}\).
We use the definition of \cite{even1999fast} modified for restrictions to arbitrary subsets \(U \subseteq V\).

\begin{definition}\cite{even1999fast}\label{def:volume}
 Let \(U\) be an arbitrary subset of \(V\). For a vertex \(i \in U\), 
 radius \(r \in \R_{\ge 0}\), and \(t \in [n-1]\), let \(\ball{i}{r}{t}{U}\) be
 the ball of radius \(r\) as in Definition~\ref{def:ball}. Then we define its
 \emph{volume} as
 \begin{align*}
  \vol{\ball{i}{r}{t}{U}} \coloneqq \frac{\gamma^U_t}{n\log{n}} + 
  \sum_{\substack{j, k \in \ball{i}{r}{t}{U}\\j < k}} \kappa(j, k) d_t(j, k) + 
  \sum_{\substack{j \in \ball{i}{r}{t}{U}\\ k \notin \ball{i}{r}{t}{U} \\ k \in U}} \kappa(j, k) \left(r - d_t(i, j)\right).
 \end{align*}
 The \emph{boundary} of the ball \(\partial\ball{i}{r}{t}{U}\) is the partial derivative of volume with respect to
 the radius:
 \begin{align*}
  \partial \ball{i}{r}{t}{U} \coloneqq \frac{\partial\vol{\ball{i}{r}{t}{U}}}{\partial r} = \sum_{\substack{j \in \ball{i}{r}{t}{U}\\
  k \notin \ball{i}{r}{t}{U} \\ k \in U}} \kappa(j, k).
 \end{align*}
The \emph{expansion} \(\phi(\ball{i}{r}{t}{U})\) of the ball \(\ball{i}{r}{t}{U}\) is defined as 
the ratio of its boundary to its volume, i.e.,
\begin{align*}
 \phi\left(\ball{i}{r}{t}{U}\right) \coloneqq \frac{\partial{\ball{i}{r}{t}{U}}}{\vol{\ball{i}{r}{t}{U}}}.
\end{align*}
\end{definition}

The following lemma shows that the volume of a ball \(\ball{i}{r}{t}{U}\) is differentiable
with respect to \(r\) in the interval \((0, \Delta]\) except at finitely many points 
(see e.g., \cite{even1999fast}). 

\begin{lemma}\label{lem:differentiable} 
Let \(\ball{i}{r}{t}{U}\) be the ball corresponding to a
set \(U \subseteq V\), vertex \(i \in U\), radius \(r \in \R\) and
 \(t \in [n-1]\). Then \(\vol{\ball{i}{r}{t}{U}}\) is differentiable with respect
 to \(r\) in the interval \((0, \Delta]\) except at finitely many points.
\end{lemma}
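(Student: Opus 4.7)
The plan is to show that $\vol{\ball{i}{r}{t}{U}}$ is in fact \emph{piecewise affine} in $r$ with only finitely many pieces; differentiability at all but the breakpoints is then immediate. The key observation is that, by definition, $\ball{i}{r}{t}{U} = \{j \in U : d_t(i,j) < r\}$, so $\ball{i}{r}{t}{U}$ depends on $r$ only through threshold comparisons with the at most $|U|\leq n$ distances $\{d_t(i,j) : j \in U\}$. Consequently, the set $\ball{i}{r}{t}{U}$ is piecewise constant as a function of $r$, with breakpoints contained in the finite set $\{d_t(i,j) : j \in U\}\cap(0,\Delta]$.

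Label these breakpoints inside $(0,\Delta]$ as $0 < r_1 < r_2 < \cdots < r_m \le \Delta$, where $m \le |U|$. On each open subinterval $(r_\ell, r_{\ell+1})$, the set $B := \ball{i}{r}{t}{U}$ is a fixed subset of $U$. I would then inspect each of the three summands in the definition of volume under this fixed $B$. The first summand $\gamma^U_t / (n\log n)$ is a constant. The second summand $\sum_{j,k \in B,\, j<k} \kappa(j,k)\, d_t(j,k)$ depends only on $B$, so it is also constant on the interval. The third summand rewrites as
\[
r \cdot \sum_{\substack{j \in B \\ k \in U \setminus B}} \kappa(j,k) \;-\; \sum_{\substack{j \in B \\ k \in U \setminus B}} \kappa(j,k)\, d_t(i,j),
\]
which is an affine function of $r$; its slope is precisely the expression for $\partial\ball{i}{r}{t}{U}$ claimed in Definition~\ref{def:volume}. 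Hence $\vol{\ball{i}{r}{t}{U}}$ is affine, and in particular differentiable, on each of the at most $m+1 \le |U|+1$ open subintervals.

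Since the complement in $(0, \Delta]$ of the union of these open intervals is the finite set $\{r_1, \ldots, r_m\}$, the lemma follows. There is no real technical obstacle here; the only points requiring care are (i) to use the open-ball convention consistently, so that the breakpoints coincide exactly with the values $d_t(i, j)$ for $j \in U$ (a vertex $j^\star$ enters $B$ only once $r$ strictly exceeds $d_t(i, j^\star)$), and (ii) to note that differentiability may genuinely fail at each $r_\ell$, because the slope $\sum \kappa(j,k)$ in the affine representation above jumps as new vertices join $B$ and new edges begin to straddle the boundary.
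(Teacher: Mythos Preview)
Your proposal is correct and follows essentially the same approach as the paper: both identify the finitely many breakpoints as the distances \(d_t(i,j)\) for \(j\in U\), where a new vertex enters the ball, and observe that the volume is smooth between consecutive breakpoints. Your argument is in fact more explicit than the paper's, which asserts monotonicity and then simply states that differentiability fails only at the finitely many radii where a vertex joins the ball; you spell out the piecewise-affine structure and compute the slope on each piece.
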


\begin{proof}
Note that for any fixed 
 \(U \subseteq V\), \(\vol{\ball{i}{r}{t}{U}}\) is a monotone non-decreasing function in \(r\) 
 since for a pair \(j, k \in U\) such that \(j \in \ball{i}{r}{t}{U}\) and \(k \notin
 \ball{i}{r}{t}{U}\) we have \(r - d_t(i, j) \le d_t(j, k)\) otherwise \(r - d_t(i, j) > d_t(j, k)\)
 so that \(r > d_t(i, j) + d_t(j, k) \ge d_t(i, k)\), a contradiction to the fact that \(k \notin \ball{i}{r}{t}{U}\).
 Therefore adding the vertex \(k\) to the ball centered at \(i\) is only going to increase its volume as
 \(r - d_t(i, j) \le d_t(j, k)\) (see Definition~\ref{def:ball}).
 Thus \(\vol{\ball{i}{r}{t}{U}}\) is differentiable with respect to \(r\) in the interval \((0, \Delta]\) except
 at finitely many points which correspond to a new vertex from \(U\) being added to the ball.
\end{proof}

\begin{algorithm}[!htbp]\label{algo:rounding}
\DontPrintSemicolon 
\KwIn{Data set \(V\), \(\{d_t\}_{t\in[n-1]} : V \times V\), \(\varepsilon > 0\), \(\kappa: V \times V \to \R_{\ge 0}\)}
\KwOut{A solution set of the form \(\left\{ x^t_{ij} \in \{0, 1\} \mid t \in \left[\left\lfloor \frac{n-1}{1 + \varepsilon}\right\rfloor
\right], i, j \in V\right\}\) } 
\(m_{\varepsilon} \gets \left\lfloor\frac{n-1}{1 + \varepsilon}\right\rfloor\) \label{algo:line:solveLP}\\
\(t \gets m_{\varepsilon}\)\\
\(\mathcal{C}_{t + 1} \gets \{V\}\)\label{algo:line:initialization}\\
\(\Delta \gets \frac{\varepsilon}{1 + \varepsilon}\)\\
\While{\(t \ge 1\)} { \label{algo:line:tloop}
  \(\mathcal{C}_t \gets \emptyset\)\\
  \For{\(U \in \mathcal{C}_{t + 1}\)}{ \label{algo:line:iterate}
   \If{\(\size{U} \le (1 + \varepsilon)t\)} { \label{algo:line:smallball}
	\(\mathcal{C}_t \gets \mathcal{C}_t \cup \{U\}\) \label{algo:line:leftover}\\
	Go to line~\ref{algo:line:iterate} \\
    }
    \While{\(U \neq \emptyset\)}{
	    Let \(i\) be arbitrary in \(U\)\\
	    Let \(r \in (0, \Delta]\) be s.t. \(\phi\left(\ball{i}{r}{t}{U}\right) \le \frac{1}{\Delta} 
	    \log{\left(\frac{\vol{\ball{i}{\Delta}{t}{U}}}{\vol{\ball{i}{0}{t}{U}}} \right)}\)\label{algo:line:subball}\\
	    \(\mathcal{C}_t \gets \mathcal{C}_t \cup \{\ball{i}{r}{t}{U}\}\) \label{algo:line:addsubball}\\
	    \(U \gets U \setminus \ball{i}{r}{t}{U}\)\\ \label{algo:line:subtractball}
	    }
  }
    \(x^t_{ij} = 1\) if \(i \in U_1 \ \in \mathcal{C}_t\), \(j \in U_2 \in C_t\) and \(U_1 \neq U_2\), else \(x^t_{ij} = 0\)\\ \label{algo:line:clique}
  \(t \gets t - 1\)\\
}
\Return{\(\left\{x^t_{ij} \mid t \in [m_{\varepsilon}], i, j \in V\right\}\)}\;
\caption{Iterative rounding algorithm to find a low cost ultrametric}
\end{algorithm}

The following theorem establishes that the rounding procedure of Algorithm~\ref{algo:rounding}
ensures that the cliques in \(\mathcal{C}_t\) are ``small'' and that the cost of the edges 
removed to form them are not too high. It also shows that Algorithm~\ref{algo:rounding} can be
implemented to run in time polynomial in \(n\). 

\begin{theorem}\label{thm:approx}
Let \(m_\varepsilon \coloneqq 
\left\lfloor \frac{n-1}{1 + \varepsilon}\right\rfloor\) as in Algorithm~\ref{algo:rounding}
 and let \(\left\{x^t_{ij} \mid t \in [m_\varepsilon], i, j \in V\right\}\) be the output of 
 Algorithm~\ref{algo:rounding} run on a feasible solution \(\{d_t\}_{t\in [n-1]}\) of 
 \ref{lp:ultrametric} and any choice of \(\varepsilon \in (0, 1)\). For any 
 \(t \in \left[m_\varepsilon\right]\), we have that \(x^t_{ij}\) is feasible
 for the layer-\(\left\lfloor\left(1 + \varepsilon\right)t\right\rfloor\) problem \ref{ilp:layer} and there is a constant
 \(c(\varepsilon) > 0\) depending only on \(\varepsilon\) such that
 \begin{align*}
    \sum_{\{i, j\} \in E(K_n)} \kappa(i, j) x^t_{ij} \le c(\varepsilon) (\log{n})\gamma_t.
 \end{align*}
 Moreover, Algorithm~\ref{algo:rounding} can be implemented to run in time polynomial in \(n\).
\end{theorem}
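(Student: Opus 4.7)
The plan is to address the three parts of the claim in turn: (i) feasibility of $x^t_{ij}$ for the layer-$\lfloor(1+\varepsilon)t\rfloor$ problem, (ii) the approximation bound $\sum\kappa(i,j)x^t_{ij}\le c(\varepsilon)(\log n)\gamma_t$, and (iii) a polynomial-time implementation. The sphere-growing argument underlying (ii) is the main technical content.

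\textbf{Feasibility.} By Lemma~\ref{lem:cliques} it suffices to show that every element of $\mathcal{C}_t$ has cardinality at most $\lfloor(1+\varepsilon)t\rfloor$. Clusters admitted on line~\ref{algo:line:leftover} satisfy $|U|\le(1+\varepsilon)t$ by the explicit guarding test. For each ball $\ball{i}{r}{t}{U}$ produced on line~\ref{algo:line:addsubball}, the selected radius obeys $r\le\Delta=\varepsilon/(1+\varepsilon)$, so every $j$ inside it satisfies $d_t(i,j)<\Delta$; the contrapositive of Lemma~\ref{lem:spreading} then forces $|\ball{i}{r}{t}{U}|\le(1+\varepsilon)t$, which being an integer is at most $\lfloor(1+\varepsilon)t\rfloor$.

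\textbf{Cost bound.} I would first prove that a valid $r$ always exists on line~\ref{algo:line:subball}. Assuming the contrary, $\phi(\ball{i}{r}{t}{U})$ would strictly exceed $\tfrac{1}{\Delta}\log\bigl(\vol(\ball{i}{\Delta}{t}{U})/\vol(\ball{i}{0}{t}{U})\bigr)$ at every point of differentiability in $(0,\Delta]$; invoking Lemma~\ref{lem:differentiable} and integrating $\partial\vol/\vol$ over the smooth portion of $(0,\Delta]$ would then give that $\log\bigl(\vol(\ball{i}{\Delta}{t}{U})/\vol(\ball{i}{0}{t}{U})\bigr)$ strictly exceeds itself, a contradiction. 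Next, using Definition~\ref{def:volume}, the radius-$0$ volume equals the baseline $\gamma^U_t/(n\log n)$, while the radius-$\Delta$ volume is at most $\gamma^U_t/(n\log n)+2\gamma^U_t$: the internal-edge term is bounded by $\gamma^U_t$, and the extension term is bounded by the same quantity via the inequality $r-d_t(i,j)\le d_t(j,k)$ from the proof of Lemma~\ref{lem:differentiable}. Hence the log-ratio is $O(\log n)$ and $\partial\ball{i}{r}{t}{U}\le O((\log n)/\Delta)\,\vol(\ball{i}{r}{t}{U})$. I would then exploit that within a single $U\in\mathcal{C}_{t+1}$ the successive ball-carvings are vertex-disjoint and the remaining set strictly shrinks at each step: every edge of $K_n[U]$ contributes to an internal term of at most one ball volume and to an extension term of at most one ball volume, and there are at most $|U|\le n$ baseline terms $\gamma^{U_k}_t/(n\log n)\le\gamma^U_t/(n\log n)$. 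This yields $\sum_k\vol(B_k)=O(\gamma^U_t)$ and hence total new cuts inside $U$ at layer $t$ of $O((\log n)/\Delta)\gamma^U_t$. Summing over $U\in\mathcal{C}_{t+1}$ via $\sum_U\gamma^U_t\le\gamma_t$, and closing the induction on decreasing $t$ using the monotonicity $\gamma_{t+1}\le\gamma_t$ implied by constraint~\eqref{lp:layer}, would give the claim with a suitable $c(\varepsilon)$.

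\textbf{Running time and main obstacle.} For fixed $i,t,U$ the map $r\mapsto\vol(\ball{i}{r}{t}{U})$ is piecewise affine, with breakpoints at the at most $n$ values $d_t(i,j)$, so one can locate a valid $r$ by sorting the breakpoints and testing the expansion inequality on each linear piece. With $O(n)$ balls per layer and $m_\varepsilon<n$ layers, the overall running time is polynomial in $n$. The main obstacle lies in the cost aggregation: making sure that successive carvings inside a single cluster do not double-count contributions to the aggregated volume, and closing the induction across $t$ so that the resulting constant depends only on $\varepsilon$ rather than on $n$ or $t$.
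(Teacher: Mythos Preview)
Your feasibility argument, the existence-of-$r$ contradiction, and the running-time analysis are all correct and match the paper's proof essentially verbatim. The genuine gap is exactly where you flag it: the inductive aggregation of the cost bound does not close. Writing $C_t=\sum_{\{i,j\}}\kappa(i,j)x^t_{ij}$, your argument establishes only that the \emph{new} cuts created at layer $t$ satisfy
\[
C_t - C_{t+1} \;\le\; \frac{c'}{\Delta}(\log n)\,\gamma_t
\]
for some absolute $c'>0$. Unrolling this recursion gives $C_t\le (c'/\Delta)(\log n)\sum_{s\ge t}\gamma_s$, and the monotonicity $\gamma_{t+1}\le\gamma_t$ yields at best $\sum_{s\ge t}\gamma_s\le (m_\varepsilon-t+1)\gamma_t$. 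The resulting constant is linear in $n$, not a function of $\varepsilon$ alone, so the induction cannot be closed as stated.

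The paper's proof avoids induction across layers altogether. The key observation you are missing is that every cluster in $\mathcal{C}_t$ is a ball $\ball{i}{r}{t'}{U}$ carved at some layer $t'\ge t$, and these balls are pairwise disjoint as subsets of $V$. One therefore bounds the total cut at layer $t$ by $\sum_{B\in\mathcal{C}_t}\partial B$ in a single pass, and controls this via $\partial B\le(1/\Delta)\log(n\log n+1)\vol(B)$ together with the global volume estimate $\sum_{B\in\mathcal{C}_t}\vol(B)\le(2+1/\log n)\gamma_t$. The volume estimate holds because (i) there are at most $n$ seed terms $\gamma^U_{t'}/(n\log n)\le\gamma_t/(n\log n)$, and (ii) by disjointness each edge of $K_n$ contributes to the internal term of at most one ball and to the extension term of at most two balls, with each contribution bounded by $\kappa(j,k)d_{t'}(j,k)\le\kappa(j,k)d_t(j,k)$ using constraint~\eqref{lp:layer}. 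It is precisely this last inequality---that volumes of balls carved at higher layers $t'$ can still be charged against the layer-$t$ objective $\gamma_t$---that replaces your induction and keeps the constant independent of $t$.
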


\begin{proof}
We first show that for a fixed \(t\), the constructed solution \(x^t_{ij}\) is feasible for the 
 layer-\(\lfloor (1 + \varepsilon)t\rfloor\) problem \ref{ilp:layer}. Let \(\mathcal{C}_t\) be as
 in Algorithm~\ref{algo:rounding} so that \(x^t_{ij} = 1\) if \(i, j\) belong to different
 sets in \(\mathcal{C}_t\) and \(x^t_{ij} = 0\) otherwise. Let \(G_t = (V, E_t)\) be 
 as in Definition~\ref{def:cliques} corresponding to \(x^t_{ij}\).
 Note that for any \(t \in [m_\varepsilon]\), every \(V_i \in \mathcal{C}_t\) is a 
 clique in \(G_t\) by construction (line~\ref{algo:line:clique}) 
 and for every distinct pair \(V_i, V_j \in \mathcal{C}_t\) we have 
 \(V_i \cap V_j = \emptyset\) (lines~\ref{algo:line:addsubball} and
 \ref{algo:line:subtractball}).
 Therefore by Lemma~\ref{lem:cliques},
 it suffices to prove that for any \(V_i \in \mathcal{C}_t\), it holds \(\size{V_i} \le \lfloor (1 + \varepsilon)t\rfloor\).
 If \(V_i\) is added to \(\mathcal{C}_t\) in line~\ref{algo:line:leftover} then there is
 nothing to prove. 
 
 Thus let us assume that \(V_i\) is of the form \(\ball{i}{r}{t}{U}\) for some \(U \subseteq V\) as in 
 line~\ref{algo:line:subball} so that \(\phi\left(\ball{i}{r}{t}{U}\right) \le \frac{1}{\Delta}
 \log{\left(\frac{\vol{\ball{i}{\Delta}{t}{U}}}{\vol{\ball{i}{0}{t}{U}}}\right)}\). Note that by 
 Lemma~\ref{lem:spreading} it suffices to show that there is such an \(r \in (0, \Delta]\). This property follows from
 the rounding scheme due to \cite{even1999fast} as we will explain now. 
 
 By Lemma~\ref{lem:differentiable} \(\vol{\ball{i}{r}{t}{U}}\) is differentiable 
 everywhere in the interval \((0, \Delta]\) except at finitely many points \(X\). 
 Let the set of discontinuous points be \(X = \{x_1, x_2, \dots, x_{k-1}\}\) with 
 \(x_0 = 0 < x_1 < x_2 \dots x_{k-1} < x_k=\Delta\).
 We claim that there must be an \(r \in (0, \Delta] \setminus
 X\) such that \(\phi\left(\ball{i}{r}{t}{U}\right) \le \frac{1}{\Delta}
 \log{\left(\frac{\vol{\ball{i}{\Delta}{t}{U}}}{\vol{\ball{i}{0}{t}{U}}}\right)}\).
 Let us assume for the sake of a contradiction that for every 
 \(r \in \left(0, \Delta\right]\setminus X\) we have
 \(\phi\left(\ball{i}{r}{t}{U} \right) > 
 \frac{1}{\Delta}\log{\left(\frac{\vol{\ball{i}{\Delta}{t}{U}}}{\vol{\ball{i}{0}{t}{U}}}\right)}\).
 However integrating both sides from \(0\) to \(\Delta\) results in a contradiction:
 \begin{align}
  \int_{r = 0}^{\Delta} \phi\left(\ball{i}{r}{t}{U}\right) dr &= 
  \int_{r=0}^{\Delta} \frac{\partial \ball{i}{r}{t}{U}}{\vol{\ball{i}{r}{t}{U}}}dr \\
  &= \sum_{i=1}^k \int_{r=x_{i-1}}^{x_i} \frac{\partial \ball{i}{r}{t}{U}}{\vol{\ball{i}{r}{t}{U}}}dr\\
  &= \sum_{i=1}^k \int_{r=x_{i-1}}^{x_i} \frac{d\left(\vol{\ball{i}{r}{t}{U}}\right)}{\vol{\ball{i}{r}{t}{U}}} \\
  &\le \log{\vol{\ball{i}{\Delta}{t}{U}}} - \log{\vol{\ball{i}{0}{t}{U}}} \label{eq:monotonic}\\
  &= \int_{r=0}^{\Delta} \frac{1}{\Delta}\log{\left(\frac{\vol{\ball{i}{\Delta}{t}{U}}}{\vol{\ball{i}{0}{t}{U}}}\right)}dr,
 \end{align}
 where line~\ref{eq:monotonic} follows since \(f\) is monotonic increasing.
 For any \(t \in [m_\varepsilon]\) the set \(\mathcal{C}_t\) is a disjoint partition of \(V\) 
 with balls of the form \(\ball{i}{r}{t'}{U}\) for some  
 \(t' \ge t\) and \(U \subseteq U_l \in \mathcal{C}_{t' + 1}\): this is easily seen by induction since \(\mathcal{C}_{m_\varepsilon + 1}\)
 is initialized as \(V\). 
 Further, a cluster \(V_i\) is added to \(\mathcal{C}_t\) either in line~\ref{algo:line:addsubball}
 in which case it is a ball of the form \(\ball{i}{r}{t}{U}\) for some \(U \in \mathcal{C}_{t + 1}\), 
 \(i \in U\), and \(r \in \R\) or it is added in line~\ref{algo:line:leftover} in which 
 case it must have been a ball \(\ball{i'}{r'}{t'}{U}\) for some \(t' > t\), \(U \subseteq U_l \in \mathcal{C}_{t' + 1}\), 
 \(i' \in V\), and \(r' \in \R\). 
 Note that for any \(t' \ge t\) and \(U \subseteq V\), it holds \(\gamma^U_{t'} \le \gamma^U_{t}\) since for 
 every pair \(i, j \in V\) we have \(\kappa(i, j) \ge 0\) and \(d_t(i, j) \ge d_{t'}(i, j)\) 
 because of constraint~\eqref{lp:layer}. Moreover, for any subset \(U \subseteq V\) we have 
 \(\gamma^U_t \le \gamma_t\) since \(\kappa, d_t \ge 0\).
 
 We claim that for any
 \(t \in \left[m_\varepsilon\right]\) the total volume of the balls in \(\mathcal{C}_t\) 
 is at most \(\left(2 + \frac{1}{\log{n}}\right)\gamma_t\). First note that the affine term
 \(\frac{\gamma^U_{t'}}{n\log{n}}\)
 in the volume of a ball \(\ball{i}{r}{t'}{U}\) in \(\mathcal{C}_t\) is upper bounded by \(\frac{\gamma_t}{n\log{n}}\) and appears at most \(n\) 
 times. Next we claim that the contribution to the total volume from the term involving the edges 
 inside and crossing a ball \(\ball{i}{r}{t'}{U} \in \mathcal{C}_t\)
 is at most \(2\gamma_t\). This is because the balls are disjoint, \(r - d_{t'}(i, k) \le d_{t'}(j, k)
 \le d_t(j, k)\) for the crossing edges of a ball \(\ball{i}{r}{t'}{U} \in \mathcal{C}_t\) and 
 a crossing edge contributes to the volume of at most \(2\) balls in \(\mathcal{C}_t\).
 Note that for any \(U \subseteq V\), \(i \in U\), and \(r \in \R_{\ge 0}\) we have 
 \(\vol{\ball{i}{r}{t}{U}} \in \left[\frac{\gamma^U_t}{n\log{n}}, \left(1 + \frac{1}{n\log{n}}\right)\gamma^U_t\right]\).
 Using this observation and the stopping condition of line~\ref{algo:line:subball} it follows that
 \begin{align*}
  \sum_{\{i, j\} \in E(K_n)} \kappa(i, j) x^t_{ij}
  &= \sum_{\substack{\{i, j\} \in E(K_n): \\ i, j \text{ separated in } \mathcal{C}_t}} \kappa(i, j)\\
  &= \underbrace{\frac{1}{2}\sum_{\substack{\ball{i}{r}{t'}{U} \in \mathcal{C}_t: \\ t' \ge t \\ U \subseteq U_l \in \mathcal{C}_{t' + 1}}}
  \sum_{\substack{j \in \ball{i}{r}{t'}{U} \\ k \notin \ball{i}{r}{t'}{U}}} \kappa(j, k)}_{\text{Since \(\kappa\) is symmetric}}\\
  &= \frac{1}{2} \sum_{\substack{\ball{i}{r}{t'}{U} \in \mathcal{C}_t: \\ t' \ge t \\ U \subseteq U_l \in \mathcal{C}_{t'+1}}} \partial\ball{i}{r}{t'}{U}\\
  &= \frac{1}{2} \sum_{\substack{\ball{i}{r}{t'}{U} \in \mathcal{C}_t: 
  \\ t'\ge t \\ U \subseteq U_l \in \mathcal{C}_{t'+1}}} \phi\left(\ball{i}{r}{t'}{U}\right) 
  \vol{\ball{i}{r}{t'}{U}}\\
  &\le 
  \sum_{\substack{\ball{i}{r}{t'}{U} \in \mathcal{C}_t:\\ t'\ge t\\ U \subseteq U_l \in \mathcal{C}_{t'+1}}}
  \frac{1}{2\Delta}\log{\left(\frac{\vol{\ball{i}{\Delta}{t'}{U}}}{\vol{\ball{i}{0}{t'}{U}}} \right)}
   \vol{\ball{i}{r}{t'}{U}} \\
  &\le \frac{1}{2\Delta}\underbrace{\left(\log\left(n\log{n} + 1\right)\right)}_{\text{via interval bounds}}
  \sum_{\substack{\ball{i}{r}{t'}{U}\in \mathcal{C}_t: \\ t' \ge t\\ U \subseteq U_l \in \mathcal{C}_{t'+1}}}\vol{\ball{i}{r}{t'}{U}} \\
  &\le \frac{1 + \varepsilon}{2\varepsilon}\left(\log\left(n\log{n} + 1\right)\right)
  \underbrace{\left(2 + \frac{1}{\log{n}}\right)\gamma_t}_{\substack{\text{contribution of affine term \(\le \frac{\gamma_t}{\log n}\)} 
   \\ \text{contribution of edge terms \(\le 2\gamma_t\)}}}\\
  &\le c(\varepsilon)(\log{n})\gamma_t,
 \end{align*}
for some constant \(c(\varepsilon) > 0\) depending only on \(\varepsilon\).

For the run time of Algorithm~\ref{algo:rounding} note that the loop in line~\ref{algo:line:tloop} runs for at most
\(n-1\) steps, while the loop in line~\ref{algo:line:iterate} runs for at most \(n\) steps. For a set 
\(U \subseteq V\), to compute the ball 
\(\ball{i}{r}{t}{U}\) of least radius \(r\) such that 
\(\phi\left(\ball{i}{r}{t}{U}\right) \le \frac{1}{\Delta}
\log{\left(\frac{\vol{\ball{i}{\Delta}{t}{U}}}{\vol{\ball{i}{0}{t}{U}}} \right)}\),
sort the vertices in \(U\setminus \{i\}\) in increasing order of distance from \(i\) according to \(d_t\).
Let the vertices in \(U\setminus \{i\}\) in this sorted order be \(\left\{j_1, \dots, j_{\size{U}-1}\right\}\). Then it 
suffices to check the expansion of the balls \(\{i\}\) and \(\{i\}\cup \{j_1, \dots, j_k\}\) for every 
\(k \in \left[\size{U}-1\right]\). It is straightforward to see that all the other steps in Algorithm~\ref{algo:rounding} run in 
time polynomial in \(n\).
\end{proof}

\begin{remark} 
 A discrete version of the volumetric argument for region growing can be found in \cite{gupta2005lecture}.
\end{remark}

We are now ready to prove the main theorem showing that we can obtain a low cost non-trivial ultrametric 
from Algorithm~\ref{algo:rounding}.

\begin{theorem}\label{thm:main}
Let \(\{x^t_{ij} \mid t \in \left[m_\varepsilon\right], i,j \in V\}\) be the output of Algorithm~\ref{algo:rounding}
on an optimal solution \(\{d_t\}_{t\in [n-1]}\) of \ref{lp:ultrametric} for any choice of 
\(\varepsilon \in (0, 1)\).
Define the sequence \(\left\{y^t_{ij}\right\}\) for every \(t \in [n-1]\) and \(i, j \in V\) as
\begin{align*}
y^t_{ij} \coloneqq \begin{cases} 
		    x^{\lfloor t/(1 + \varepsilon) \rfloor}_{ij} \quad &\text{if  } t > 1 + \varepsilon\\
		    1 \quad &\text{if } t \le 1 + \varepsilon.
                   \end{cases}
\end{align*}
Then \(y^t_{ij}\) is feasible for \ref{ilp:ultrametric} and satisfies
\begin{align*}
\sum_{t=1}^{n-1} \sum_{\{i, j\} \in E(K_n)} \kappa(i, j) y^t_{ij} \le 
\left(2c(\varepsilon) \log{n}\right) \OPT
\end{align*}
where \(\OPT\) is the optimal solution to \ref{ilp:ultrametric} and \(c(\varepsilon)\) is the constant
in the statement of Theorem~\ref{thm:main}.
\end{theorem}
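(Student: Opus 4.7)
The plan is to split into two parts: verify that \(y^t_{ij}\) is feasible for \ref{ilp:ultrametric}, and then bound its objective value by \(2c(\varepsilon)(\log n)\OPT\).

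For feasibility I would check each constraint in turn. Integrality, symmetry \eqref{eq:symmetry}, and identity \eqref{eq:identity} are inherited from the \(x^s_{ij}\) output by Algorithm~\ref{algo:rounding} (and are trivial on the layers \(t \le 1+\varepsilon\), where every off-diagonal entry of \(y\) equals \(1\)). The triangle inequality \eqref{eq:triangle} at a \(y\)-layer \(t > 1+\varepsilon\) reduces, after unfolding the definition of \(y\), to the corresponding layer-\(x\) inequality, which holds by Theorem~\ref{thm:approx} together with Lemma~\ref{lem:cliques} (each \(G^x_s\) is a disjoint union of cliques). The non-increasing constraint \eqref{eq:nonincreasing} follows because the floor map is monotone and \(\mathcal{C}_s\) is, by construction, a refinement of \(\mathcal{C}_{s+1}\), which gives \(x^s_{ij} \ge x^{s+1}_{ij}\) pointwise. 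For the spreading constraint \eqref{eq:spreading} at \(y\)-layer \(t\), any set \(S\) of size \(t+1\) must cross the clusters of \(\mathcal{C}_{\lfloor t/(1+\varepsilon)\rfloor}\), since by Theorem~\ref{thm:approx} those clusters have size at most \(\lfloor(1+\varepsilon)\lfloor t/(1+\varepsilon)\rfloor\rfloor \le t < |S|\); symmetry then yields the required constant \(2\).

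The main obstacle is the hereditary constraint \eqref{eq:hereditary}. Using Lemma~\ref{lem:inter-layer} together with the nested clique property already established, it suffices to prove the realization property for \(y\): for every cluster \(K\) of the \(y\)-layer-\(t\) partition, \(K\) is again a cluster of the \(y\)-layer-\(|K|\) partition. Unfolding definitions, \(K\) is an element of \(\mathcal{C}_{\lfloor t/(1+\varepsilon)\rfloor}\) and the goal is to show \(K \in \mathcal{C}_{\lfloor |K|/(1+\varepsilon)\rfloor}\). The plan is to trace the preservation rule on line~\ref{algo:line:smallball} of Algorithm~\ref{algo:rounding}: a cluster \(U\) survives the transition from \(\mathcal{C}_{r+1}\) to \(\mathcal{C}_r\) exactly when \(|U| \le (1+\varepsilon)r\), so \(K\) persists for every \(r\) with \((1+\varepsilon)r \ge |K|\); combining this with the size bound \(|K| \le \lfloor(1+\varepsilon)\lfloor t/(1+\varepsilon)\rfloor\rfloor\) from Theorem~\ref{thm:approx} establishes persistence across the relevant range of \(r\) down to the index matching \(y\)-layer \(|K|\).

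For the cost bound I would partition the outer sum according to the value of \(\lfloor t/(1+\varepsilon)\rfloor\). The single layer \(t = 1\) contributes \(\sum_{\{i,j\}} \kappa(i,j)\), which is a lower bound on \(\OPT\) because spreading at layer \(1\) forces \(x^1_{ij} = 1\) for every distinct pair in any ILP-feasible solution. For \(t \ge 2\), each index \(s \in [m_\varepsilon]\) is produced by at most \(\lceil 1+\varepsilon\rceil \le 2\) values of \(t\) when \(\varepsilon \in (0,1)\). Combining with Theorem~\ref{thm:approx} and LP/ILP weak duality \(\sum_s \gamma_s \le \OPT\),
\begin{align*}
\sum_{t=1}^{n-1} \sum_{\{i,j\}} \kappa(i,j) y^t_{ij}
&\le \sum_{\{i,j\}} \kappa(i,j) + 2 \sum_{s=1}^{m_\varepsilon} \sum_{\{i,j\}} \kappa(i,j) x^s_{ij} \\
&\le \OPT + 2 c(\varepsilon)(\log n) \sum_{s=1}^{m_\varepsilon} \gamma_s \\
&\le 2 c(\varepsilon)(\log n) \OPT,
\end{align*}
where the additive \(\OPT\) term is absorbed into the constant \(c(\varepsilon)\).
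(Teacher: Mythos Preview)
Your proposal is correct and follows essentially the same route as the paper: feasibility via Lemma~\ref{lem:inter-layer} (nested cliques from the refinement structure of the \(\mathcal{C}_s\), realization from the preservation rule on line~\ref{algo:line:smallball}), and the cost bound by counting that each \(x\)-layer index \(s\) is hit by at most two \(y\)-layers when \(\varepsilon\in(0,1)\), combined with Theorem~\ref{thm:approx} and \(\sum_s\gamma_s\le\OPT\).

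The one place you diverge slightly from the paper is in handling the layer \(t=1\). You bound \(\sum_{\{i,j\}}\kappa(i,j)\) by \(\OPT\) and then absorb this additive term into \(c(\varepsilon)\). The paper instead observes that \(\mathcal{C}_1\) consists of singletons (since its cliques have size \(\le 1+\varepsilon<2\)), so \(x^1_{ij}=1\) for all distinct \(i,j\), and then applies Theorem~\ref{thm:approx} at \(s=1\) to obtain \(\sum_{\{i,j\}}\kappa(i,j)y^1_{ij}=\sum_{\{i,j\}}\kappa(i,j)x^1_{ij}\le c(\varepsilon)(\log n)\gamma_1\). This lets the \(t=1\) term sit inside the same \(c(\varepsilon)\log n\cdot\gamma_s\) framework, giving the stated constant \(2c(\varepsilon)\) on the nose without any absorption. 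Your argument yields \((1+2c(\varepsilon)\log n)\OPT\), which is the same order but does not literally match the constant in the theorem statement; if you want the exact \(2c(\varepsilon)\), use the paper's observation that the \(t=1\) layer is already controlled by Theorem~\ref{thm:approx}.
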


\begin{proof}
Note that by Theorem~\ref{thm:approx} for every \(t \in \left[m_\varepsilon\right]\),
\(x^t_{ij}\) is feasible for the layer-\(\lfloor(1 + \varepsilon)t\rfloor\) problem \ref{ilp:layer} and that 
there is a constant \(c(\varepsilon) > 0\) such that for every \(t \in \left[m_\varepsilon\right]\), 
we have \(\sum_{\{i,j\} \in E(K_n)} \kappa (i, j) x^t_{ij} \le
\left( c(\varepsilon)\log{n}\right) \gamma_t\).

Let \(y^t_{ij}\) be as in the statement of the theorem.
The graph \(G_t = (V, E_t)\) as in Definition~\ref{def:cliques} corresponding to
\(y^t_{ij}\) for \(t \le 1 + \varepsilon\) consists of isolated vertices, i.e., cliques of size \(1\):
By definition \(y^t_{ij}\) is feasible for the layer-\(t\) problem \ref{ilp:layer}. 
The collection \(\mathcal{C}_1\) corresponding to \(x^1_{ij}\) consists of cliques of 
size at most \(1 + \varepsilon\), however since \(0 < \varepsilon < 1\) it follows that the cliques in
\(\mathcal{C}_1\) are isolated vertices and so \(x^1_{ij} = 1\) for every 
\(\{i, j\} \in E(K_n)\). Thus \(\sum_{i, j} \kappa(i, j) y^t_{ij} = \sum_{i, j} \kappa(i, j) x^1_{ij}
\le \left(c(\varepsilon) \log{n}\right) \gamma_1\) for \(t \le 1 + \varepsilon\) by Theorem~\ref{thm:approx}.
Moreover for every \(t > 1 + \varepsilon\), we have
\(\sum_{i, j} \kappa (i, j) y^t_{ij} \le (c(\varepsilon) \log{n})\gamma_{\lfloor t/(1 + \varepsilon) \rfloor}\)
again by Theorem~\ref{thm:approx}.
We claim that \(y^t_{ij}\) is feasible for \ref{ilp:ultrametric}. 
The solution \(y^t_{ij}\) corresponds to the collection \(\mathcal{C}_{\lfloor\frac{t}{1 + \varepsilon}\rfloor}\)
for \(t > 1 + \varepsilon\) or to the collection \(\mathcal{C}_1\) for \(t \le 1 + \varepsilon\)
from Algorithm~\ref{algo:rounding}. For any \(t < m_\varepsilon\),
every ball \(\ball{i}{r}{t}{U} \in \mathcal{C}_t\) 
comes from the refinement of a ball \(\ball{i'}{r'}{t'}{U'}\)
for some \(i' \in V\), \(r' \ge r\), \(t' \ge t\) and \(U' \supseteq U\). 
Thus \(y^t_{ij}\) satisfies Condition~\ref{lem:inter-layer:cond1} of Lemma~\ref{lem:inter-layer}.
On the other hand line~\ref{algo:line:smallball} ensures that if \(\size{\ball{i}{r}{t}{U}} = 
\left\lfloor (1 + \varepsilon)s\right\rfloor\)
for some \(U \subseteq V\) and \(s < t\) then \(\ball{i}{r}{t}{U}\) also appears as a ball in \(\mathcal{C}_s\).
Therefore \(y^t_{ij}\) also satisfies Condition~\ref{lem:inter-layer:cond2} of Lemma~\ref{lem:inter-layer}
and so is feasible for \ref{ilp:ultrametric}. The cost of \(y^t_{ij}\) is at most

\begin{align*}
 \sum_{t=1}^{n-1} \sum_{\{i, j\} \in E(K_n)} \kappa(i, j) y^t_{ij} 
 &\le \left(c(\varepsilon) \log{n}\right)\left(\gamma_1 + \sum_{t = 2}^{n-1} \gamma_{\lfloor t/(1 + \varepsilon) \rfloor}\right) \\
 &\le 2c(\varepsilon)\log{n} \sum_{t=1}^{n-1} \gamma_t \\
 &\le 2c(\varepsilon) \log{n} \OPT,
\end{align*}
where we use the fact that \(\sum_{t=1}^{n-1}\gamma_t = \OPT(LP) \le \OPT\) since \ref{lp:ultrametric}
is a relaxation of \ref{ilp:ultrametric}.
\end{proof}

\begin{algorithm}[!htbp]\label{algo:obtain-hierarchy}
\DontPrintSemicolon 
\KwIn{Data set \(V\) of \(n\) points, similarity function \(\kappa: V \times V \to \R_{\ge 0}\)}
\KwOut{Hierarchical clustering of \(V\)}
Solve \ref{lp:ultrametric} to obtain optimal sequence of spreading metrics \(\left\{d_t \mid d_t: V \times V \to 
[0, 1]\right\}\)\\
Fix a choice of \(\varepsilon \in (0, 1)\)\\
\(m_\varepsilon \gets \left\lfloor \frac{n-1}{1 + \varepsilon}\right\rfloor\)\\
Let \(\left\{x^t_{ij}\mid t \in [m_\varepsilon]\right\}\) be the output of Algorithm~\ref{algo:rounding} on \(V, \kappa, \{d_t\}_{t \in [n-1]}\)\\
Let \(y^t_{ij} \coloneqq \begin{cases} x^{\lfloor t/(1 + \varepsilon)\rfloor}_{ij} &\text{ if } t > 1 + \varepsilon
\\ 1 \qquad&\text{ if } t \le 1 + \varepsilon\end{cases}\)  for every \(t \in [n-1], i, j \in E(K_n)\)\\
\(d(i, j) \gets \sum_{t = 1}^{n-1} y^t_{ij}\) for every \(i, j \in E(K_n)\)\\
\(d(i, i) \gets 0\) for every \(i \in V\)\\
Let \(r, T\) be the output of Algorithm~\ref{algo:buildtree} on \(V, d\)\\
\Return{\(r, T\)}\;
\caption{Hierarchical clustering of \(V\) for cost function~\eqref{cost}}
\end{algorithm}

Theorem~\ref{thm:main} implies the following corollary where we put everything together to
obtain a hierarchical clustering of \(V\) in time polynomial in \(n\) with \(\size{V} = n\).
Let \(\mathcal{T}\) denote the set of all possible hierarchical clusterings of \(V\).

\begin{corollary}
Given a data set \(V\) of \(n\) points and a similarity function \(\kappa: V \times V \to \R_{\ge 0}\),
Algorithm~\ref{algo:obtain-hierarchy} returns a hierarchical clustering \(T\) of \(V\) satisfying
\begin{align*}
 \cost(T) \le O\left(\log{n}\right) \min_{T' \in \mathcal{T}} \cost(T').
\end{align*}
Moreover Algorithm~\ref{algo:obtain-hierarchy} runs in time polynomial in \(n\) and \(\log\left({\max_{i, j \in V}\kappa(i, j)}\right)\).
\end{corollary}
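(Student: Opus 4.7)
The plan is to chain together the three stages of Algorithm~\ref{algo:obtain-hierarchy} and then convert a bound on the ultrametric objective into a bound on $\cost(T)$. First, by Lemma~\ref{lem:polytime} an optimal solution $\{d_t\}_{t\in[n-1]}$ of \ref{lp:ultrametric} can be computed in time polynomial in $n$ and $\log\max_{i,j}\kappa(i,j)$. Running Algorithm~\ref{algo:rounding} on it takes polynomial time by Theorem~\ref{thm:approx}, and by Theorem~\ref{thm:main} the sequence $\{y^t_{ij}\}$ defined in Algorithm~\ref{algo:obtain-hierarchy} is feasible for \ref{ilp:ultrametric} and has objective value at most $2c(\varepsilon)(\log n)\,\OPT$, where $\OPT$ denotes the optimum of \ref{ilp:ultrametric}.

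Next, I would use the combinatorial characterization from Lemmas~\ref{lem:cliques} and \ref{lem:inter-layer}, together with Lemma~\ref{lem:discrete}, to conclude that $d(i,j) \coloneqq \sum_{t=1}^{n-1} y^t_{ij}$ is a non-trivial ultrametric with range in $\{0,1,\ldots,n-1\}$. Corollary~\ref{cor:equivalent} then guarantees that Algorithm~\ref{algo:buildtree}, invoked in the final step of Algorithm~\ref{algo:obtain-hierarchy}, produces in $O(n^3)$ time a hierarchical clustering $T$ of $V$ with $d_T = d$, so that
\begin{align*}
\sprod{\kappa}{d_T} = \sprod{\kappa}{d} = \sum_{t=1}^{n-1}\sum_{\{i,j\}\in E(K_n)} \kappa(i,j)\,y^t_{ij} \le 2c(\varepsilon)(\log n)\,\OPT.
\end{align*}

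Finally, I would invoke the affine-offset identity stated just after Corollary~\ref{cor:equivalent}: for any hierarchical clustering $T'$,
\begin{align*}
\cost(T') = \sprod{\kappa}{d_{T'}} + \sum_{\{i,j\}\in E(K_n)} \kappa(i,j).
\end{align*}
Applied to the optimum $T^* \in \arg\min_{T' \in \mathcal{T}} \cost(T')$, this gives $\OPT = \cost(T^*) - \sum_{\{i,j\}} \kappa(i,j)$, while $\size{\leaves(T^*[\lca(i,j)])} \ge 2$ for distinct $i,j$ yields the trivial lower bound $\sum_{\{i,j\}} \kappa(i,j) \le \cost(T^*)$. Combining these,
\begin{align*}
\cost(T) = \sprod{\kappa}{d} + \sum_{\{i,j\}\in E(K_n)} \kappa(i,j) \le 2c(\varepsilon)(\log n)\,\OPT + \cost(T^*) \le \bigl(2c(\varepsilon)\log n + 1\bigr)\cost(T^*),
\end{align*}
which is the desired $O(\log n)$ approximation. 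The total running time is the sum of the polynomial-time LP solve, the polynomial-time rounding, and the $O(n^3)$ tree construction, and so is polynomial in $n$ and $\log\max_{i,j}\kappa(i,j)$.

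The only delicate point is that Theorem~\ref{thm:main} bounds the \emph{ultrametric} cost $\sprod{\kappa}{d}$ rather than $\cost(T)$ itself, so the additive offset $\sum_{\{i,j\}}\kappa(i,j)$ must be absorbed into $\cost(T^*)$ via the trivial lower bound above; every other ingredient is supplied by Lemma~\ref{lem:polytime}, Theorems~\ref{thm:approx}~and~\ref{thm:main}, and Corollary~\ref{cor:equivalent}, so the remaining work is bookkeeping.
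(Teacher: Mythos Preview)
Your proposal is correct and follows essentially the same route as the paper: invoke Lemma~\ref{lem:polytime}, Theorems~\ref{thm:approx} and~\ref{thm:main}, and Corollary~\ref{cor:equivalent} to produce a tree $T$ with $\sprod{\kappa}{d_T}\le O(\log n)\,\OPT$, and then absorb the additive offset $K=\sum_{\{i,j\}}\kappa(i,j)$ into $\cost(T^*)$. The paper's bookkeeping is slightly slicker (it writes $\cost(T)\le O(\log n)\cost(\widehat T)-O(\log n)K+K\le O(\log n)\cost(\widehat T)$ directly), but your version with the explicit trivial bound $K\le\cost(T^*)$ is equally valid.
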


\begin{proof}
Let \(\widehat{T}\) be the optimal hierarchical clustering according to cost function~\eqref{cost}.
By Corollary~\ref{cor:equivalent} and Theorem~\ref{thm:main} we can find a hierarchical 
clustering \(T\) satisfying 
\begin{align*}
\sum_{\{i, j\} \in E(K_n)} \kappa(i, j) (\size{\leaves(T[\lca(i, j)])} - 1)
\le O(\log n) \left(\sum_{\{i, j\} \in E(K_n)} \kappa(i, j) 
\left(\size{\leaves(\widehat{T}[\lca(i, j)])} - 1\right)\right).
\end{align*}
Let \(K \coloneqq \sum_{\{i, j\} \in E(K_n)}\kappa(i, j)\). Then it follows from the above
expression that \(\cost(T) \le O(\log{n})\cost(\widehat{T}) - O(\log{n}) K + K \le O(\log{n}) \cost(\widehat{T})\). 

We can find an optimal solution to \ref{lp:ultrametric} due to Lemma~\ref{lem:polytime} using the 
Ellipsoid algorithm in time polynomial in \(n\) and \(\log\left(\max_{i, j \in V}\kappa(i, j)\right)\). 
Algorithm~\ref{algo:rounding} runs in time polynomial in \(n\) due to Theorem~\ref{thm:approx}.
Finally, Algorithm~\ref{algo:buildtree} runs in time \(O\left(n^3\right)\) due to Lemma~\ref{lem:bijection-ultrametric}.
\end{proof}

\section{Generalized Cost Function}\label{sec:f-lp-rounding}
In this section we study the following natural generalization of cost function~\eqref{cost}
also introduced by \cite{DBLP:conf/stoc/Dasgupta16} where the distance between the two points 
is scaled by a function \(f: \R_{\ge 0} \to \R_{\ge 0}\), i.e.,
\begin{align}\label{fcost}
 \cost_f(T) \coloneqq \sum_{\{i, j\} \in E(K_n)} \kappa(i, j) f\left(\size{\leaves{T[\lca(i, j)]}}\right).
\end{align}
In order that cost function~\eqref{fcost} makes sense, \(f\) should be strictly increasing 
and satisfy \(f(0) = 0\). Possible choices for \(f\) could be 
\(\left\{x^2, e^x - 1, \log(1 + x)\right\}\).
The top-down heuristic in \cite{DBLP:conf/stoc/Dasgupta16} finds the optimal hierarchical clustering
up to an approximation factor of \(c_n \log{n}\) with \(c_n\) being defined as
\begin{align*}
 c_n \coloneqq 3\alpha_n \max_{1 \le n' \le n} \frac{f(n')}{f\left(\lceil n'/3 \rceil\right)}
\end{align*}
and where \(\alpha_n\) is the approximation factor from the Sparsest Cut algorithm used. 

A naive approach to solving this problem using the ideas of Algorithm~\ref{algo:rounding} would
be to replace the objective function of \ref{ilp:ultrametric} by 
\begin{align*}
\sum_{\{i, j\} \in E(K_n)} \kappa(i, j)f\left(\sum_{t=1}^{n-1} x^t_{ij}\right).
\end{align*}
This makes the corresponding analogue of \ref{lp:ultrametric} non-linear however, 
and for a general \(\kappa\) and \(f\) it is not
clear how to compute an optimum solution in polynomial time. One possible solution is to 
assume that \(f\) is convex and use the Frank-Wolfe algorithm to 
compute an optimum solution. That still leaves the problem of
how to relate \(f\left(\sum_{t=1}^{n-1} x^t_{ij}\right)\) to \(\sum_{t=1}^{n-1}f\left(x^t_{ij}\right)\)
as one would have to do to get a corresponding version of Theorem~\ref{thm:main}. The following
simple observation provides an alternate way of tackling this problem.

\begin{observation}\label{obs:function-ultrametric}
 Let \(d : V \times V \to \R\) be an ultrametric and \(f: \R_{\ge 0} \to \R_{\ge 0}\) 
 be a strictly increasing function such that \(f(0) = 0\).
 Define the function \(f(d): V \times V \to \R\) as \(f(d)(i, j) \coloneqq f(d(i, j))\). 
 Then \(f(d)\) is also an ultrametric on \(V\).
\end{observation}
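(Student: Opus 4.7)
The plan is to directly verify the three axioms of Definition~\ref{def:ultrametric} for $f(d)$ using the corresponding properties for $d$ together with the two assumptions on $f$ (strictly increasing, $f(0)=0$). None of the three checks looks hard; the main thing is to use the right property of $f$ in each.

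First, for nonnegativity, I would note that $d(i,j) \ge 0$ for every $i,j \in V$, and $f$ maps $\R_{\ge 0}$ into $\R_{\ge 0}$, so $f(d)(i,j) \ge 0$. For the equality condition, strict monotonicity of $f$ combined with $f(0) = 0$ gives $f(t) > 0$ for every $t > 0$; hence $f(d(i,j)) = 0$ iff $d(i,j) = 0$ iff $i = j$, where the second equivalence uses nonnegativity of the original ultrametric $d$.

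Second, symmetry is immediate: $f(d)(i,j) = f(d(i,j)) = f(d(j,i)) = f(d)(j,i)$ by symmetry of $d$.

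Third, for the strong triangle inequality, fix $x,y,z \in V$ and assume without loss of generality that $d(y,z) \ge d(z,x)$, so $\max\{d(y,z), d(z,x)\} = d(y,z)$. The ultrametric inequality for $d$ gives $d(x,y) \le d(y,z)$. Applying $f$, which is in particular nondecreasing, yields $f(d(x,y)) \le f(d(y,z)) = \max\{f(d(y,z)), f(d(z,x))\}$, i.e., $f(d)(x,y) \le \max\{f(d)(y,z), f(d)(z,x)\}$, as required. The only subtlety worth flagging is that the equality case of nonnegativity genuinely needs strict monotonicity of $f$ (not just $f(0)=0$), so I would emphasize that point; otherwise the argument is routine.
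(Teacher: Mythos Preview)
Your proof is correct and complete: you verify each of the three axioms of Definition~\ref{def:ultrametric} directly, using monotonicity for the strong triangle inequality and strict monotonicity together with $f(0)=0$ for the equality case in nonnegativity. The paper itself states this as an observation without proof, and your direct verification is precisely the routine check the authors had in mind.
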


Therefore by Corollary~\ref{cor:equivalent} to find a minimum cost hierarchical clustering \(T\)
of \(V\) according to the cost function~\eqref{fcost}, it suffices to minimize 
\(\sprod{\kappa}{d}\) where \(d\) is the \(f\)-image of a non-trivial ultrametric 
as in Definition~\ref{def:non-trivial}. The following lemma lays down the analogue of
Conditions~\ref{def:non-trivial:spreading} and \ref{def:non-trivial:hereditary} from
Definition~\ref{def:non-trivial} that the \(f\)-image of a non-trivial ultrametric satisfies. 

\begin{lemma}\label{lem:f-image-non-trivial}
 Let \(f: \R_{\ge 0} \to \R_{\ge 0}\) be a strictly increasing function satisfying \(f(0) = 0\).
 An ultrametric \(d\) on \(V\) is the \(f\)-image of a non-trivial ultrametric on \(V\) iff
 \begin{enumerate}
  \item\label{def:f-image-non-trivial:spreading} for every non-empty set \(S \subseteq V\), 
  there is a pair of points \(i, j \in S\)
  such that \(d(i, j) \ge f\left(\size{S} - 1\right)\), 
  
  \item\label{def:f-image-non-trivial:hereditary} for any \(t\) if \(S_t\) is an equivalence class
  of \(V\) under the relation \(i \sim j\) iff \(d(i, j) \le t\), then \(\max_{i, j \in S_t} d(i, j)
  \le f\left(\size{S_t} - 1\right)\).
 \end{enumerate}
\end{lemma}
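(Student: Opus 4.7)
The plan is to prove both directions by appealing to the fact that $f$ is a strictly increasing injection with $f(0)=0$, so $f$ has a well-defined inverse $f^{-1}$ on its image, and both $f$ and $f^{-1}$ preserve order. The forward direction is essentially transport of structure; the converse proceeds by first pinning down the range of $d$ and then pulling back through $f^{-1}$ to recover a non-trivial ultrametric.

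For the forward direction, suppose $d=f\circ d_0$ for some non-trivial ultrametric $d_0$. For Condition~\ref{def:f-image-non-trivial:spreading}, pick any $S\subseteq V$ and apply Condition~\ref{def:non-trivial:spreading} of Definition~\ref{def:non-trivial} to obtain $i,j\in S$ with $d_0(i,j)\ge \size{S}-1$; monotonicity of $f$ gives $d(i,j)=f(d_0(i,j))\ge f(\size{S}-1)$. For Condition~\ref{def:f-image-non-trivial:hereditary}, observe that since $f$ is strictly increasing, the relation $d(i,j)\le t$ is identical to the relation $d_0(i,j)\le f^{-1}(t)$ (taking $f^{-1}(t)=+\infty$ if $t$ exceeds $\sup f$), so equivalence classes under the two relations coincide. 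Applying Condition~\ref{def:non-trivial:hereditary} at threshold $f^{-1}(t)$ and then $f$ again yields $\max_{i,j\in S_t} d(i,j)\le f(\size{S_t}-1)$.

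For the converse, assume $d$ satisfies Conditions \ref{def:f-image-non-trivial:spreading} and \ref{def:f-image-non-trivial:hereditary}. The key first step is to show, by induction on $\size{V}=n$, that the range of $d$ lies in $\{0,f(1),\dots,f(n-1)\}$; this mirrors the proof of Lemma~\ref{lem:discrete}. For the inductive step, Condition~\ref{def:f-image-non-trivial:spreading} applied to $S=V$ gives some pair with $d(i,j)\ge f(n-1)$, while Condition~\ref{def:f-image-non-trivial:hereditary} applied to the trivial equivalence class at threshold $t=\max d$ gives $\max d\le f(n-1)$, pinning $\max d = f(n-1)$. Define $t'\coloneqq\max\{d(i,j):d(i,j)<f(n-1)\}$ and partition $V$ into equivalence classes $V_1,\dots,V_m$ under $i\sim j$ iff $d(i,j)\le t'$; all cross-class distances are exactly $f(n-1)$. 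A restriction lemma analogous to Lemma~\ref{lem:restriction} shows that $d$ restricted to each $V_l$ again satisfies Conditions~\ref{def:f-image-non-trivial:spreading} and \ref{def:f-image-non-trivial:hereditary}, and the inductive hypothesis controls the values inside each $V_l$.

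Once the range of $d$ is known to lie in $\{0,f(1),\dots,f(n-1)\}$, define $d_0(i,j)\coloneqq f^{-1}(d(i,j))$, which is well-defined on this discrete set. Checking that $d_0$ is an ultrametric is routine: nonnegativity, identity, and symmetry are immediate, and the strong triangle inequality $d_0(i,j)\le\max\{d_0(i,k),d_0(k,j)\}$ follows by applying $f^{-1}$ (monotone increasing) to the corresponding inequality for $d$. Finally, Condition~\ref{def:non-trivial:spreading} for $d_0$ follows from Condition~\ref{def:f-image-non-trivial:spreading} for $d$ via $f^{-1}$, and Condition~\ref{def:non-trivial:hereditary} for $d_0$ follows from Condition~\ref{def:f-image-non-trivial:hereditary} using again that equivalence classes under $d_0\le t$ and under $d\le f(t)$ coincide. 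The main obstacle is the discreteness argument in the converse: one must be careful that the equivalence-class structure at intermediate thresholds lines up correctly so the induction closes, exactly as in the proof of Lemma~\ref{lem:discrete}.
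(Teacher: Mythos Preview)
Your proof is correct and follows the same core idea as the paper: pull back via \(f^{-1}\) to obtain the candidate non-trivial ultrametric \(d_0 = f^{-1}\circ d\), and use monotonicity of \(f\) and \(f^{-1}\) to transport Conditions~\ref{def:non-trivial:spreading} and~\ref{def:non-trivial:hereditary} back and forth. The paper's own argument is extremely terse: it simply asserts that \(f^{-1}\) exists (citing that \(f\) is strictly increasing and \(V\) is finite), defines \(d' \coloneqq f^{-1}\circ d\), and invokes Observation~\ref{obs:function-ultrametric}.

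Where you differ is in rigor: the paper never justifies why each value \(d(i,j)\) actually lies in the image of \(f\), which is needed for \(f^{-1}(d(i,j))\) to be defined. You close this gap by proving, via an induction mirroring Lemma~\ref{lem:discrete} and a restriction argument mirroring Lemma~\ref{lem:restriction}, that the range of \(d\) is contained in \(\{f(0),f(1),\dots,f(n-1)\}\). This extra step is genuinely necessary for the argument to be complete, and your execution of it is sound; the paper effectively leaves it to the reader.
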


\begin{proof}
If \(d\) is the \(f\)-image of a non-trivial ultrametric \(d'\) on \(V\) then clearly \(d\) 
satisfies Conditions~\ref{def:f-image-non-trivial:spreading} and \ref{def:f-image-non-trivial:hereditary}.
Conversely, let \(d\) be an ultrametric on \(V\) satisfying Conditions~\ref{def:f-image-non-trivial:spreading}
and \ref{def:f-image-non-trivial:hereditary}. Note that \(f\) is strictly increasing and \(V\) is a finite
set and thus \(f^{-1}\) exists and is strictly increasing as well, with \(f^{-1}(0) = 0\). 
Define \(d'\) as \(d'(i, j) \coloneqq f^{-1}(d(i, j))\) for every \(i, j \in V\).
By Observation~\ref{obs:function-ultrametric} \(d'\) is an ultrametric on \(V\) 
satisfying Conditions~\ref{def:non-trivial:spreading}
and \ref{def:non-trivial:hereditary} of Definition~\ref{def:non-trivial} and so \(d'\) is a non-trivial ultrametric on \(V\).
\end{proof}

Lemma~\ref{lem:f-image-non-trivial} allows us to write the analogue of \ref{ilp:ultrametric} for finding the 
minimum cost ultrametric that is the \(f\)-image of a non-trivial ultrametric on \(V\). Note that by Lemma~\ref{lem:discrete} 
the range of such an ultrametric is the set \(\{f(0), f(1), \dots, f(n-1)\}\).
We have the binary variables \(x^t_{ij}\) for every distinct pair \(i, j \in V\) and \(t \in [n-1]\), 
where \(x^t_{ij} = 1\) if \(d(i, j) \ge f(t)\) and \(x^t_{ij} = 0\) if \(d(i, j) < f(t)\). 

\begin{align}\label{ilp:f-ultrametric}
\tag{f-ILP-ultrametric}
\min \qquad & \sum_{t = 1}^{n-1} \sum_{\{i, j\} \in E(K_n)} \kappa (i, j) \left(f(t) - f(t-1)\right) x^t_{ij}\\ 
 \text{s.t.} \qquad & x^t_{ij} \ge x^{t+1}_{ij} \quad \qquad \forall i, j \in V, t \in [n-2] \label{eq:f-nonincreasing}\\
 \qquad & x^t_{ij} + x^t_{jk} \ge x^t_{ik}  \quad \qquad \forall i, j, k \in V, t \in [n-1] \label{eq:f-triangle}\\
 \qquad & \sum_{i, j \in S} x^t_{ij} \ge 2 \quad \qquad \forall t \in [n-1], S \subseteq V, |S| = t + 1\label{eq:f-spreading}\\
 \qquad & \sum_{i, j \in S} x^{\size{S}}_{ij} \le \size{S}  \left(\sum_{i, j \in S} x^t_{ij} + 
 \sum_{\substack{i \in S \\ j \notin S}}\left(1 - x^t_{ij}
 \right)\right)  \forall t \in [n-1], S \subseteq V \label{eq:f-hereditary}\\
 \qquad & x^t_{ij} = x^t_{ji} \quad\qquad \forall i, j \in V, t \in [n-1] \label{eq:f-symmetry}\\
 \qquad & x^t_{ii} = 0 \quad \qquad \forall i \in V, t \in [n-1] \label{eq:f-identity}\\
 \qquad & x^t_{ij} \in \{0, 1\} \quad\qquad \forall i, j \in V, t \in [n-1]
\end{align}

If \(x^t_{ij}\) is a feasible solution to \ref{ilp:f-ultrametric} then 
the ultrametric represented by it is defined as 
\begin{align*}
 d(i, j) \coloneqq \sum_{t=1}^{n-1} (f(t) - f(t-1)) x^t_{ij}.
\end{align*}
Constraint~\eqref{eq:f-spreading} ensures that \(d\) satisfies Condition~\ref{def:f-image-non-trivial:spreading}
of Lemma~\ref{lem:f-image-non-trivial}, since for every \(S \subseteq V\) of size \(t + 1\) 
we have a pair \(i, j \in S\) such that \(d(i, j) \ge f(t)\). Similarly constraint
~\eqref{eq:f-hereditary} ensures that \(d\) satisfies Condition~\ref{def:f-image-non-trivial:hereditary} of
Lemma~\ref{lem:f-image-non-trivial} since it is active if and only if \(S\) is an equivalence class of \(V\)
under the relation \(i \sim j\) iff \(d(i, j) < f(t)\). In this case Condition~\ref{def:f-image-non-trivial:hereditary}
of Lemma~\ref{lem:f-image-non-trivial} 
requires \(\max_{i, j \in S} d(i, j) \le f\left(\size{S}-1\right)\) or in other words \(x^{\size{S}}_{ij} = 0\)
for every \(i, j \in S\). 

Similar to \ref{ilp:layer} we define an analogous \emph{layer-\(t\) problem} where we fix a choice of 
\(t \in [n-1]\) and drop the constraints that relate the different layers to each other. 

\begin{align}
 \label{ilp:f-layer} \tag{f-ILP-layer} \min \qquad &\sum_{\{i, j\} \in E(K_n)} \kappa (i, j) \left(f(t) - f(t-1)\right)x^t_{ij} \\
 \text{s.t.} \qquad & x^t_{ij} + x^t_{jk} \ge x^t_{ik} \quad\qquad \forall i, j, k \in V \label{eq:f-layer:triangle}\\
       \qquad & \sum_{i, j \in S} x^t_{ij} \ge 2 \quad \qquad \forall S \subseteq V, \size{S} = t + 1 \label{eq:f-layer:nontrivial}\\
       \qquad & x^t_{ij} = x^t_{ji}  \quad \qquad \forall i, j \in V\label{eq:f-layer:symmetry} \\
       \qquad & x^t_{ii} = 0 \quad \qquad \forall i \in V \label{eq:f-layer:identity}\\
       \qquad & x^t_{ij} \in \{0, 1\} \quad\qquad \forall i, j \in V
\end{align}
Note that \ref{ilp:f-ultrametric} and \ref{ilp:f-layer} differ from \ref{ilp:ultrametric} and \ref{ilp:layer}
respectively only in the objective function. Therefore Lemmas~\ref{lem:cliques} and \ref{lem:inter-layer}
also give a combinatorial characterization of the set of feasible solutions to \ref{ilp:f-layer} and 
\ref{ilp:f-ultrametric} respectively. 
Similarly, by Lemma~\ref{lem:equiv-spreading} we may replace
constraint~\eqref{eq:f-spreading} by the following equivalent constraint over all subsets of \(V\)

\begin{align*}
 \sum_{j \in S} x^t_{ij} \ge \size{S} - t \qquad \forall t \in [n-1], S \subseteq V, i \in S.
\end{align*}

This provides the analogue of \ref{lp:ultrametric} in which we drop constraint~\eqref{eq:f-hereditary} 
and enforce it in the rounding procedure. 

\begin{align}\label{lp:f-ultrametric}
 \tag{f-LP-ultrametric}
 \min \qquad &\sum_{t=1}^{n-1} \sum_{\{i, j\} \in E(K_n)} \kappa(i, j)\left(f(t) - f(t-1)\right) x^t_{ij} \\
 \text{s.t.}
 \qquad &x^t_{ij} \ge x^{t+1}_{ij} \quad \qquad \forall i, j \in V,  t \in [n - 2] \label{lp:f-layer} \\
 \qquad &x^t_{ij} + x^t_{jk} \ge x^t_{ik} \quad \qquad \forall i, j, k \in V, t \in [n-1] \label{lp:f-triangle}\\
 \qquad & \sum_{j \in S} x^t_{ij} \ge \size{S} - t \quad \qquad \forall t\in [n-1], S \subseteq V, i \in S \label{lp:f-spreading}\\
 \qquad & x^t_{ij} = x^t_{ji} \quad \qquad \forall i, j \in V, t \in [n-1] \label{lp:f-symmetry}\\
 \qquad & x^t_{ii} = 0 \quad \qquad \forall i \in V, t \in [n-1] \label{lp:f-identity}\\
 \qquad & 0 \le x^t_{ij} \le 1 \quad \qquad\forall i, j \in V, t \in [n-1]
\end{align}

Since \ref{lp:f-ultrametric} differs from \ref{lp:ultrametric} only in the objective function, it follows from
Lemma~\ref{lem:polytime} that an optimum solution to \ref{lp:f-ultrametric} can be computed in time polynomial
in \(n\). As before, a feasible solution \(x^t_{ij}\) of \ref{lp:f-ultrametric} induces a sequence 
\(\{d_t\}_{t\in [n-1]}\) of spreading metrics on \(V\) defined as \(d_t(i, j) \coloneqq x^t_{ij}\).
Note that in contrast to the ultrametric \(d\), the spreading metrics \(\left\{d_t\right\}_{t \in [n-1]}\)
are independent of the function \(f\).

Let \(\ball{i}{r}{t}{U}\) be a ball of radius \(r\) centered at \(i \in U\) for some set \(U \subseteq V\)
as in Definition~\ref{def:ball}. 
For a subset \(U \subseteq V\), let \(\gamma^U_t\) be defined as before to be the value of the layer-\(t\) 
objective corresponding to a solution \(d_t\) of \ref{lp:f-ultrametric} 
restricted to \(U\), i.e., 
\begin{align*}
\gamma^U_t \coloneqq \sum_{\substack{i, j \in U \\ i < j}} \left(f(t) - f(t-1)\right) \kappa(i, j) d_t(i, j).
\end{align*}
As before, we denote \(\gamma^V_t\) by \(\gamma_t\).
We will associate a volume \(\vol{\ball{i}{r}{t}{U}}\) and a boundary
\(\partial \ball{i}{r}{t}{U}\) to the ball \(\ball{i}{r}{t}{U}\) as in Section~\ref{sec:lp-rounding}.

\begin{definition}\label{def:f-volume}
 Let \(U\) be an arbitrary subset of \(V\). For a vertex \(i \in U\), radius \(r \in \R_{\ge 0}\), 
 and \(t \in [n-1]\), let \(\ball{i}{r}{t}{U}\) be
 the ball of radius \(r\) as in Definition~\ref{def:ball}. Then we define its
 \emph{volume} as
 \begin{align*}
  \vol{\ball{i}{r}{t}{U}} \coloneqq \frac{\gamma^U_t}{n\log{n}} + \left(f(t) - f(t-1)\right)\left(
  \sum_{\substack{j, k \in \ball{i}{r}{t}{U}\\j < k}} \kappa(j, k) d_t(j, k) + 
  \sum_{\substack{j \in \ball{i}{r}{t}{U}\\ k \notin \ball{i}{r}{t}{U} \\ k \in U}} \kappa(j, k)
  \left(r - d_t(i, j)\right) \right).
 \end{align*}
 The \emph{boundary} of the ball \(\partial\ball{i}{r}{t}{U}\) is the partial derivative of volume with respect to
 the radius:
 \begin{align*}
  \partial \ball{i}{r}{t}{U} \coloneqq \left(f(t) - f(t-1)
  \right)\left(\frac{\partial\vol{\ball{i}{r}{t}{U}}}{\partial r}\right) =
  \left(f(t)-f(t-1)\right)\left(\sum_{\substack{j \in \ball{i}{r}{t}{U}\\
  k \notin \ball{i}{r}{t}{U} \\ k \in U}} \kappa(j, k)\right).
 \end{align*}
The \emph{expansion} \(\phi\left(\ball{i}{r}{t}{U}\right)\) of the ball \(\ball{i}{r}{t}{U}\) is defined as 
the ratio of its boundary to its volume, i.e.,
\begin{align*}
 \phi\left(\ball{i}{r}{t}{U}\right) \coloneqq \frac{\partial{\ball{i}{r}{t}{U}}}{\vol{\ball{i}{r}{t}{U}}}.
\end{align*}
\end{definition}

Note that the expansion \(\phi\left(\ball{i}{r}{t}{U}\right)\) of Definition~\ref{def:f-volume} is the same as
in Definition~\ref{def:volume} since the \(\left(f(t) - f(t-1)\right)\) term cancels out. 
Thus one could run Algorithm~\ref{algo:rounding} with the same notion of volume as in Definition~\ref{def:volume},
however in that case the analogous versions of Theorems~\ref{thm:approx} and \ref{thm:main}
do not follow as naturally.
The following is then a simple corollary of Theorem~\ref{thm:approx}.

\begin{corollary}\label{cor:f-approx}
 Let \(m_\varepsilon\coloneqq \left\lfloor \frac{n-1}{1 + \varepsilon}\right\rfloor\) as in 
Algorithm~\ref{algo:rounding}. 
 Let \(\left\{x^t_{ij} \mid t \in [n-1], i, j \in V\right\}\) be the output of Algorithm~\ref{algo:rounding} using 
 the notion of volume, boundary and expansion as in Definition~\ref{def:f-volume}, on a
 feasible solution to \ref{lp:f-ultrametric} and any choice of \(\varepsilon \in (0, 1)\). For 
 any \(t \in [m_\varepsilon]\), we have that \(x^t_{ij}\) is feasible
 for the layer-\(\lfloor (1 + \varepsilon)t \rfloor\) problem
 \ref{ilp:f-layer} and there is a constant \(c(\varepsilon) > 0\) depending only on \(\varepsilon\) such that
 \begin{align*}
  \sum_{\{i, j\} \in E(K_n)} \kappa(i, j) \left(f(t) - f(t-1)\right) x^t_{ij} \le \left(c(\varepsilon)\log{n}\right)
  \gamma_t.
 \end{align*}
\end{corollary}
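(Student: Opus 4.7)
The plan is to lift the proof of Theorem~\ref{thm:approx} essentially verbatim, exploiting the fact that the objective, volume, and boundary in Definition~\ref{def:f-volume} all differ from those in Definition~\ref{def:volume} by the same multiplicative factor $f(t) - f(t-1)$, which cancels in the expansion $\phi(\mathcal{B}_U(i,r,t))$. Consequently, Algorithm~\ref{algo:rounding} run on a feasible solution $\{d_t\}_{t\in[n-1]}$ of \ref{lp:f-ultrametric} produces, at each layer $t$, \emph{identical} collections $\mathcal{C}_t$ to those it would produce on the same spreading metrics viewed as a solution of \ref{lp:ultrametric}. In particular the stopping condition on line~\ref{algo:line:subball} is unchanged, and the rounding decisions depend only on the metrics $\{d_t\}$ (which themselves are $f$-independent).

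Given this, I would first handle feasibility. The fact that $x^t_{ij}$ is feasible for the layer-$\lfloor(1+\varepsilon)t\rfloor$ problem \ref{ilp:f-layer} is a purely combinatorial statement about the sizes of the cliques in $\mathcal{C}_t$, and \ref{ilp:f-layer} differs from \ref{ilp:layer} only in its objective. Thus the argument from Theorem~\ref{thm:approx} (invoking Lemma~\ref{lem:spreading} to bound ball sizes by $(1+\varepsilon)t$ and Lemma~\ref{lem:cliques} to translate this into feasibility) carries over without modification.

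For the cost bound, I would re-run the volumetric estimate in Theorem~\ref{thm:approx} after noting two facts. First, with the new definitions one still has
\begin{align*}
\vol{\ball{i}{r}{t}{U}} \in \left[\frac{\gamma^U_t}{n\log n},\ \left(1 + \frac{1}{n\log n}\right)\gamma^U_t\right],
\end{align*}
since the affine offset plus the internal-edge and boundary-edge contributions, all now scaled by $f(t)-f(t-1)$, are exactly the terms in the new $\gamma^U_t$. Second, each edge crossing a ball contributes its weight $\kappa(j,k)$ multiplied by $f(t)-f(t-1)$ to the cost, so summing $\partial\ball{i}{r}{t}{U} = \phi(\mathcal{B}_U(i,r,t))\cdot \vol{\ball{i}{r}{t}{U}}$ over the disjoint balls in $\mathcal{C}_t$ and plugging in the stopping condition yields the same logarithmic bound as before, with $\gamma_t$ now denoting the new layer-$t$ quantity. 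The counting argument showing that each crossing edge is counted in at most two balls, and that the affine term contributes at most $\gamma_t/\log n$, is unchanged.

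The only place that might require a small check is the telescoping step: I would verify that monotonicity of $\vol{\ball{i}{r}{t}{U}}$ in $r$ and its differentiability except at finitely many $r$ (Lemma~\ref{lem:differentiable}) both still hold with the new volume, but since the new volume is simply a positive scalar times a sum of the same form as before (plus the affine constant), both properties are inherited. Because none of the structural ingredients change, there is no genuine obstacle here; the corollary is essentially a bookkeeping exercise that recognizes the $f(t)-f(t-1)$ factor as a harmless global scale inside each layer $t$.
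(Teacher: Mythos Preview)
Your proposal is correct and takes essentially the same approach as the paper, which simply observes that the expansion in Definition~\ref{def:f-volume} coincides with that of Definition~\ref{def:volume} (the $f(t)-f(t-1)$ factor cancels) and then declares the result a direct corollary of Theorem~\ref{thm:approx}. Your key observation---that the algorithm's collections $\mathcal{C}_t$ are literally identical to those produced in the $f=\mathrm{id}$ case---already suffices: once you know $x^t_{ij}$ is unchanged, you can just multiply both sides of the bound in Theorem~\ref{thm:approx} by $f(t)-f(t-1)$, which is slightly cleaner than re-running the volumetric estimate with the rescaled volume.
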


Corollary~\ref{cor:f-approx} allows us to prove the analogue of Theorem~\ref{thm:main}, i.e., we can use 
Algorithm~\ref{algo:rounding} to get an ultrametric that is an \(f\)-image of a non-trivial ultrametric
and whose cost is at most \(O(\log{n})\) times the cost of an optimal hierarchical clustering according to 
cost function~\eqref{fcost}.

\begin{theorem}\label{thm:f-main}
 Let \(\{x^t_{ij} \mid t \in \left[m_\varepsilon\right], i,j \in V\}\) be the output of Algorithm~\ref{algo:rounding}
 using 
 the notion of volume, boundary, and expansion as in Definition~\ref{def:f-volume}
on an optimal solution \(\{d_t\}_{t\in [n-1]}\) of \ref{lp:f-ultrametric} for any choice of 
\(\varepsilon \in (0, 1)\).
Define the sequence \(\left\{y^t_{ij}\right\}\) for every \(t \in [n-1]\) and \(i, j \in V\) as
\begin{align*}
y^t_{ij} \coloneqq \begin{cases} 
		    x^{\lfloor t/(1 + \varepsilon) \rfloor}_{ij} \quad &\text{if  } t > 1 + \varepsilon\\
		    1 \quad &\text{if } t \le 1 + \varepsilon.
                   \end{cases}
\end{align*}
Then \(y^t_{ij}\) is feasible for \ref{ilp:f-ultrametric} and there is a constant \(c(\varepsilon) > 0\) such that
\begin{align*}
\sum_{t=1}^{n-1} \sum_{\{i, j\}\in E(K_n)} \kappa(i, j) \left(f(t) - f(t-1)\right)y^t_{ij} \le 
\left(c(\varepsilon) \log{n}\right) \OPT
\end{align*}
where \(\OPT\) is the optimal solution to \ref{ilp:f-ultrametric}.
\end{theorem}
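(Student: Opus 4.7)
The plan is to mirror the proof of Theorem~\ref{thm:main}, substituting Corollary~\ref{cor:f-approx} for Theorem~\ref{thm:approx} and carefully carrying the weights $F(t) := f(t) - f(t-1)$ through the telescoping argument. The proof splits naturally into a feasibility check and a cost bound.

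For feasibility, I would observe that \ref{ilp:f-ultrametric} and \ref{ilp:ultrametric} have identical constraint sets and differ only in their objective. Moreover, the factor $F(t)$ appears in both the numerator and denominator of the expansion $\phi(\ball{i}{r}{t}{U})$ in Definition~\ref{def:f-volume}, so Algorithm~\ref{algo:rounding} produces the same partitions $\mathcal{C}_s$ as in the non-$f$ case. Thus the verification that $y^t_{ij}$ satisfies the nested-cliques condition (Condition~\ref{lem:inter-layer:cond1}) and the realization condition (Condition~\ref{lem:inter-layer:cond2}) of Lemma~\ref{lem:inter-layer} carries over verbatim from Theorem~\ref{thm:main}: the hierarchical refinement $\mathcal{C}_s \preceq \mathcal{C}_{s+1}$ gives nesting, while the early-termination branch at line~\ref{algo:line:smallball} ensures realization. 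Hence $y^t_{ij}$ is feasible for \ref{ilp:f-ultrametric}.

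For the cost bound, I would split the outer sum at $t = 1 + \varepsilon$. Since $\varepsilon \in (0,1)$, only $t = 1$ lies in the first regime, and because $\mathcal{C}_1$ decomposes $V$ into singletons, $y^1_{ij} = 1 = x^1_{ij}$, so Corollary~\ref{cor:f-approx} applied at $t = 1$ gives $F(1)\sum_{i,j}\kappa(i,j) y^1_{ij} \le c(\varepsilon)\log(n)\,\gamma_1$. For $t > 1 + \varepsilon$, substitute $y^t_{ij} = x^s_{ij}$ with $s = \lfloor t/(1+\varepsilon)\rfloor$ and swap the order of summation to obtain
\begin{align*}
\sum_{t > 1+\varepsilon} F(t) \sum_{i,j}\kappa(i,j) x^s_{ij} = \sum_{s \ge 1} \Bigl(\sum_{t \in T_s} F(t)\Bigr) \sum_{i,j}\kappa(i,j) x^s_{ij},
\end{align*}
where $T_s = \{t : \lfloor t/(1+\varepsilon)\rfloor = s\}$ is a consecutive interval and the inner weight sum telescopes to $f(t_{s,\max}) - f(t_{s,\min}-1)$. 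Applying Corollary~\ref{cor:f-approx} to the inner sum gives $\sum_{i,j}\kappa(i,j) x^s_{ij} \le c(\varepsilon)\log(n)\,\gamma_s/F(s)$, so the entire contribution is bounded by $c(\varepsilon)\log(n)\sum_s \bigl(\sum_{t \in T_s} F(t)/F(s)\bigr)\gamma_s$.

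The main obstacle is then the elementary comparison $\sum_{t \in T_s} F(t) \le c'(\varepsilon)\,F(s)$, which plays the role that $|T_s| \le 1 + \varepsilon + O(1)$ plays in Theorem~\ref{thm:main}: it collapses the double-counted contribution of each $s$ to a constant multiple of $\gamma_s$, at which point summing over $s$ and invoking $\OPT \ge \OPT(\text{LP}) = \sum_s \gamma_s$ finishes the argument. I expect that this comparison is where the dependence of the constant on $f$ (or on the quantity $c_n$ from the Dasgupta analysis) would enter; for the nice choices of $f$ mentioned after \eqref{fcost} the comparison is immediate, and the final bound follows in the same form as in Theorem~\ref{thm:main}.
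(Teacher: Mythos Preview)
Your approach is exactly what the paper's one-line proof (``Immediate from Corollary~\ref{cor:f-approx} and Theorem~\ref{thm:main}'') intends: the feasibility argument transfers verbatim because the feasible regions of \ref{ilp:ultrametric} and \ref{ilp:f-ultrametric} coincide, and the cost argument is the proof of Theorem~\ref{thm:main} with Corollary~\ref{cor:f-approx} in place of Theorem~\ref{thm:approx}. You have also correctly isolated the one genuinely new step, namely the comparison \(\sum_{t\in T_s} F(t)\le c'(\varepsilon)\,F(s)\), which replaces the simple count \(|T_s|\le 2\) from the linear case.

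There is one concrete slip. You assert that this comparison is ``immediate'' for the sample functions listed after~\eqref{fcost}, but it fails for \(f(x)=e^x-1\). Here \(F(t)=(e-1)e^{t-1}\), the block \(T_s\) sits near \([(1+\varepsilon)s,(1+\varepsilon)(s+1))\), and hence
\[
\frac{\sum_{t\in T_s}F(t)}{F(s)}\;\asymp\;\frac{e^{(1+\varepsilon)s}}{e^{s}}\;=\;e^{\varepsilon s},
\]
which is unbounded in \(s\). Equivalently, after Abel summation the needed inequality becomes \(f(b_v)\le C\,f(v)\) with \(b_v\approx(1+\varepsilon)v\), i.e.\ a doubling condition on \(f\); polynomials and \(\log(1+x)\) satisfy it, \(e^x-1\) does not. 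So your reduction to this comparison is right, but the blanket claim that it holds for the listed \(f\) is not --- and the paper's terse ``immediate'' hides the same issue. For \(f\) with the doubling property your argument goes through and yields a constant depending on \(\varepsilon\) and on the doubling constant of \(f\); for exponential \(f\) no \(\varepsilon\)-only constant suffices, and the statement of Theorem~\ref{thm:f-main} should be read with that caveat.
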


\begin{proof}
 Immediate from Corollary~\ref{cor:f-approx} and Theorem~\ref{thm:main}.
\end{proof}

Finally we put everything together to obtain the corresponding Algorithm~\ref{algo:f-obtain-hierarchy}
that outputs a hierarchical clustering of \(V\) of cost at most \(O\left(\log{n}\right)\) times the optimal
clustering according to cost function~\eqref{fcost}.

\begin{corollary}
Given a data set \(V\) of \(n\) points and a similarity function \(\kappa: V \times V \to \R\),
Algorithm~\ref{algo:f-obtain-hierarchy} returns a hierarchical clustering \(T\) of \(V\) satisfying
\begin{align*}
 \cost_f(T) \le O\left(a_n +\log{n}\right) \min_{T' \in \mathcal{T}} \cost_f(T'),
\end{align*}
where \(a_n \coloneqq \max_{n' \in [n]} f(n') - f(n'-1)\).
Moreover Algorithm~\ref{algo:f-obtain-hierarchy} runs in time polynomial in 
\(n\), \(\log{f(n)}\) and \(\log\left(\max_{i, j \in V}\kappa(i, j)\right)\).
\end{corollary}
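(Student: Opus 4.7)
The plan mirrors the proof of the analogous corollary for \(\cost\), combining Theorem~\ref{thm:f-main} with the tree-construction of Algorithm~\ref{algo:buildtree} and handling a controlled off-by-one discrepancy that arises because the LP variables encode \(f(\size{\leaves(T[\lca(i,j)])} - 1)\) rather than \(f(\size{\leaves(T[\lca(i,j)])})\). First I would apply Theorem~\ref{thm:f-main} to an optimal solution of \ref{lp:f-ultrametric} to obtain an integral vector \(y^t_{ij}\) feasible for \ref{ilp:f-ultrametric} whose objective value is at most \(O(\log n)\cdot \OPT\), where \(\OPT\) is the optimum of \ref{ilp:f-ultrametric}. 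By Lemma~\ref{lem:f-image-non-trivial} combined with Corollary~\ref{cor:equivalent}, the map \(d'(i,j) \coloneqq \sum_{t=1}^{n-1} y^t_{ij}\) is a non-trivial ultrametric on \(V\), and Algorithm~\ref{algo:buildtree} applied to \(d'\) produces a hierarchy \(T\) satisfying \(\size{\leaves(T[\lca(i,j)])} - 1 = d'(i,j)\). The telescoping identity
\[
\sum_{t=1}^{d'(i,j)} \bigl(f(t) - f(t-1)\bigr) \;=\; f(d'(i,j))
\]
then identifies the LP value of \(y\) with \(\sum_{\{i,j\}} \kappa(i,j)\, f(\size{\leaves(T[\lca(i,j)])} - 1)\).

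The central step is to translate this LP value into a bound on \(\cost_f(T)\). Using the termwise bound \(f(k) - f(k-1) \le a_n\) I obtain
\[
\cost_f(T) \;=\; \sum_{\{i,j\}} \kappa(i,j)\, f(\size{\leaves(T[\lca(i,j)])}) \;\le\; \sum_{\{i,j\}} \kappa(i,j)\, f(\size{\leaves(T[\lca(i,j)])} - 1) + a_n K,
\]
where \(K \coloneqq \sum_{\{i,j\}} \kappa(i,j)\). Combining with Theorem~\ref{thm:f-main} gives \(\cost_f(T) \le O(\log n)\,\OPT + a_n K\). Let \(\widehat T\) be an optimal hierarchy for \(\cost_f\); since the ILP value attained by \(\widehat T\) is \(\sum \kappa(i,j)\, f(\size{\leaves(\widehat T[\lca(i,j)])} - 1) \le \cost_f(\widehat T)\), we conclude \(\OPT \le \cost_f(\widehat T)\) and hence \(\cost_f(T) \le O(\log n)\,\cost_f(\widehat T) + a_n K\).

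The main obstacle, and the step that determines the final approximation ratio, is converting the additive \(a_n K\) term into a multiplicative factor on \(\cost_f(\widehat T)\). For every distinct pair \(i,j\) we have \(\size{\leaves(\widehat T[\lca(i,j)])} \ge 2\), so \(\cost_f(\widehat T) \ge f(2)\, K\) (which is positive since \(f\) is strictly increasing with \(f(0)=0\)); under the natural normalization \(f(2) \ge 1\) this yields \(a_n K \le a_n\,\cost_f(\widehat T)\), and the two bounds combine to the claimed \(O(a_n + \log n)\) approximation ratio. Without such normalization the right factor is really \(O\bigl(\log n + a_n/f(2)\bigr)\); either way the approximation degrades gracefully with the worst-case jump of \(f\).

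For the runtime, the three stages of Algorithm~\ref{algo:f-obtain-hierarchy} each run in polynomial time: \ref{lp:f-ultrametric} is solvable in time polynomial in \(n\), \(\log f(n)\), and \(\log \max_{i,j} \kappa(i,j)\) by the argument of Lemma~\ref{lem:polytime}, because the separation oracle for constraint~\eqref{lp:f-spreading} is identical to the one used for \ref{lp:ultrametric} and only the objective coefficients change; Algorithm~\ref{algo:rounding} runs in polynomial time by Corollary~\ref{cor:f-approx}; and the final invocation of Algorithm~\ref{algo:buildtree} runs in \(O(n^3)\) time by Lemma~\ref{lem:bijection-ultrametric}.
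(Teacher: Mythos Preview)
Your proposal is correct and follows essentially the same route as the paper: invoke Theorem~\ref{thm:f-main} to bound the ILP objective by \(O(\log n)\,\OPT\), identify that objective with \(\sum_{\{i,j\}}\kappa(i,j)\,f(\size{\leaves(T[\lca(i,j)])}-1)\) via Corollary~\ref{cor:equivalent} and Lemma~\ref{lem:f-image-non-trivial}, correct for the off-by-one using \(f(k)-f(k-1)\le a_n\), and absorb the additive \(a_n K\) term multiplicatively via \(K\le \cost_f(\widehat T)\). You are in fact more explicit than the paper about this last step: the paper simply asserts \(K\le \cost_f(\widehat T)\) ``since \(f\) is an increasing function,'' whereas you correctly note that this uses \(f(2)\ge 1\) (otherwise the bound reads \(O(\log n + a_n/f(2))\)). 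One minor citation fix: the polynomial running time of Algorithm~\ref{algo:rounding} is established in Theorem~\ref{thm:approx}, not in Corollary~\ref{cor:f-approx}.
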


\begin{proof}
Let \(\widehat{T}\) be an optimal hierarchical clustering according to cost function~\eqref{fcost}.
By Corollary~\ref{cor:equivalent}, Lemma~\ref{lem:f-image-non-trivial} and Theorem~\ref{thm:f-main}
it follows that we can find a hierarchical clustering \(T\) satisfying
\begin{align*}
 \sum_{\{i, j\} \in E(K_n)} \kappa(i, j) f\left(\size{\leaves(T[\lca(i, j)]}-1\right)
 \le O(\log{n})\left(
 \sum_{\{i, j\} \in E(K_n)} \kappa(i, j) f\left(\size{\leaves(\widehat{T}[\lca(i, j)]}-1\right)\right).
\end{align*}
Recall that \(\cost_f(T) \coloneqq \sum_{\{i, j\} \in E(K_n)} \kappa(i, j) 
f\left(\size{\leaves(T[\lca(i, j)]}\right)\). Let \(K \coloneqq \sum_{\{i, j\} \in E(K_n)}
\kappa(i, j)\). Note that for any hierarchical clustering \(T'\) we have \(K \le \cost_f(T')\)
since \(f\) is an increasing function. From the above expression we infer that
\begin{align*}
 \cost_f(T) - a_n K \le \sum_{\{i, j\} \in E(K_n)} \kappa(i, j) f\left(\size{\leaves(T[\lca(i, j)]}-1\right)
 \le O(\log{n})\cost_f(\widehat{T}),
\end{align*}
and so \(\cost_f(T) \le O(\log{n}) \cost_f(\widehat{T}) + a_n K \le O(a_n + \log{n})\cost_f(\widehat{T})\).
We can find an optimal solution to \ref{lp:f-ultrametric} due to Lemma~\ref{lem:polytime} using the 
Ellipsoid algorithm in time polynomial in \(n\), \(\log{f(n)}\), and \(\log\left(\max_{i, j \in V}\kappa(i, j)\right)\). 
Note the additional \(\log{f(n)}\) in the running time since now 
we need to binary search over the interval \(\left[0, \max_{i, j \in V} \kappa(i, j)\cdot f(n) \cdot n\right]\).
Algorithm~\ref{algo:rounding} runs in time polynomial in \(n\) due to Theorem~\ref{thm:approx}.
Finally, Algorithm~\ref{algo:buildtree} runs in time \(O\left(n^3\right)\) due to Lemma~\ref{lem:bijection-ultrametric}.
\end{proof}

\begin{algorithm}[!htb]\label{algo:f-obtain-hierarchy}
\DontPrintSemicolon 
\KwIn{Data set \(V\) of \(n\) points, similarity function \(\kappa: V \times V \to \R_{\ge 0}\), \(f: \R_{\ge 0} \to 
\R_{\ge 0}\) strictly increasing with \(f(0) = 0\)}
\KwOut{Hierarchical clustering of \(V\)}
Solve \ref{lp:f-ultrametric} to obtain optimal sequence of spreading metrics \(\left\{d_t \mid d_t: V \times V \to 
[0, 1]\right\}\)\\
Fix a choice of \(\varepsilon \in (0, 1)\)\\
\(m_\varepsilon \gets \left\lfloor \frac{n-1}{1 + \varepsilon}\right\rfloor\)\\
Let \(\left\{x^t_{ij}\mid t \in [m_\varepsilon]\right\}\) be the output of 
Algorithm~\ref{algo:rounding} on \(V, \kappa, \{d_t\}_{t \in [n-1]}\)\\
Let \(y^t_{ij} \coloneqq \begin{cases} x^{\lfloor t/(1 + \varepsilon)\rfloor}_{ij} &\text{ if } t > 1 + \varepsilon
\\ 1 \qquad&\text{ if } t \le 1 + \varepsilon\end{cases}\)  for every \(t \in [n-1], i, j \in E(K_n)\)\\
\(d(i, j) \gets \sum_{t = 1}^{n-1} \left(f(t) - f(t-1)\right)y^t_{ij}\) for every \(i, j \in E(K_n)\)\\
\(d(i, i) \gets 0\) for every \(i \in V\)\\
Let \(r, T\) be the output of Algorithm~\ref{algo:buildtree} on \(V, f^{-1}(d)\)\\
\Return{\(r, T\)}\;
\caption{Hierarchical clustering of \(V\) for cost function~\eqref{fcost}}
\end{algorithm}

\section{Experiments}
Finally, we describe the experiments we performed.
For small data sets \ref{ilp:ultrametric} and \ref{ilp:f-ultrametric}
describe integer programming formulations that allow us to compute the exact optimal hierarchical
clustering for cost functions~\eqref{cost} and \eqref{fcost} respectively.
We implement \ref{ilp:f-ultrametric} where one can plug in any strictly increasing
function \(f\) satisfying \(f(0) = 0\). In particular, setting \(f(x) = x\) gives
us \ref{ilp:ultrametric}. We use the Mixed Integer Programming (MIP) solver Gurobi \(6.5\)
\cite{gurobi}. 
Similarly, we also implement Algorithms~\ref{algo:buildtree}, \ref{algo:rounding}, and 
\ref{algo:f-obtain-hierarchy} using Gurobi as our LP solver.
Note that Algorithm~\ref{algo:f-obtain-hierarchy} needs to fix a parameter choice
\(\varepsilon \in (0, 1)\). In Sections~\ref{sec:lp-rounding} and \ref{sec:f-lp-rounding} we did not discuss
the effect of the choice of the parameter \(\varepsilon\) in detail. In particular, 
we need to choose an \(\varepsilon\) small enough such that for every \(U \subseteq V\) encountered in 
Algorithm~\ref{algo:rounding}, \(\vol{\ball{i}{\Delta}{t}{U}}\) 
is of the same sign as \(\vol{\ball{i}{0}{t}{U}}\) for every \(t \in [n-1]\), so that 
\(\log\left(\frac{\vol{\ball{i}{\Delta}{t}{U}}}{\vol{\ball{i}{0}{t}{U}}}\right)\) is defined.
In our experiments we start with a particular value of \(\varepsilon\) (say \(0.5\))
and halve it till the volumes have the same sign. 
For the sake of exposition, we limit ourselves to the following choices for the function \(f\)
\begin{align*}
\left\{x, x^2, \log(1 + x), e^x -1\right\}.
\end{align*}
By Lemma~\ref{lem:polytime} we can optimize over \ref{lp:f-ultrametric} in time polynomial
in \(n\) using the Ellipsoid method. In practice however, we use the \emph{dual simplex} method
where we separate triangle inequality constraints~\eqref{lp:f-triangle} and spreading
constraints~\eqref{lp:f-spreading} to obtain fast computations.
For the similarity function \(\kappa: V \times V \to \R\) we
limit ourselves to using \emph{cosine similarity} and the \emph{Gaussian kernel} 
with \(\sigma = 1\). They are defined formally below.

\begin{definition}[Cosine similarity]
 Given a data set \(V \in \R^m\) for some \(m \ge 0\), the cosine similarity \(\kappa_{cos}\) is defined
 as \(\kappa_{cos}(x, y) \coloneqq \frac{\sprod{x}{y}}{\norm{x}\norm{y}}\).
\end{definition}

Since the LP rounding Algorithm~\ref{algo:rounding} assumes that \(\kappa \ge 0\) in practice
we implement \(1 + \kappa_{cos}\) rather than \(\kappa_{cos}\). 

\begin{definition}[Gaussian kernel]
 Given a data set \(V \in \R^m\) for some \(m \ge 0\), the Gaussian kernel \(\kappa_{gauss}\) with 
 standard deviation \(\sigma\) is defined as 
 \(\kappa_{gauss}(x, y) \coloneqq \exp\left(-\frac{\norm{x-y}^2}{2\sigma^2}\right)\).
\end{definition}
The main aim of our experiments was to answer the following two questions.
\begin{enumerate}
 \item How good is the hierarchal clustering obtained from Algorithm~\ref{algo:f-obtain-hierarchy}
 as opposed to the true optimal output by \ref{ilp:f-ultrametric}?
 
 \item How good does Algorithm~\ref{algo:f-obtain-hierarchy} perform compared to other hierarchical
 clustering methods? 
\end{enumerate}
For the first question, we are restricted to working with small data sets since computing an optimum 
solution to \ref{ilp:f-ultrametric} is expensive. In this case we consider synthetic data sets
of small size and samples of some data sets from the UCI database \cite{Lichman:2013}. The synthetic
data sets we consider are mixtures of Gaussians in various small dimensional spaces. 
Figure~\ref{fig:ipvslp} shows a comparison of the cost of the hierarchy 
(according to cost function~\eqref{fcost}) returned by solving \ref{ilp:f-ultrametric} and by 
Algorithm~\ref{algo:f-obtain-hierarchy} for
various forms of \(f\) when the similarity function is \(\kappa_{cos}\) and \(\kappa_{gauss}\).
Note that we normalize the cost of the tree returned by \ref{ilp:f-ultrametric} and 
Algorithm~\ref{algo:f-obtain-hierarchy} by the cost of the trivial clustering \(r, T^*\) 
where \(T^*\) is the star graph with \(V\) as its leaves and \(r\) as the internal node.
In other words \(d_{T^*}(i, j) = n - 1\) for every distinct pair 
\(i, j \in V\) and so the normalized cost of any tree lies in the interval \((0, 1]\).

For the study of the second question, we consider some of the popular algorithms for hierarchical clustering 
are \emph{single linkage}, \emph{average linkage}, \emph{complete linkage}, and \emph{Ward's method}
\cite{ward1963hierarchical}.
To get a numerical handle on how good a hierarchical clustering \(T\) of \(V\) is, we prune the tree
to get the \emph{best} \(k\) flat clusters and measure its error relative to the target clustering.
We use the following notion of error also known as \emph{Classification Error}
that is standard in the literature for hierarchical clustering (see, e.g., \cite{meilua2001experimental}). 
Note that we may think of a flat \(k\)-clustering 
of the data \(V\) as a function \(h\) mapping elements of \(V\) to a label set \(\mathcal{L} \coloneqq
\{1, \dots, k\}\). Let \(S_k\) denote the group of permutations on \(k\) letters.

\begin{definition}[Classification Error]
 Given a proposed clustering \(h: V \to \mathcal{L}\) its \emph{classification error} relative to a target clustering 
 \(g : V \to \mathcal{L}\) is denoted by \(\err{g, h}\) and is defined as 
 \begin{align*}
  \err{g, h} \coloneqq \min_{\sigma \in S_k} \left[ \Pr_{x \in V} [h(x) \neq \sigma(g(x))\right]. 
 \end{align*}
\end{definition}

\begin{figure}
\centering
\begin{minipage}{.5\textwidth}
  \centering
  \includegraphics[scale=0.44]{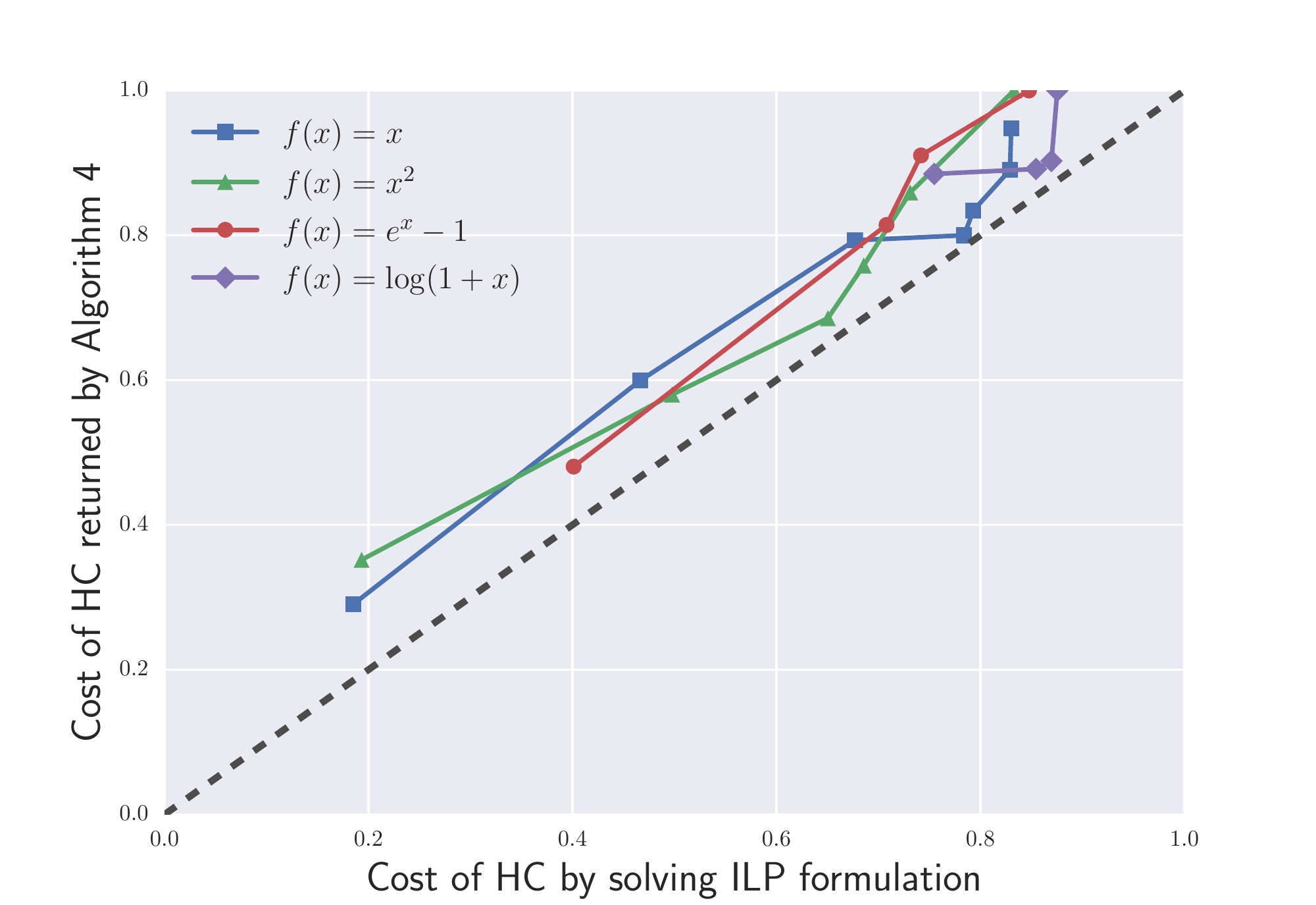}
\end{minipage}%
\begin{minipage}{.5\textwidth}
  \centering
  \includegraphics[scale=0.44]{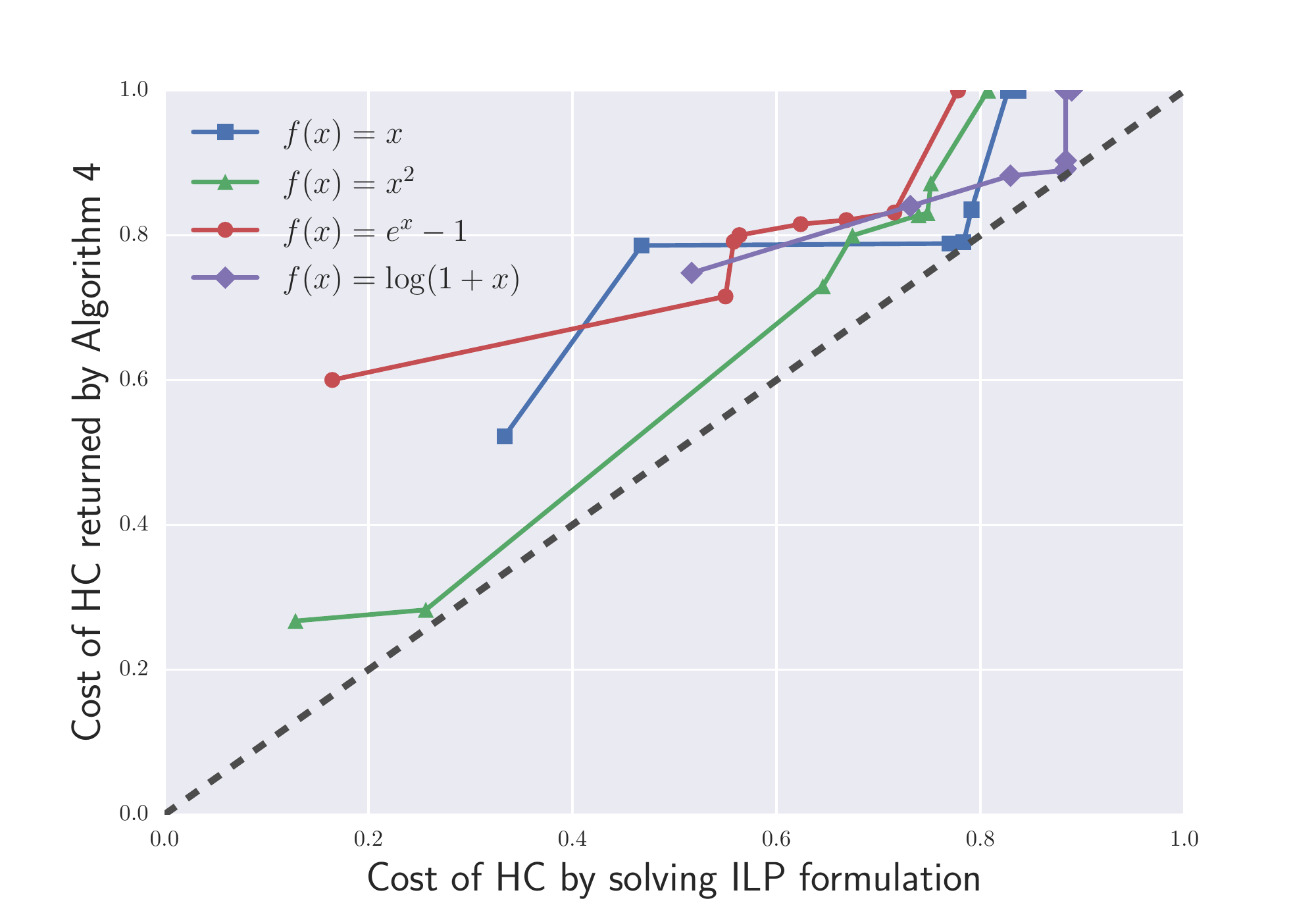}
  \end{minipage}
  \caption{Comparison of \ref{ilp:f-ultrametric} and Algorithm~\ref{algo:f-obtain-hierarchy} for 
  \(1 + \kappa_{cos}\) (left) and \(\kappa_{gauss}\) (right)}
  \label{fig:ipvslp}
\end{figure}
  
We compare the error of Algorithm~\ref{algo:f-obtain-hierarchy} with the 
various linkage based algorithms that are commonly used for 
hierarchical clustering, as well as Ward's method  and the \(k\)-means algorithm.
We test Algorithm~\ref{algo:f-obtain-hierarchy} most extensively for \(f(x) = x\) while
doing a smaller number of tests for \(f(x) \in \left\{x^2, \log(1 + x), e^x -1 \right\}\).
Note that both Ward's method and the \(k\)-means algorithm work on the squared Euclidean
distance \(\norm{x - y}_2^2\) between two points \(x, y \in V\), i.e., they both require an embedding of
the data points into a normed vector space which provides extra information that can be potentially
exploited. For the linkage based algorithms 
we use the same notion of similarity \(1 + \kappa_{cos}\) or \(\kappa_{gauss}\) that we use for Algorithm~\ref{algo:f-obtain-hierarchy}.
For comparison we use a mix of synthetic data sets as well as the Wine, Iris, Soybean-small,
Digits, Glass, and Wdbc data sets from the UCI repository \cite{Lichman:2013}. For some of the 
larger data sets, we sample uniformly at random a smaller number of data points and take the 
average of the error over the different runs. 
Figures~\ref{fig:error}, \ref{fig:error-quadratic}, \ref{fig:error-logarithm}, and
\ref{fig:error-exponential} show that the hierarchical clustering returned by Algorithm~\ref{algo:f-obtain-hierarchy} 
with \(f(x) \in \left\{x, x^2, \log(1 + x), e^x - 1\right\}\)
often has better projections into flat clusterings than the other algorithms. This is especially true
when we compare it to the linkage based algorithms, since they use the same pairwise similarity function as
Algorithm~\ref{algo:f-obtain-hierarchy}, as opposed to Ward's method and \(k\)-means.

\begin{figure}
\centering
\begin{minipage}{.5\textwidth}
  \centering
  \includegraphics[scale=0.44]{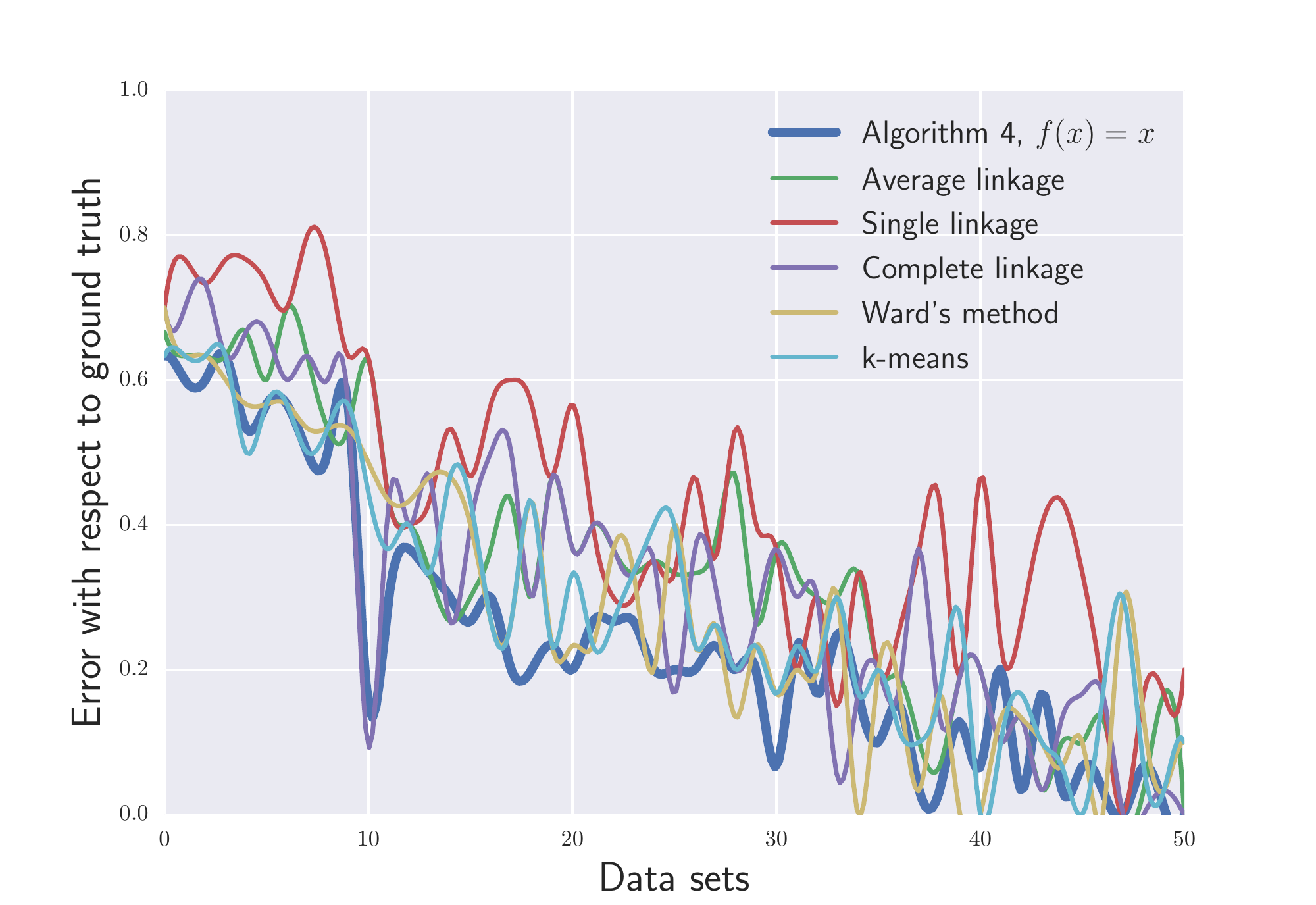}
\end{minipage}%
\begin{minipage}{.5\textwidth}
  \centering
  \includegraphics[scale=0.44]{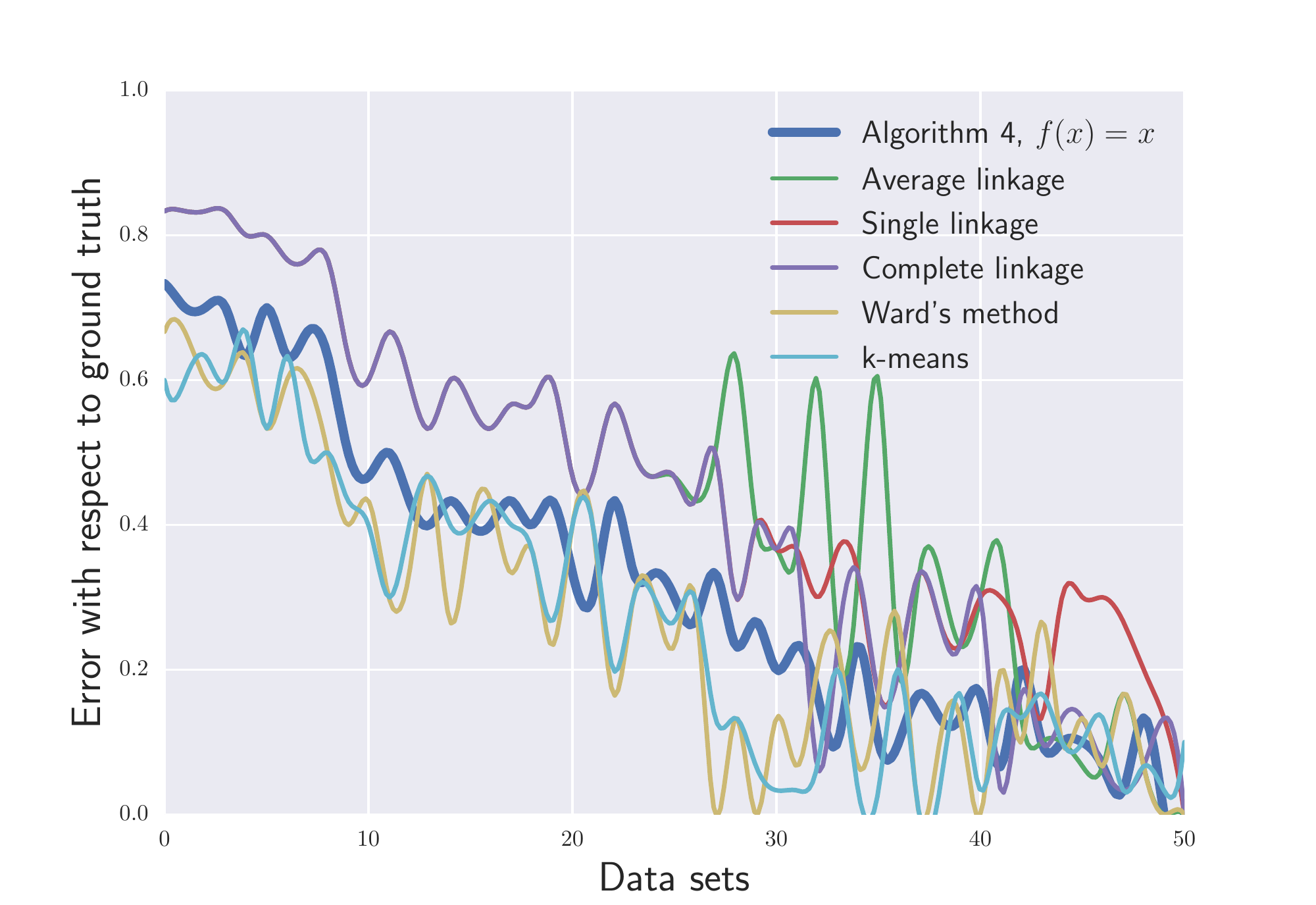}
  \end{minipage}
  \caption{Comparison of Algorithm~\ref{algo:f-obtain-hierarchy} using \(f(x) = x\), with other algorithms for clustering
  using \(1 + \kappa_{cos}\) (left) and \(\kappa_{gauss}\) (right)}
  \label{fig:error}
  \end{figure}

\begin{figure}
\centering
\begin{minipage}{.5\textwidth}
  \centering
  \includegraphics[scale=0.44]{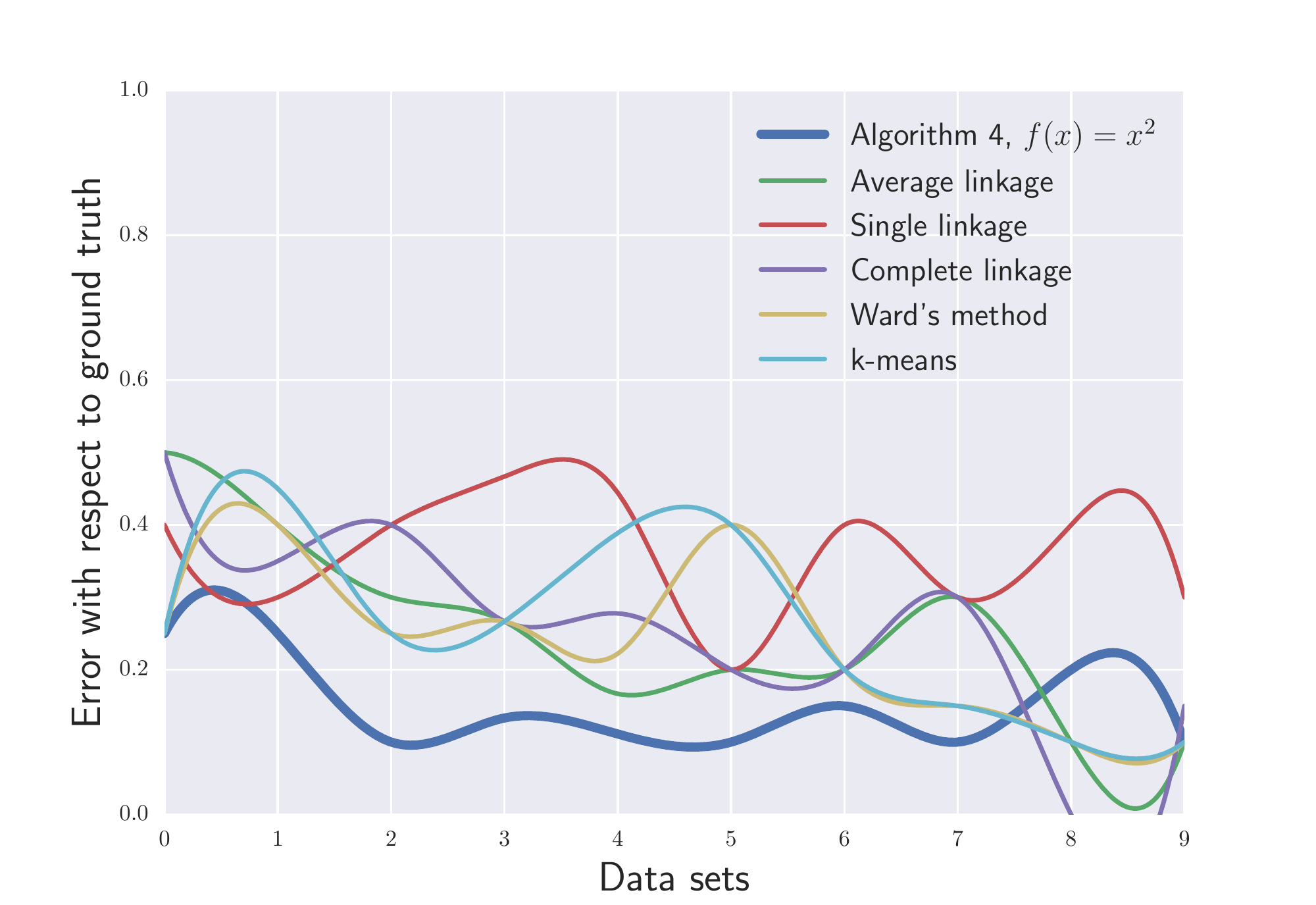}
\end{minipage}%
\begin{minipage}{.5\textwidth}
  \centering
  \includegraphics[scale=0.44]{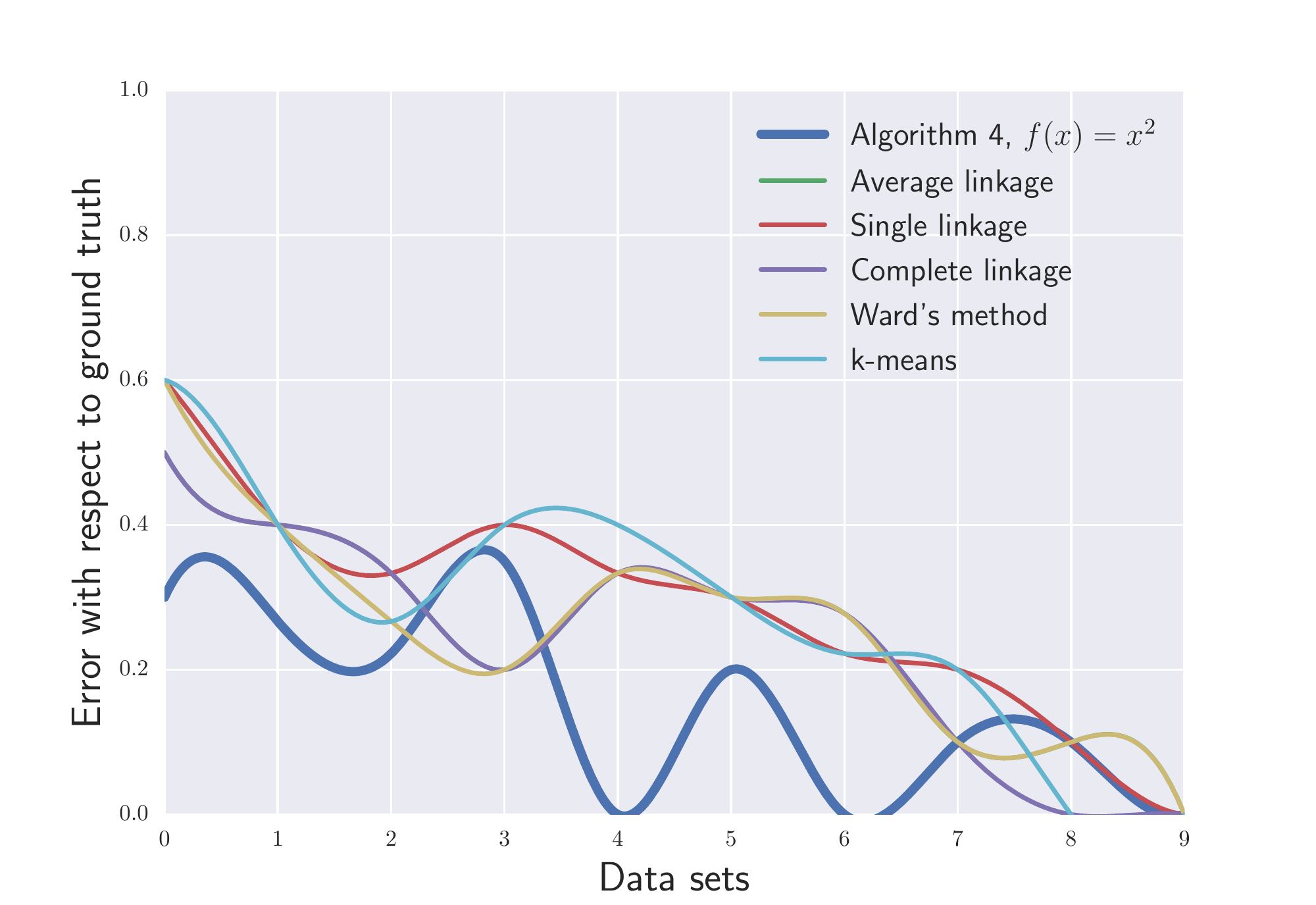}
  \end{minipage}
  \caption{Comparison of Algorithm~\ref{algo:f-obtain-hierarchy} using \(f(x) = x^2\), with other algorithms for clustering
  using \(1 + \kappa_{cos}\) (left) and \(\kappa_{gauss}\) (right)}
  \label{fig:error-quadratic}
  \end{figure}

 \begin{figure}
\centering
\begin{minipage}{.5\textwidth}
  \centering
  \includegraphics[scale=0.44]{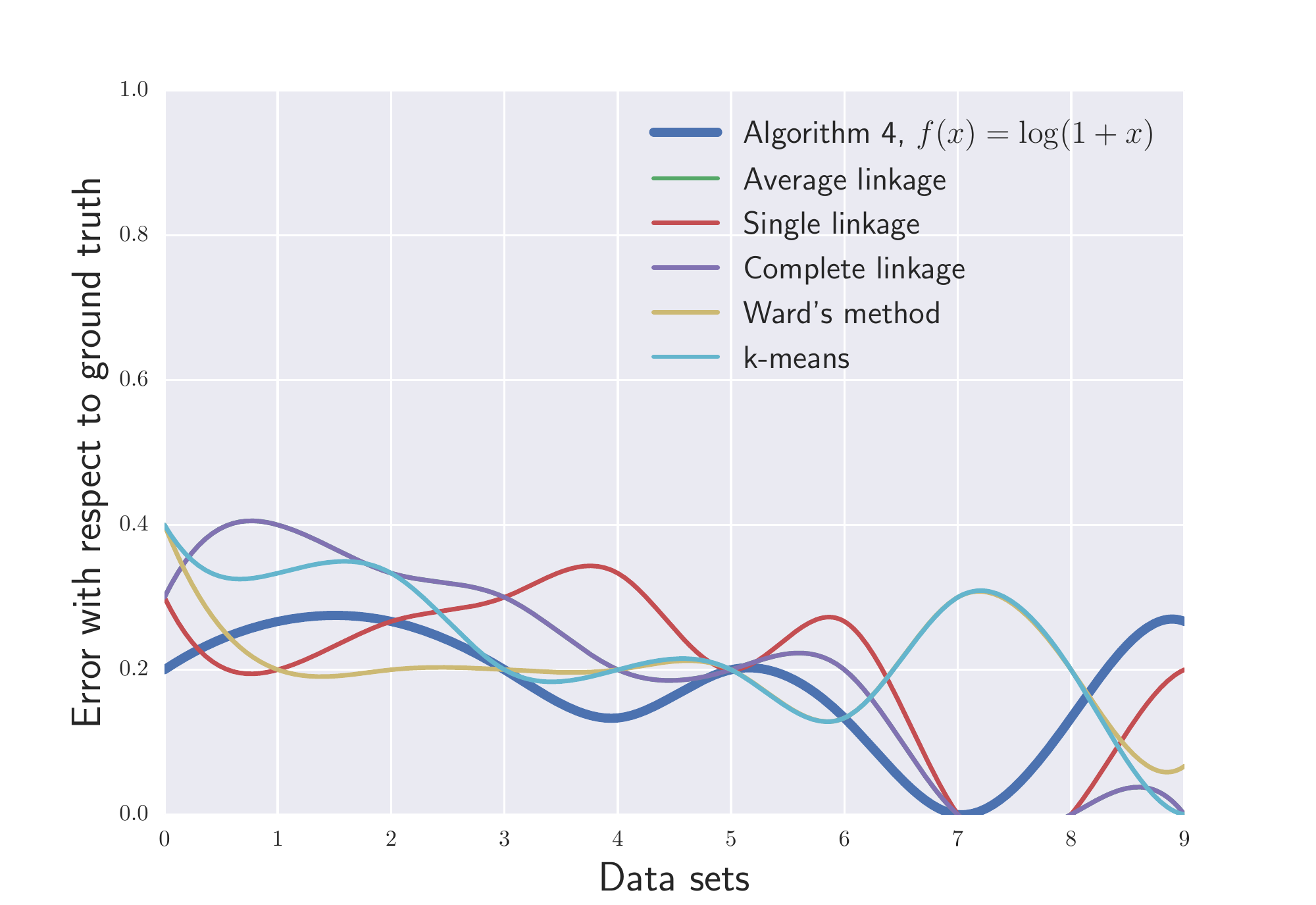}
\end{minipage}%
\begin{minipage}{.5\textwidth}
  \centering
  \includegraphics[scale=0.44]{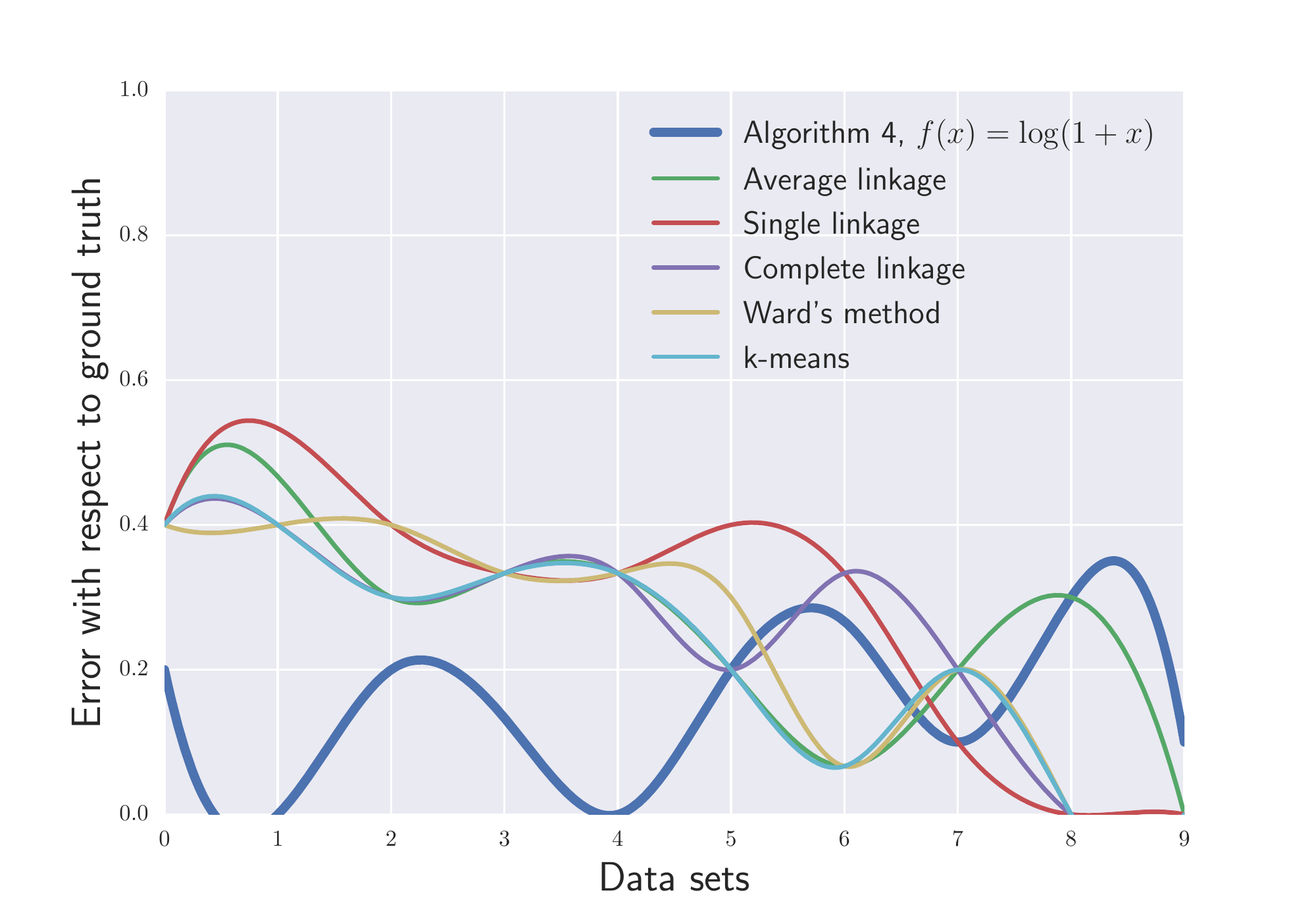}
  \end{minipage}
  \caption{Comparison of Algorithm~\ref{algo:f-obtain-hierarchy} using \(f(x) = \log(1 + x)\), with other algorithms for clustering
  using \(1 + \kappa_{cos}\) (left) and \(\kappa_{gauss}\) (right)}
  \label{fig:error-logarithm}
  \end{figure}

   \begin{figure}
\centering
\begin{minipage}{.5\textwidth}
  \centering
  \includegraphics[scale=0.44]{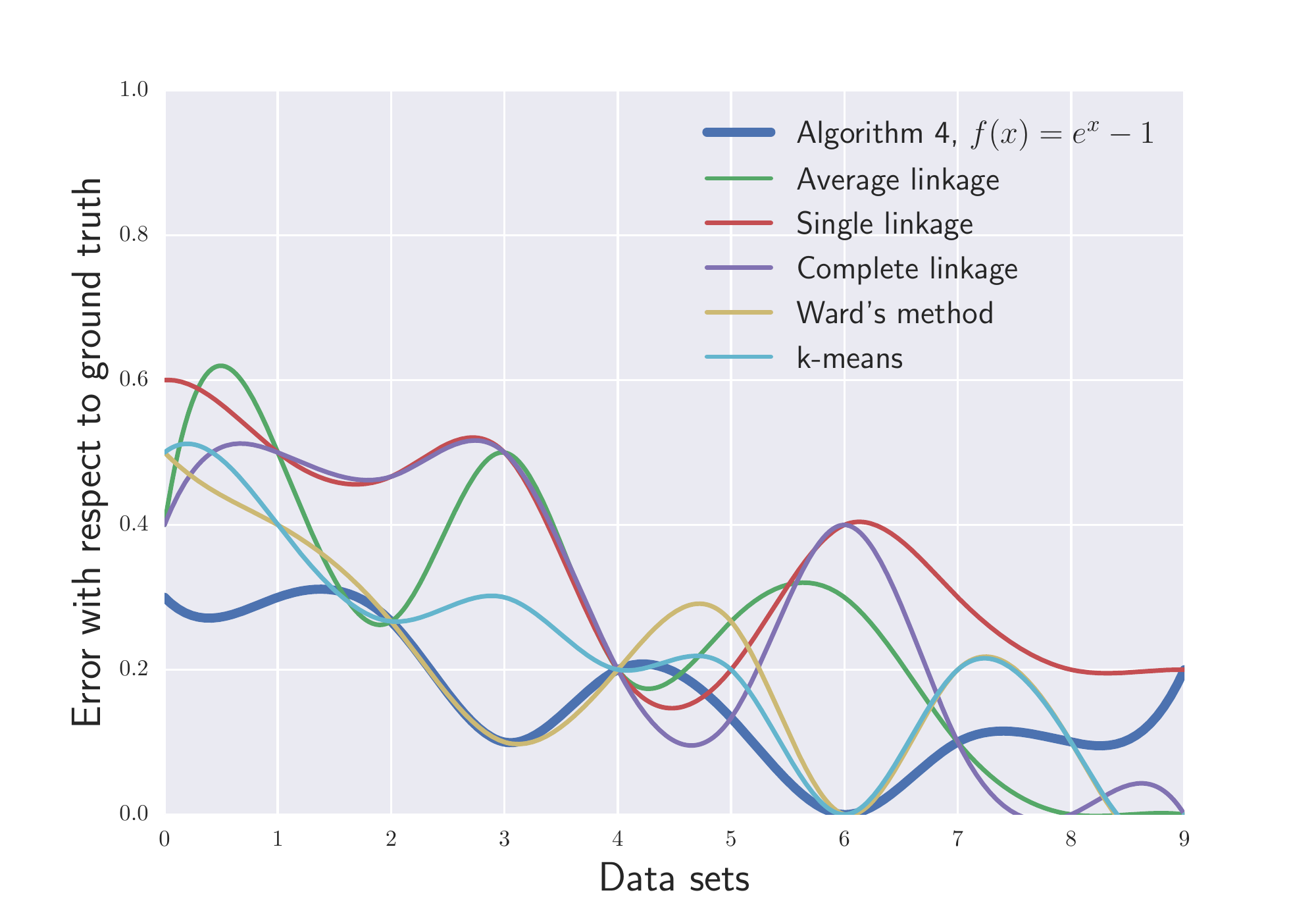}
\end{minipage}%
\begin{minipage}{.5\textwidth}
  \centering
  \includegraphics[scale=0.44]{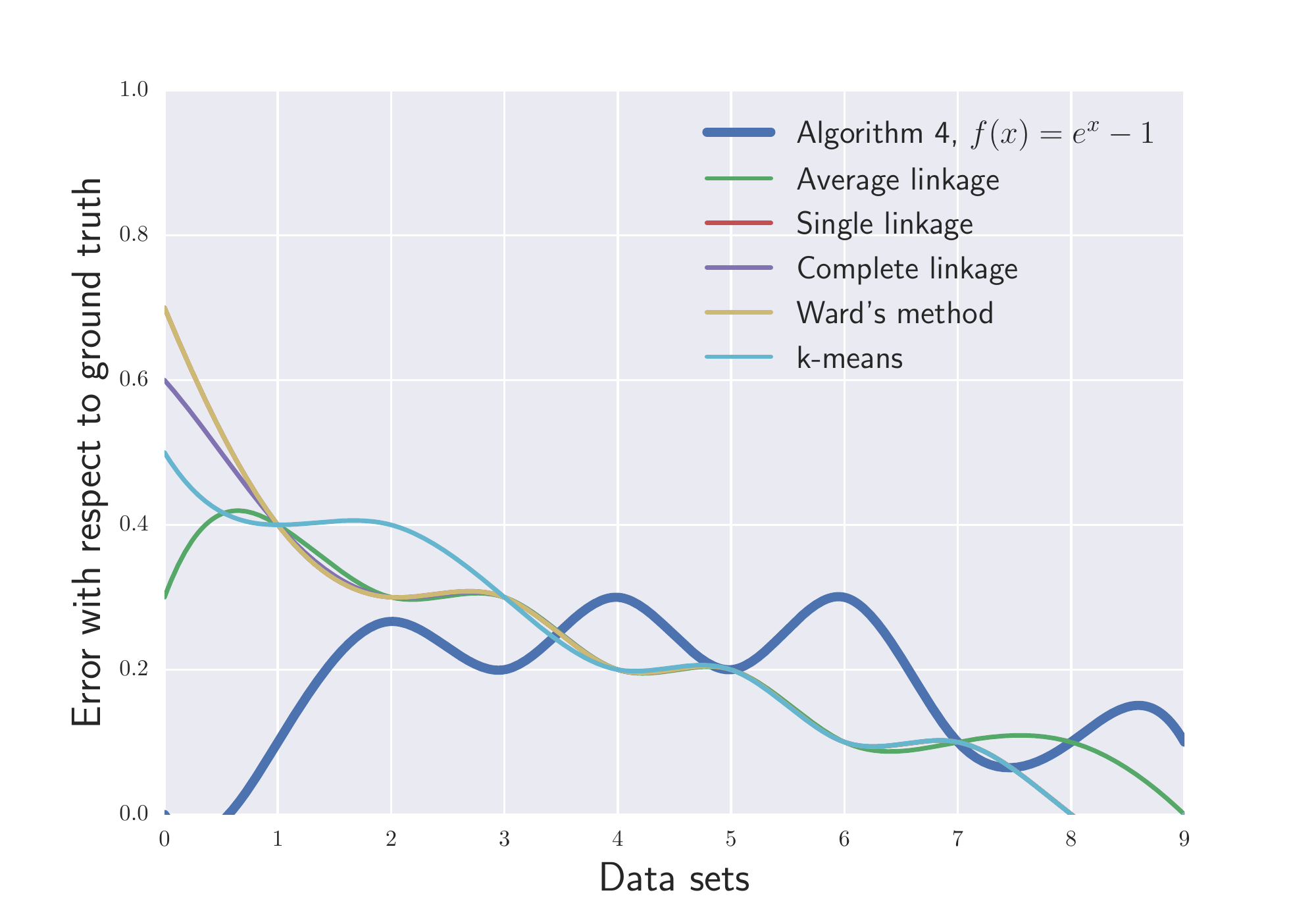}
  \end{minipage}
  \caption{Comparison of Algorithm~\ref{algo:f-obtain-hierarchy} using \(f(x) = e^x-1\), with other algorithms for clustering
  using \(1 + \kappa_{cos}\) (left) and \(\kappa_{gauss}\) (right)}
  \label{fig:error-exponential}
  \end{figure}

\section{Discussion}\label{sec:discussion}
In this work we have studied the cost functions~\eqref{cost} and \eqref{fcost} for hierarchical clustering 
given a pairwise similarity function over the data and shown an \(O(\log n)\) approximation algorithm for this problem.
As briefly mentioned in Section~\ref{sec:preliminaries} however, such a cost function is not unique.
Further, there is an intimate connection between hierarchical clusterings and ultrametrics over discrete sets
which points to other directions for formulating a cost function over hierarchies. In particular we 
briefly mention the related notion of \emph{hierarchically well-separated trees} (HST) as defined in \cite{bartal1996probabilistic}
(see also \cite{bartal2001ramsey, bartal2003metric}). A \(k\)-HST for \(k \ge 1\) is a tree
\(T\) such that each vertex \(u \in T\) has a label \(\Delta(u) \ge 0\) such that \(\Delta(u) = 0\) 
if and only if \(u\) is a leaf of \(T\). Further, if \(u\) is a child of \(v\) in \(T\) then 
\(\Delta(u) \le \Delta(v)/k\). It is well known that any ultrametric \(d\) on a finite set \(V\) is equivalent
to a \(1\)-HST where \(V\) is the set of leaves of \(T\) and \(d(i, j) = \Delta\left(\lca(i, j)\right)\)
for every \(i, j \in V\).
Thus in the special case when \(\Delta(u) = \size{\leaves{T[u]}} - 1\) we get the cost function~\eqref{cost},
while if \(\Delta(u) = f\left(\size{\leaves{T[u]}} - 1\right)\) for a strictly increasing function \(f\)
with \(f(0) = 0\) then we get cost function \eqref{fcost}. It turns out this assumption on \(\Delta\)
enables us to prove the combinatorial results of Section~\ref{sec:main} and give a \(O(\log n)\) approximation
algorithm to find the optimal cost tree according to these cost functions. 
It is an interesting problem to investigate cost functions and algorithms for hierarchical clustering induced
by other families of \(\Delta\) that arise from a \(k\)-HST on \(V\), i.e., if the cost of \(T\) is 
defined as 

\begin{align}
 \cost_{\Delta}(T) \coloneqq \sum_{\{i, j\} \in E(K_n)} \kappa(i, j) \Delta\left(\lca(i, j)\right).
\end{align}
Note that not all choices of \(\Delta\) lead to a meaningful cost function. For example, choosing 
\(\Delta(u) = \diam{T[u]}-1\) gives rise to the following cost function
\begin{align}
 \cost(T) \coloneqq \sum_{\{i, j\} \in E(K_n)} \kappa(i, j) \dist_T(i, j)\label{altcost}
\end{align}
where \(\dist_T(i, j)\) is the length of the unique path from \(i\) to \(j\) in \(T\). In this
case, the trivial clustering \(r, T^*\) where \(T^*\) is the star graph with \(V\) as its leaves
and \(r\) as the root is always a minimizer; in other words, there is no incentive for spreading 
out the hierarchical clustering. Also worth mentioning is a long line of related work on fitting
tree metrics to metric spaces (see e.g., \cite{ailon2005fitting, racke2008optimal, fakcharoenphol2003tight}).
In this setting, the data points \(V\) are assumed to come from a metric space \(d_V\) and the 
objective is to find a hierarchical clustering \(T\) so as to minimize \(\norm{d_V - d_T}_p\).
If the points in \(V\) lie on the unit sphere and the similarity function \(\kappa\)
is the cosine similarity \(\kappa_{cos}(i, j) =  1 - d_V(i, j)/2\), then the problem of fitting a tree metric
with \(p = 2\) minimizes the same objective as cost function~\eqref{altcost}. Since \(d_V \le 1\) in this case,
the minimizer is the trivial tree \(r, T^*\) (as remarked above). In general, when
the points in \(V\) are not constrained to lie on the unit sphere, the two problems are
incomparable.

\section{Acknowledgments}
Research reported in this paper was partially supported by NSF CAREER award CMMI-1452463 and NSF grant CMMI-1333789.
We would like to thank Kunal Talwar and Mohit Singh for helpful discussions and anonymous reviewers for helping improve the
presentation of this paper.

\bibliographystyle{apalike}
\bibliography{literature.bib}

\begin{thebibliography}{}

\bibitem[Ackerman et~al., 2010]{ackerman2010characterization}
Ackerman, M., Ben-David, S., and Loker, D. (2010).
\newblock Characterization of linkage-based clustering.
\newblock In {\em COLT}, pages 270--281. Citeseer.

\bibitem[Ailon and Charikar, 2005]{ailon2005fitting}
Ailon, N. and Charikar, M. (2005).
\newblock Fitting tree metrics: Hierarchical clustering and phylogeny.
\newblock In {\em 46th Annual IEEE Symposium on Foundations of Computer Science
  (FOCS'05)}, pages 73--82. IEEE.

\bibitem[Arora et~al., 2009]{arora2009expander}
Arora, S., Rao, S., and Vazirani, U. (2009).
\newblock Expander flows, geometric embeddings and graph partitioning.
\newblock {\em Journal of the ACM (JACM)}, 56(2):5.

\bibitem[Awasthi et~al., 2015]{awasthi2015relax}
Awasthi, P., Bandeira, A.~S., Charikar, M., Krishnaswamy, R., Villar, S., and
  Ward, R. (2015).
\newblock Relax, no need to round: Integrality of clustering formulations.
\newblock In {\em Proceedings of the 2015 Conference on Innovations in
  Theoretical Computer Science}, pages 191--200. ACM.

\bibitem[Balcan et~al., 2008]{balcan2008discriminative}
Balcan, M.-F., Blum, A., and Vempala, S. (2008).
\newblock A discriminative framework for clustering via similarity functions.
\newblock In {\em Proceedings of the fortieth annual ACM symposium on Theory of
  computing}, pages 671--680. ACM.

\bibitem[Bartal, 1996]{bartal1996probabilistic}
Bartal, Y. (1996).
\newblock Probabilistic approximation of metric spaces and its algorithmic
  applications.
\newblock In {\em Foundations of Computer Science, 1996. Proceedings., 37th
  Annual Symposium on}, pages 184--193. IEEE.

\bibitem[Bartal, 2004]{bartal2004graph}
Bartal, Y. (2004).
\newblock Graph decomposition lemmas and their role in metric embedding
  methods.
\newblock In {\em European Symposium on Algorithms}, pages 89--97. Springer.

\bibitem[Bartal et~al., 2001]{bartal2001ramsey}
Bartal, Y., Bollob{\'a}s, B., and Mendel, M. (2001).
\newblock A ramsey-type theorem for metric spaces and its applications for
  metrical task systems and related problems.
\newblock In {\em Foundations of Computer Science, 2001. Proceedings. 42nd IEEE
  Symposium on}, pages 396--405. IEEE.

\bibitem[Bartal et~al., 2003]{bartal2003metric}
Bartal, Y., Linial, N., Mendel, M., and Naor, A. (2003).
\newblock On metric ramsey-type phenomena.
\newblock In {\em Proceedings of the thirty-fifth annual ACM symposium on
  Theory of computing}, pages 463--472. ACM.

\bibitem[Braun et~al., 2015]{DBLP:journals/corr/BraunPR15}
Braun, G., Pokutta, S., and Roy, A. (2015).
\newblock Strong reductions for extended formulations.
\newblock {\em CoRR}, abs/1512.04932.

\bibitem[Chan et~al., 2013]{chan2013approximate}
Chan, S.~O., Lee, J., Raghavendra, P., and Steurer, D. (2013).
\newblock Approximate constraint satisfaction requires large lp relaxations.
\newblock In {\em Foundations of Computer Science (FOCS), 2013 IEEE 54th Annual
  Symposium on}, pages 350--359. IEEE.

\bibitem[Charikar and Chatziafratis, 2016]{charikar2016approximate}
Charikar, M. and Chatziafratis, V. (2016).
\newblock Approximate hierarchical clustering via sparsest cut and spreading
  metrics.
\newblock {\em arXiv preprint arXiv:1609.09548}.

\bibitem[Charikar et~al., 1999]{charikar1999constant}
Charikar, M., Guha, S., Tardos, {\'E}., and Shmoys, D.~B. (1999).
\newblock A constant-factor approximation algorithm for the k-median problem.
\newblock In {\em Proceedings of the thirty-first annual ACM symposium on
  Theory of computing}, pages 1--10. ACM.

\bibitem[Charikar et~al., 2003]{charikar2003clustering}
Charikar, M., Guruswami, V., and Wirth, A. (2003).
\newblock Clustering with qualitative information.
\newblock In {\em Foundations of Computer Science, 2003. Proceedings. 44th
  Annual IEEE Symposium on}, pages 524--533. IEEE.

\bibitem[Charikar and Li, 2012]{charikar2012dependent}
Charikar, M. and Li, S. (2012).
\newblock A dependent lp-rounding approach for the k-median problem.
\newblock In {\em Automata, Languages, and Programming}, pages 194--205.
  Springer.

\bibitem[Dasgupta, 2016]{DBLP:conf/stoc/Dasgupta16}
Dasgupta, S. (2016).
\newblock A cost function for similarity-based hierarchical clustering.
\newblock In Wichs, D. and Mansour, Y., editors, {\em Proceedings of the 48th
  Annual {ACM} {SIGACT} Symposium on Theory of Computing, {STOC} 2016,
  Cambridge, MA, USA, June 18-21, 2016}, pages 118--127. {ACM}.

\bibitem[Dasgupta and Long, 2005]{dasgupta2005performance}
Dasgupta, S. and Long, P.~M. (2005).
\newblock Performance guarantees for hierarchical clustering.
\newblock {\em Journal of Computer and System Sciences}, 70(4):555--569.

\bibitem[Di~Summa et~al., 2015]{di2015finding}
Di~Summa, M., Pritchard, D., and Sanit{\`a}, L. (2015).
\newblock Finding the closest ultrametric.
\newblock {\em Discrete Applied Mathematics}, 180:70--80.

\bibitem[Even et~al., 1999]{even1999fast}
Even, G., Naor, J., Rao, S., and Schieber, B. (1999).
\newblock Fast approximate graph partitioning algorithms.
\newblock {\em SIAM Journal on Computing}, 28(6):2187--2214.

\bibitem[Even et~al., 2000]{even2000divide}
Even, G., Naor, J.~S., Rao, S., and Schieber, B. (2000).
\newblock Divide-and-conquer approximation algorithms via spreading metrics.
\newblock {\em Journal of the ACM (JACM)}, 47(4):585--616.

\bibitem[Fakcharoenphol et~al., 2003]{fakcharoenphol2003tight}
Fakcharoenphol, J., Rao, S., and Talwar, K. (2003).
\newblock A tight bound on approximating arbitrary metrics by tree metrics.
\newblock In {\em Proceedings of the thirty-fifth annual ACM symposium on
  Theory of computing}, pages 448--455. ACM.

\bibitem[Felsenstein and Felenstein, 2004]{felsenstein2004inferring}
Felsenstein, J. and Felenstein, J. (2004).
\newblock {\em Inferring phylogenies}, volume~2.
\newblock Sinauer Associates Sunderland.

\bibitem[Friedman et~al., 2001]{friedman2001elements}
Friedman, J., Hastie, T., and Tibshirani, R. (2001).
\newblock {\em The elements of statistical learning}, volume~1.
\newblock Springer series in statistics Springer, Berlin.

\bibitem[Garey et~al., 1976]{garey1976some}
Garey, M.~R., Johnson, D.~S., and Stockmeyer, L. (1976).
\newblock Some simplified np-complete graph problems.
\newblock {\em Theoretical computer science}, 1(3):237--267.

\bibitem[Garg et~al., 1996]{garg1996approximate}
Garg, N., Vazirani, V.~V., and Yannakakis, M. (1996).
\newblock Approximate max-flow min-(multi) cut theorems and their applications.
\newblock {\em SIAM Journal on Computing}, 25(2):235--251.

\bibitem[Gupta, 2005]{gupta2005lecture}
Gupta, A. (2005).
\newblock Lecture notes on approximation algorithms.
\newblock {\em Available at
  \url{https://www.cs.cmu.edu/afs/cs/academic/class/15854-f05/www/scribe/lec20.pdf}}.

\bibitem[Gurobi~Optimization, 2015]{gurobi}
Gurobi~Optimization, I. (2015).
\newblock Gurobi optimizer reference manual.

\bibitem[Jain et~al., 2003]{jain2003greedy}
Jain, K., Mahdian, M., Markakis, E., Saberi, A., and Vazirani, V.~V. (2003).
\newblock Greedy facility location algorithms analyzed using dual fitting with
  factor-revealing lp.
\newblock {\em Journal of the ACM (JACM)}, 50(6):795--824.

\bibitem[Jain and Vazirani, 2001]{jain2001approximation}
Jain, K. and Vazirani, V.~V. (2001).
\newblock Approximation algorithms for metric facility location and k-median
  problems using the primal-dual schema and lagrangian relaxation.
\newblock {\em Journal of the ACM (JACM)}, 48(2):274--296.

\bibitem[Jardine and Sibson, 1968]{jardine1968construction}
Jardine, N. and Sibson, R. (1968).
\newblock The construction of hierarchic and non-hierarchic classifications.
\newblock {\em The Computer Journal}, 11(2):177--184.

\bibitem[Jardine and Sibson, 1971]{jardine1971mathematical}
Jardine, N. and Sibson, R. (1971).
\newblock Mathematical taxonomy.
\newblock {\em London etc.: John Wiley}.

\bibitem[Krauthgamer et~al., 2009]{krauthgamer2009partitioning}
Krauthgamer, R., Naor, J.~S., and Schwartz, R. (2009).
\newblock Partitioning graphs into balanced components.
\newblock In {\em Proceedings of the twentieth Annual ACM-SIAM Symposium on
  Discrete Algorithms}, pages 942--949. Society for Industrial and Applied
  Mathematics.

\bibitem[Leighton and Rao, 1988]{leighton1988approximate}
Leighton, T. and Rao, S. (1988).
\newblock An approximate max-flow min-cut theorem for uniform multicommodity
  flow problems with applications to approximation algorithms.
\newblock In {\em Foundations of Computer Science, 1988., 29th Annual Symposium
  on}, pages 422--431. IEEE.

\bibitem[Leighton and Rao, 1999]{leighton1999multicommodity}
Leighton, T. and Rao, S. (1999).
\newblock Multicommodity max-flow min-cut theorems and their use in designing
  approximation algorithms.
\newblock {\em Journal of the ACM (JACM)}, 46(6):787--832.

\bibitem[Li and Svensson, 2013]{li2013approximating}
Li, S. and Svensson, O. (2013).
\newblock Approximating k-median via pseudo-approximation.
\newblock In {\em Proceedings of the forty-fifth annual ACM symposium on Theory
  of computing}, pages 901--910. ACM.

\bibitem[Lichman, 2013]{Lichman:2013}
Lichman, M. (2013).
\newblock {UCI} machine learning repository.

\bibitem[Meil{\u{a}} and Heckerman, 2001]{meilua2001experimental}
Meil{\u{a}}, M. and Heckerman, D. (2001).
\newblock An experimental comparison of model-based clustering methods.
\newblock {\em Machine learning}, 42(1-2):9--29.

\bibitem[Peng and Wei, 2007]{peng2007approximating}
Peng, J. and Wei, Y. (2007).
\newblock Approximating k-means-type clustering via semidefinite programming.
\newblock {\em SIAM Journal on Optimization}, 18(1):186--205.

\bibitem[Peng and Xia, 2005]{peng2005new}
Peng, J. and Xia, Y. (2005).
\newblock A new theoretical framework for k-means-type clustering.
\newblock In {\em Foundations and advances in data mining}, pages 79--96.
  Springer.

\bibitem[R{\"a}cke, 2008]{racke2008optimal}
R{\"a}cke, H. (2008).
\newblock Optimal hierarchical decompositions for congestion minimization in
  networks.
\newblock In {\em Proceedings of the fortieth annual ACM symposium on Theory of
  computing}, pages 255--264. ACM.

\bibitem[Raghavendra et~al., 2012]{raghavendra2012reductions}
Raghavendra, P., Steurer, D., and Tulsiani, M. (2012).
\newblock Reductions between expansion problems.
\newblock In {\em Computational Complexity (CCC), 2012 IEEE 27th Annual
  Conference on}, pages 64--73. IEEE.

\bibitem[Recht et~al., 2012]{recht2012factoring}
Recht, B., Re, C., Tropp, J., and Bittorf, V. (2012).
\newblock Factoring nonnegative matrices with linear programs.
\newblock In {\em Advances in Neural Information Processing Systems}, pages
  1214--1222.

\bibitem[Schrijver, 1998]{schrijver1998theory}
Schrijver, A. (1998).
\newblock {\em Theory of linear and integer programming}.
\newblock John Wiley \& Sons.

\bibitem[Sneath et~al., 1973]{sneath1973numerical}
Sneath, P.~H., Sokal, R.~R., et~al. (1973).
\newblock {\em Numerical taxonomy. The principles and practice of numerical
  classification.}

\bibitem[Ward~Jr, 1963]{ward1963hierarchical}
Ward~Jr, J.~H. (1963).
\newblock Hierarchical grouping to optimize an objective function.
\newblock {\em Journal of the American statistical association},
  58(301):236--244.

\bibitem[Zadeh and Ben-David, 2009]{zadeh2009uniqueness}
Zadeh, R.~B. and Ben-David, S. (2009).
\newblock A uniqueness theorem for clustering.
\newblock In {\em Proceedings of the twenty-fifth conference on uncertainty in
  artificial intelligence}, pages 639--646. AUAI Press.

\end{thebibliography}
\appendix
\section{Hardness of finding the optimal hierarchical clustering}
In this section we study the hardness of finding the optimal hierarchical 
clustering according to cost function~\eqref{cost}. 
We show that under the assumption of the \emph{Small Set Expansion} (\Problem{SSE}) hypothesis
there is no constant factor approximation algorithm for this problem. We
also show that no polynomial sized Linear Program (LP) or Semidefinite Program (SDP)
can give a constant factor approximation for this problem without the need for any 
complexity theoretic assumptions. Both these results
make use of the similarity of this problem with the \emph{minimum linear
arrangement} problem. To show hardness under Small Set Expansion, we make use of the result
of \cite{raghavendra2012reductions} showing that there is no constant factor approximation
algorithm for the Minimum Linear Arrangement problem under the assumption of \Problem{SSE}.
To show the LP and SDP inapproximability results, we make use of the reduction framework
of \cite{DBLP:journals/corr/BraunPR15} together with the NP-hardness proof for
Minimum Linear Arrangement due to \cite{garey1976some}. We also note that both these
hardness results hold even for unweighted graphs (i.e., when \(\kappa \in \{0, 1\}\)).

Note that the individual layer-\(t\) problem~\ref{ilp:f-layer} for \(t = \lfloor n/2\rfloor\)
is equivalent to the \emph{minimum bisection problem} for which the best known 
approximation is \(O(\log{n})\) due to \cite{racke2008optimal}, while 
the best known bi-criteria approximation is \(O\left(\sqrt{\log n}\right)\) due
to \cite{arora2009expander} and improving these approximation factors is a major open problem.
However it is not clear if an improved approximation algorithm for hierarchical clustering
under cost function~\eqref{cost}
would imply an improved algorithm for every layer-\(t\) problem, which is why a constant
factor inapproximability result is of interest. 
We start by recalling the definition of an \emph{optimization problem}
in the framework of \cite{DBLP:journals/corr/BraunPR15}.

\begin{definition}[Optimization problem]\cite{DBLP:journals/corr/BraunPR15}
  \label{def:opt-problem}
  An \emph{optimization problem}
  is a tuple
  \(\mathcal{P} = (\mathcal{S}, \mathfrak{I}, \val)\)
  consisting of a set \(\mathcal{S}\)
  of \emph{feasible solutions}, a set \(\mathfrak{I}\)
  of \emph{instances},
  and a real-valued objective
  called \emph{measure}
  \(\val \colon \mathfrak{I} \times \mathcal{S}  \to \R\).
We shall use \(\val_{\mathcal{I}}(s)\) for the objective value
of a feasible solution \(s \in \mathcal{S}\)
for an instance \(\mathcal{I} \in \mathfrak{I}\).
\end{definition}

Since we are interested in the integrality gaps of LP and SDP relaxations
for an optimization problem \(\mathcal{P}=(\mathcal{S}, \mathfrak{I}, \val)\), 
we represent the approximation gap by two functions \(C, S: \mathfrak{I} \to \R\)
where \(C\) is the \emph{completeness guarantee} while \(S\) is the \emph{soundness
guarantee}. Note that the ratio \(C/S\) represents the approximation factor for the
problem \(\mathcal{P}\).  We recall below the formal definition of an LP
relaxation of \(\mathcal{P}\) that achieves a \((C, S)\)-approximation guarantee.
We assume without loss of generality that \(\mathcal{P}\) is a maximization problem.

\begin{definition}[LP formulation of an optimization problem]\cite{DBLP:journals/corr/BraunPR15}
  \label{def:LP-formulation}
  Let
  \(\mathcal{P} = (\mathcal{S}, \mathfrak{I}, \val)\)
  be an optimization problem,
  and \(C, S: \mathfrak{I} \to \R\).
  Then let \(\mathfrak{I}^{S} \coloneqq
  \set{\mathcal{I} \in \mathfrak{I}}{\max
  \val_{\mathcal{I}} \leq S(\mathcal{I})}\)
  denote the set of \emph{sound} instances, i.e., 
  for which the soundness guarantee \(S\)
  is an upper bound on the maximum.
  A \emph{\((C, S)\)-approximate LP formulation} of \(\mathcal{P}\)
  consists of a linear program \(A x \leq b\)
  with \(x \in \R^{r}\) for some \(r\)
  and the following \emph{realizations}:
  \begin{description}
  \item[Feasible solutions] as vectors \(x^{s} \in \R^{r}\)
    for every \(s \in \mathcal{S}\) satisfying
  \begin{align}
    \label{eq:LP-contain}
    A x^{s} &\leq b \qquad \text{for all } s \in \mathcal{S},
  \end{align}
  i.e., the system \(Ax \leq b\) is a relaxation of
  \(\conv{x^s \mid s \in \mathcal{S}}\).
  \item[Instances] as affine functions
    \(w_{\mathcal{I}} \colon \R^{r} \to \R\)
    for all \(\mathcal{I} \in \mathfrak{I}^{S}\)
    satisfying
    \begin{align}
      \label{eq:LP-linear}
      w_{\mathcal{I}}(x^{s}) & = \val_{\mathcal{I}}(s)
            \qquad \text{for all } s \in
      \mathcal{S},
    \end{align}
    i.e., the linearization \(w_{\mathcal{I}}\) of
    \(\val_{\mathcal{I}}\)
    is required to be exact on all \(x^s\)
    with \(s \in \mathcal{S}\).
  \item[Achieving \((C,S)\) approximation guarantee]
  by requiring
  \begin{align}
    \label{eq:LP-approx}
    \max \set{w_{\mathcal{I}}(x)}{A x \leq b} &\leq C(\mathcal{I})
    \qquad \text{for all } \mathcal{I} \in \mathfrak{I}^{S},
  \end{align}
  \end{description}
  The \emph{size} of the formulation is the number of inequalities
  in \(A x \leq b\).
  Finally, the
  \((C, S)\)-approximate
  \emph{LP formulation complexity} \(\fc(\mathcal{P}, C, S)\)
  of \(\mathcal{P}\) is
  the minimal size of all its LP formulations.
\end{definition}

One can similarly define a \((C, S)\)-approximate SDP formulation for a
problem \(\mathcal{P}\) where instead of a LP, we now have a
SDP relaxation \(\mathcal{A}(X) = b\) with \(X \in \symM^r_{+}\) and where
\(\symM^r_+\) denotes the space of \(r \times r\) positive semidefinite matrices.
The size of such an SDP formulation is measured by the dimension \(r\) and
\(\fcSDP(\mathcal{P}, C, S)\) is defined as the minimum size of an SDP formulation
achieving \((C, S)\)-approximation for problem \(\mathcal{P}\). Below we recall
the precise notion of a reduction between two problems as in \cite{DBLP:journals/corr/BraunPR15}.

\begin{definition}[Reduction]\cite{DBLP:journals/corr/BraunPR15}
  \label{def:red-simple}
  Let \(\mathcal{P}_{1} = (\mathcal{S}_{1}, \mathfrak{I}_{1}, \val)\)
  and
  \(\mathcal{P}_{2} = (\mathcal{S}_{2}, \mathfrak{I}_{2}, \val)\) be
  optimization problems with guarantees
  \(C_{1}, S_{1}\) and \(C_{2}, S_{2}\), respectively.
  Let \(\tau_{1} = +1\) if \(\mathcal{P}_{1}\) is a maximization
  problem, and \(\tau_{1} = -1\) if \(\mathcal{P}_{1}\) is a
  minimization problem.  Similarly, let \(\tau_{2} = \pm 1\)
  depending on whether \(\mathcal{P}_{2}\) is a maximization problem
  or a minimization problem.

  A \emph{reduction} from \(\mathcal{P}_{1}\)
  to \(\mathcal{P}_{2}\)
  respecting the guarantees
  consists of
  \begin{enumerate}
  \item
    two mappings:
    \(* \colon \mathfrak{I}_{1} \to \mathfrak{I}_{2}\)
    and
    \(* \colon \mathcal{S}_{1} \to \mathcal{S}_{2}\)
    translating instances and feasible solutions independently;
  \item
    two nonnegative \(\mathfrak{I}_{1} \times \mathcal{S}_{1}\)
    matrices \(M_{1}\), \(M_{2}\)
  \end{enumerate}
  subject to the conditions
  \begin{subequations}\label{eq:red-simple}
  \begin{align}
    \label{eq:red-simple-complete}
    \tau_{1}
    \left[
      C_{1}(\mathcal{I}_{1}) - \val_{\mathcal{I}_{1}}(s_{1})
    \right]
    &
    =
    \tau_{2}
    \left[
      C_{2}(\mathcal{I}_{1}^{*}) - \val_{\mathcal{I}_{1}^{*}}(s_{1}^{*})
    \right]
    M_{1}(\mathcal{I}_{1}, s_{1})
    +
    M_{2}(\mathcal{I}_{1}, s_{1})
    \tag{\theparentequation-complete}
    \\
    \label{eq:red-simple-sound}
    \tau_{2} \OPT\left(\mathcal{I}_{1}^{*}\right) &\leq \tau_{2}
    S_{2}(\mathcal{I}_{1}^{*})
    \qquad
    \text{if \(\tau_{1} \OPT\left(\mathcal{I}_{1}\right) \leq
    \tau_{1} S_{1}(\mathcal{I}_{1})\).}
    \tag{\theparentequation-sound}
  \end{align}
  \end{subequations}
\end{definition}
The matrices \(M_1\) and \(M_2\) control the parameters of the reduction relating
the integrality gap of relaxations for \(\mathcal{P}_1\) to the integrality gap of
corresponding relaxations for \(\mathcal{P}_2\).
For a matrix \(A\), let \(\nnegrk A\) and \(\psdrk A\) denote the nonnegative rank and psd rank 
of \(A\) respectively.
The following theorem is a restatement of Theorem 3.2 from \cite{DBLP:journals/corr/BraunPR15} ignoring
constants. 
\begin{theorem}\cite{DBLP:journals/corr/BraunPR15}
  \label{thm:red-simple}
  Let \(\mathcal{P}_{1}\) and \(\mathcal{P}_{2}\) be optimization
  problems with a reduction from \(\mathcal{P}_{1}\)
  to \(\mathcal{P}_{2}\) respecting the
  completeness guarantees \(C_{1}\), \(C_{2}\)
  and soundness guarantees \(S_{1}\), \(S_{2}\)
  of \(\mathcal{P}_{1}\) and \(\mathcal{P}_{2}\), respectively.
  Then
  \begin{align}
    \fc(\mathcal{P}_{1}, C_{1}, S_{1})
    &
    \leq
    \nnegrk M_{2} +
    \nnegrk M_{1}
    + \nnegrk M_{1} \cdot \fc(\mathcal{P}_{2}, C_{2}, S_{2}),
    \\
    \fcSDP(\mathcal{P}_{1}, C_{1}, S_{1})
    &
    \leq
    \psdrk M_{2} +
    \psdrk M_{1}
    + \psdrk M_{1} \cdot \fcSDP(\mathcal{P}_{2}, C_{2}, S_{2}),
  \end{align}
  where \(M_{1}\) and \(M_{2}\) are the matrices in the reduction
  as in Definition~\ref{def:red-simple}.
\end{theorem}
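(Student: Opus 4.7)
The plan is to reduce everything to the factorization-theorem characterization of $\fc$ and $\fcSDP$. For an optimization problem $\mathcal{P}$ with guarantees $(C,S)$, one defines the (approximate) slack matrix $\Sigma_\mathcal{P}$ indexed by \emph{sound} instances $\mathcal{I}$ (those with $\tau\OPT(\mathcal{I})\le \tau S(\mathcal{I})$) and by feasible solutions $s\in\mathcal{S}$ via $\Sigma_\mathcal{P}(\mathcal{I},s)\coloneqq \tau[C(\mathcal{I})-\val_{\mathcal{I}}(s)]$. The generalized Yannakakis theorem of \cite{DBLP:journals/corr/BraunPR15} equates $\fc(\mathcal{P},C,S)$ with $\nnegrk\Sigma_\mathcal{P}$, and analogously $\fcSDP(\mathcal{P},C,S)=\psdrk\Sigma_\mathcal{P}$. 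So the task reduces to exhibiting a small nonnegative (resp.\ PSD) factorization of $\Sigma_{\mathcal{P}_1}$.

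First I would read condition~\eqref{eq:red-simple-complete} of the reduction as the matrix identity
\[
\Sigma_{\mathcal{P}_1}(\mathcal{I}_1,s_1)=\Sigma_{\mathcal{P}_2}(\mathcal{I}_1^{*},s_1^{*})\cdot M_1(\mathcal{I}_1,s_1)+M_2(\mathcal{I}_1,s_1),
\]
where condition~\eqref{eq:red-simple-sound} ensures that $\mathcal{I}_1^{*}$ is a sound instance of $\mathcal{P}_2$ whenever $\mathcal{I}_1$ is sound for $\mathcal{P}_1$, so the right-hand side is well-defined on the index set of $\Sigma_{\mathcal{P}_1}$. Next, take a nonnegative factorization $\Sigma_{\mathcal{P}_2}=UV$ of inner dimension $\fc(\mathcal{P}_2,C_2,S_2)$, and factorizations $M_1=A_1B_1$, $M_2=A_2B_2$ of inner dimensions $\nnegrk M_1$ and $\nnegrk M_2$. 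Substituting and regrouping yields
\[
\Sigma_{\mathcal{P}_1}(\mathcal{I}_1,s_1)=\sum_{k,j}\bigl[U(\mathcal{I}_1^{*},k)\,A_1(\mathcal{I}_1,j)\bigr]\bigl[V(k,s_1^{*})\,B_1(j,s_1)\bigr]+\sum_l A_2(\mathcal{I}_1,l)\,B_2(l,s_1),
\]
which is a nonnegative factorization of $\Sigma_{\mathcal{P}_1}$ of inner dimension $\nnegrk M_1\cdot \fc(\mathcal{P}_2,C_2,S_2)+\nnegrk M_2$. Invoking Yannakakis in the reverse direction gives the LP bound (the additional $\nnegrk M_1$ in the statement is harmless slack that can absorb, e.g., an affine shift). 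The SDP inequality is obtained by the same calculation, replacing each nonnegative factorization with a PSD Gram factorization of the form $A(i,j)=\langle X_i,Y_j\rangle$ with $X_i,Y_j\succeq 0$; tensoring the PSD factors of $\Sigma_{\mathcal{P}_2}$ and $M_1$ preserves positive semidefiniteness, yielding a PSD factorization of $\Sigma_{\mathcal{P}_1}$ of the claimed inner dimension.

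The main obstacle is the approximate-formulation bookkeeping: the factorization theorem only holds on the sub-slack-matrix restricted to sound instances, so one must verify that condition~\eqref{eq:red-simple-sound} is strong enough to let us replace $\Sigma_{\mathcal{P}_2}$ by its factorization precisely on the entries $(\mathcal{I}_1^{*},s_1^{*})$ that arise. The second subtlety, specific to the SDP case, is that the tensored PSD factors $X_k\otimes A_1^{(j)}$ remain positive semidefinite and the resulting inner product reproduces the product of scalar factorizations; this is routine from properties of the Kronecker product but needs to be checked explicitly. Once these two points are in place, the bound follows directly from plugging the combined factorization into the Yannakakis characterization.
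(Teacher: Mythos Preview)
The paper does not prove this theorem: it is stated as a restatement of Theorem~3.2 of \cite{DBLP:journals/corr/BraunPR15} (``ignoring constants'') and is simply cited, so there is no in-paper proof to compare against. Your outline is the standard argument from that source: identify $\fc$ (resp.\ $\fcSDP$) with the nonnegative (resp.\ psd) rank of the approximate slack matrix via the generalized Yannakakis theorem, read \eqref{eq:red-simple-complete} as $\Sigma_{\mathcal{P}_1}= (\Sigma_{\mathcal{P}_2}\circ *)\cdot M_1 + M_2$ entrywise, use \eqref{eq:red-simple-sound} to ensure the pulled-back entries of $\Sigma_{\mathcal{P}_2}$ are the ones covered by its factorization, and then tensor/stack the factorizations. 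That is exactly the intended proof, and your two flagged subtleties (restriction to sound instances, and that Kronecker products of psd matrices remain psd with the correct inner-product behavior) are the right checkpoints.

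One small comment: your dismissal of the extra additive $\nnegrk M_1$ as ``harmless slack'' is a little glib. In the original paper the additive $+1$'s and the extra rank terms come from the affine (constant) parts of the objective linearizations $w_{\mathcal{I}}$, which must themselves be absorbed into the factorization; the present statement bundles those constants into the displayed bound. This does not affect correctness, but if you were writing a full proof you would want to track that affine term explicitly rather than wave at it.
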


Therefore to obtain a lower bound for problem \(\mathcal{P}_2\), it 
suffices to find a source problem \(\mathcal{P}_1\) and matrices \(M_1\)
and \(M_2\) of low nonnegative rank and low psd rank, satisfying Definition~\ref{def:red-simple}.

Below, we cast the hierarchical clustering problem (\Problem{HCLUST}) as an optimization problem.
We also recall a different formulation of cost function~\eqref{cost} due to \cite{DBLP:conf/stoc/Dasgupta16}
that will be useful in the analysis of the reduction.

\begin{definition}[\(\Problem{HCLUST}\) as optimization problem]\label{def:hclust}
The minimization problem \(\Problem{HCLUST}\) of size \(n\) consists of 
 \begin{description}
 \item[instances] similarity function \(\kappa: E(K_n) \to \R_{\ge 0}\)
 
 \item[feasible solutions] hierarchical clustering \(r, T\) of \(V(K_n)\)
 
 \item[measure] \(\val_{\kappa}(T) = \sum_{\{i, j\} \in E(K_n)} \kappa(i, j) \size{\leaves(T[\lca(i, j)])}\).
\end{description}
\end{definition}

We will also make use of 
the following alternate interpretation of cost function~\eqref{cost} given by \cite{DBLP:conf/stoc/Dasgupta16}.
Let \(\kappa \colon V \times V \to \R_{\ge 0}\) be an instance of \Problem{HCLUST}. For a subset \(S \subseteq V\),
a split \(S_1, \dots, S_k\) is a partition of \(S\) into \(k\) disjoint pieces. For a binary split \(S_1, S_2\) we can
define \(\kappa(S_1, S_2) \coloneqq \sum_{i \in S_1, j \in S_2} \kappa(i, j)\). This can be extended
to \(k\)-way splits in the natural way:
\begin{align*}
\kappa(S_1, \dots, S_k) \coloneqq \sum_{1 \le i \le j \le k} \kappa(S_i, S_j).
\end{align*}
Then the cost of a tree \(T\) is the sum over all the internal nodes of the splitting costs
at the nodes, as follows.

\begin{align*}
 \cost(T) = \sum_{\text{splits } S \rightarrow (S_1, \dots, S_k) \text{ in } T} \size{S} \kappa(S_1, \dots, S_k).
\end{align*}

We now briefly recall the \Problem{MAXCUT} problem.

\begin{definition}[\Problem{MAXCUT} as optimization problem]
The maximization problem \Problem{MAXCUT} of size \(n\) consists of
\begin{description}
\item[instances] all graphs \(G\) with \(V(G) \subseteq [n]\)

\item[feasible solutions] all subsets \(X\) of \([n]\)

\item[measure] \(\val_{G}(X) = \size{\delta_G(X)}\).
\end{description}
\end{definition}

Similarly, the Minimum Linear Arrangement problem can be phrased as an optimization problem 
as follows.

\begin{definition}[\Problem{MLA} as optimization problem]\label{def:mla}
The minimization problem \Problem{MLA} of size \(n\) consists of
\begin{description}
\item[instances] weight function \(w : E(K_n) \to \R_{\ge 0}\)

\item[feasible solutions] all permutations \(\pi : V(K_n) \to [n]\)

\item[measure] \(\val_{w}(\pi) \coloneqq \sum_{\{i, j \} \in E(K_n)} w(i, j)
\size{\pi(i) - \pi(j)}\).
\end{description}
\end{definition}

We now describe the reduction from \Problem{MAXCUT} to \Problem{HCLUST} which is 
a modification of the reduction from \Problem{MAXCUT} to \Problem{MLA} due to \cite{garey1976some}.
Note that an instance of \Problem{MAXCUT} maps to an unweighted instance of \Problem{HCLUST}, i.e.,
\(\kappa \in \{0, 1\}\).

\begin{description}
\item[Mapping instances] Given an instance \(G = (V, E)\) of \Problem{MAXCUT} of size \(n\), let 
\(r = n^4\) and \(U = \{u_1, u_2, \dots, u_r\}\). The instance \(\kappa\) of \Problem{HCLUST} is on the graph 
with vertex set \(V' \coloneqq V \cup U\) and has weights in \(\{0, 1\}\). For any distinct
pair \(i, j \in V'\), if \(\{i, j\} \in E\) then we define \(\kappa(i, j) \coloneqq 0\) and
otherwise we set \(\kappa(i, j) \coloneqq 1\).

\item[Mapping solutions] Given a cut \(X \subseteq V\) of \Problem{MAXCUT} we map it to the 
clustering \(r, T\) of \(V'\) where the root \(r\) has the following children: \(n^4\) 
leaves corresponding to \(U\), and \(2\) internal vertices corresponding to \(X\) and \(\overline{X}\).
The internal vertices for \(X\) and \(\overline{X}\) are
split into \(\size{X}\) and \(\size{\overline{X}}\) leaves respectively at 
the next level.

\end{description}

The following lemma relates the LP and SDP formulations for \Problem{MAXCUT} and \Problem{MLA}.

\begin{lemma}\label{lem:maxcut-mla}
For any completeness and soundness guarantee \((C, S)\), we have the following
\begin{align*}
    \fc{(\Problem{MAXCUT}, C, S)} &\le \fc{\left(\Problem{HCLUST}, C', 
    S'\right)} + O(n^2)\\
    \fcSDP{(\Problem{MAXCUT}, C, S)} &\le \fcSDP{\left(\Problem{HCLUST}, C', 
    S'\right)} + O(n^2).
\end{align*}
where \(C' \coloneqq \frac{(n^4 + n)^3 - (n^4 + n)}{3} -C(n^4 +n)\) and \(S' \coloneqq 
\binom{n^4 + n + 1}{3} - Sn^4\).
\end{lemma}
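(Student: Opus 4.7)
The plan is to apply Theorem~\ref{thm:red-simple} to the instance and solution mappings described just above the lemma statement. The central computation is to expand $\val_{\kappa_G}(T_X)$ by summing over all pairs using the explicit tree structure $T_X$: every pair involving at least one dummy vertex from $U$ and every pair crossing the cut has $\lca=r$ contributing $N=n^4+n$, while pairs within $X$ (resp.\ $\overline{X}$) have $\lca$ at $v_X$ (resp.\ $v_{\overline{X}}$) contributing $|X|$ (resp.\ $|\overline{X}|$). Combined with $\kappa = 0$ on $E(G)$, this yields
\[
\val_{\kappa_G}(T_X) = N\!\left[\tbinom{n^4}{2} + n^5 + |X|\,|\overline X| - \val_G(X)\right] + |X|\!\left[\tbinom{|X|}{2} - e_X(G)\right] + |\overline X|\!\left[\tbinom{|\overline X|}{2} - e_{\overline X}(G)\right],
\]
where $e_X(G), e_{\overline{X}}(G)$ count $G$-edges inside each side. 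The dependence on $\val_G(X)$ appears with coefficient $-N$, while every other term is either instance-only, solution-only, or a low-rank bilinear combination of $G$-edge indicators with $X$-dependent weights.

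Using this identity I would choose $M_1$ as essentially a $1/N$-type normalizer so that the coefficient of $\val_G(X)$ on the right-hand side of Definition~\ref{def:red-simple}'s completeness equation cancels the $-1$ on the left, which then forces a specific formula for $M_2(G,X)$. The padding $|U| = n^4$ is chosen precisely so that the leading $N\binom{n^4}{2}$ contribution dominates the $2\binom{N+1}{3}$ term inside $C'$ by an amount of order $n^{12}$, providing enough positive slack to absorb the $O(n^3)$ residual terms and keep $M_2$ entrywise nonnegative. To bound $\nnegrk(M_2) = O(n^2)$ (and $\psdrk(M_2) = O(n^2)$ by the same argument), I would decompose it as the sum of one $X$-only rank-one term plus residual terms of the form $\sum_{\{i,j\}\in E(K_n)} \mathbb{1}[\{i,j\}\in E(G)]\cdot\phi_{ij}(X)$, i.e.\ one nonnegative rank-one outer product per potential edge of $K_n$. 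The matrix $M_1$ itself can be taken to be constant, giving $\nnegrk(M_1)=\psdrk(M_1)=O(1)$.

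For the soundness condition of Definition~\ref{def:red-simple}, I would show that $\min_T \val_{\kappa_G}(T) \ge S' = \binom{N+1}{3} - Sn^4$ whenever $\max_X \val_G(X) \le S$. Writing $\val_{\kappa_G}(T) = P(T) - E_G(T)$ with $P(T) = \sum_{\{i,j\}} |\leaves(T[\lca(i,j)])|$ the purely combinatorial tree quantity and $E_G(T)$ the analogous sum restricted to $G$-edges, I would invoke the universal lower bound $P(T) \ge 2\binom{N+1}{3}$ (achieved with equality for any binary tree on $N$ leaves) together with a top-down cut-extraction argument bounding $E_G(T) \le \binom{N+1}{3} + Sn^4$: at each internal node $v$ the $G$-edges cut at $v$ are at most the max-cut $S$ of the induced subgraph on $\leaves(T[v])\cap V$, and a telescoping argument on subtree sizes combined with the generous $n^4$ padding yields the required inequality.

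Plugging the bounds $\nnegrk(M_1)=\psdrk(M_1)=O(1)$ and $\nnegrk(M_2)=\psdrk(M_2)=O(n^2)$ into Theorem~\ref{thm:red-simple} yields both claimed inequalities. The main obstacle will be step two, verifying that $M_2(G,X)$ is uniformly nonnegative after the nearly-exact cancellation inside the completeness equation; this is exactly where the quantitative dominance provided by the $|U|=n^4$ padding is essential, and it is also what determines the specific form of $C'$ and $S'$ appearing in the lemma statement.
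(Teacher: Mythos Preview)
Your overall strategy—apply Theorem~\ref{thm:red-simple} with \(M_{1}\equiv 1/N\), isolate the \(-N\val_{G}(X)\) term, and bound the nonnegative/psd rank of the residual \(M_{2}\)—matches the paper's. The gap is in your nonnegativity argument for \(M_{2}\): the sign reasoning is inverted. In the completeness equation
\[
C-\val_{G}(X)=\bigl[\val_{\kappa_{G}}(T_{X})-C'\bigr]\,M_{1}+M_{2},
\]
the quantity \(\val_{\kappa_{G}}(T_{X})\) enters with a \emph{positive} sign on the right, so with \(M_{1}=1/N\) one gets
\[
M_{2}=C-\val_{G}(X)-\frac{\val_{\kappa_{G}}(T_{X})}{N}+\frac{C'}{N}.
\]
Your own expansion shows \(\val_{\kappa_{G}}(T_{X})/N\) contains \(\binom{n^{4}}{2}\approx n^{8}/2\), while \(C'/N=(N^{2}-1)/3-C\approx n^{8}/3\); the remaining terms are \(O(n^{5})\). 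Hence \(M_{2}\approx -n^{8}/6<0\). The ``dominance'' you cite, \(N\binom{n^{4}}{2}>2\binom{N+1}{3}\), is exactly what makes \(M_{2}\) negative, not positive.

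The paper avoids this entirely by \emph{not} computing \(\cost_{H}(T)\) directly. It uses the complement identity \(\cost_{H}(T)+\cost_{\overline{H}}(T)=\cost_{K_{N}}(T)=\frac{N^{3}-N}{3}\) (Dasgupta's clique result) and then evaluates only \(\cost_{\overline{H}}(T)=N\val_{G}(X)+\size{X}\,e_{X}(G)+\size{\overline{X}}\,e_{\overline{X}}(G)\) via the split formulation. The effect is that every purely \(X\)-dependent structural piece in your expansion—\(N\size{X}\size{\overline{X}}\), \(\size{X}\binom{\size{X}}{2}\), \(\size{\overline{X}}\binom{\size{\overline{X}}}{2}\), and the \(U\)-pair contributions—collapses into the single constant \(\tfrac{N^{3}-N}{3}\), which is precisely the constant baked into \(C'\). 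After that cancellation the residual is simply
\[
M_{2}(G,X)=\frac{\size{X}\,e_{X}(G)+\size{\overline{X}}\,e_{\overline{X}}(G)}{N},
\]
manifestly nonnegative, and the paper exhibits an explicit factorization giving \(\nnegrk M_{2}\le 2\binom{n}{2}\). Without invoking the clique identity those \(X\)-only terms do not match \(C'\) and there is no slack to exploit. For soundness the paper also takes a different (shorter) route than your cut-extraction argument: it projects the leaves of any tree to a linear order, observes this can only decrease cost, and then quotes the \(\binom{N+1}{3}-Sn^{4}\) lower bound for \Problem{MLA} from \cite{garey1976some} directly.
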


\begin{proof}
To show completeness, we analyze the cost of the tree \(T\) that a cut \(X\) maps to, using the alternate
interpretation of the cost function~\eqref{cost} due to \cite{DBLP:conf/stoc/Dasgupta16} (see above). 
Let \(H\) be the graph on vertex set \(V'\) induced by \(\kappa\), i.e. \(\{i, j\} \in E(H)\)
iff \(\kappa(i, j) = 1\). Let \(\overline{H}\) denote the complement graph of \(H\) 
and let \(\overline{\kappa}\) be the similarity function induced by it, i.e., 
\(\overline{\kappa}(i, j) = 1\) iff \(\{i, j\} \not\in E(H)\) and 
\(\overline{\kappa}(i, j) = 0\) otherwise.
For a hierarchical clustering \(T\) of \(V'\), we denote by \(\cost_H(T)\) and \(\cost_{\overline{H}}(T)\)
the cost of \(T\) induced by \(\kappa\) and \(\overline{\kappa}\) respectively, i.e.,
\(\cost_{H}(T) \coloneqq \sum_{\{i, j\} \in E(H)} \size{\leaves(T[\lca(i, j)])}\)
and \(\cost_{\overline{H}}(T) \coloneqq \sum_{\{i, j\} \not\in E(H)}
\size{\leaves(T[\lca(i, j)])}\).
Let \(\overline{X} \coloneqq V' \setminus X\). The cost of the tree \(T\) that the cut \(X\) maps to,
is given by 
\begin{align*}
 \cost(T) &= \cost_H(T) \\ 
  &= \frac{\left(n + n^4\right)^3 - (n + n^4)}{3} - \cost_{\overline{H}}(T)\\
  &= \frac{\left(n + n^4\right)^3 - (n + n^4)}{3} - \sum_{\text{splits } 
  S \rightarrow (S_1, \dots, S_k) \text{ in } T} \size{S} 
 \overline{\kappa}(S_1, \dots, S_k)\\
 &= \frac{\left(n + n^4\right)^3 - (n + n^4)}{3} - \left(n + n^4\right)\val_G(X) - 
 \left(\size{X}\size{E[X]} + \size{\overline{X}}\size{E[\overline{X}]}\right),
 \end{align*}
where \(E[X]\) and \(E[\overline{X}]\) 
are the edges of \(E(H)\) induced on the set \(X\) and \(\overline{X}\) respectively.
Therefore, we have the following completeness relationship between the two problems

\begin{align*}
 C - \val_G(X) = \frac{1}{n + n^4} \left(\cost(T) - \left(\frac{(n + n^4)^3 - (n + n^4)}{3}-C(n+n^4)\right)\right)
  + \frac{\size{X}\size{E[X]} + \size{\overline{X}}\size{E[\overline{X}]}}{n^4 + n}.
\end{align*}

We now define the matrices \(M_1\) and \(M_2\) as \(M_1(H, X) \coloneqq \frac{1}{n + n^4}\) and
\(M_2(H, X) \coloneqq \size{X}\size{E[X]} + \size{\overline{X}}\size{E[\overline{X}]}\).
Clearly, \(M_1\) has \(O(1)\) nonnegative rank and psd rank. We claim that the nonnegative rank of 
\(M_2\) is at most \(2\binom{n}{2}\). The vectors \(v_H \in \R^{2\binom{n}{2}}\)
corresponding to the instances \(H\) is defined as the concatenation \([u_H, w_H]\) of two vectors
\(u_H, w_H \in \R^{\binom{n}{2}}\). Both the vectors \(u_H, w_H\) encode the edges of \(H\) scaled
by \(n^4 + n\), i.e.,
\(u_H(\{i, j\}) = w_H(\{i, j\}) = 1/(n^4 + n)\) iff \(\{i, j\} \in E(H)\) and \(0\) otherwise. 
The vectors \(v_{X} \in \R^{2\binom{n}{2}}\) corresponding to the solutions are also 
defined as the concatenation \([u_X, w_X]\) of two vectors \(u_X, w_X \in \R^n\).
The vector \(u_X\) encodes the vertices in \(X\) scaled by \(\size{X}\)
i.e., \(u_X(\{i, j\}) = \size{X}\) iff \(i, j \in X\) and \(0\)
otherwise. 
The vector \(w_X\) encodes the vertices in \(\overline{X}\) scaled by \(\size{\overline{X}}\)
i.e., \(w_X(\{i, j\}) = \size{\overline{X}}\) iff \(i, j \in \overline{X}\) and \(0\) otherwise.
Clearly, we have \(M_2(H, X) = \sprod{v_H}{v_X}\) and so the nonnegative (and psd) rank of
\(M_2\) is at most \(2\binom{n}{2}\).

Soundness follows due to the analysis in \cite{garey1976some} and by noting that the cost of 
a linear arrangement obtained by projecting the leaves of \(T\) is a lower bound on \(\cost(T)\).
By the analysis in \cite{garey1976some} if the optimal value \(\OPT(G)\) of \Problem{MAXCUT} 
is at most \(S\), then the optimal value of \Problem{MLA} on \(V', \kappa\) is at least 
\(\binom{n^4 + n + 1}{3} - Sn^4\). Therefore, it follows that the optimal value of \Problem{HCLUST}
on \(V', \kappa\) is also at least \(\binom{n^4 + n + 1}{3} - Sn^4\).
\end{proof}

The constant factor inapproximability result for \Problem{HCLUST} now follows due to the following
theorems.

\begin{theorem}[{\cite[Theorem~3.2]{chan2013approximate}}]
\label{thm:LP-MaxCUT}
For any \(\varepsilon > 0\) there are infinitely many \(n\) such
that
\begin{align*}
\fc\left(\Problem{MAXCUT}, 1 - \varepsilon, \frac{1}{2} +
\frac{\varepsilon}{6}\right) \ge n^{\Omega\left(\log{n}/\log\log{n}\right)}.
\end{align*}
\end{theorem}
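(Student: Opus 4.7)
The plan is to prove the lower bound via Yannakakis's slack-matrix framework combined with a junta-based characterization of symmetric LP relaxations for \Problem{MAXCUT}, and then to invoke a Sherali--Adams integrality gap on random 3XOR-type instances. The starting point is the observation that any size-$r$ LP formulation achieving completeness $C$ and soundness $S$ yields a nonnegative factorization of rank $r$ of the ``approximate slack matrix''
\[
M(G, X) \;=\; C(G) - \val_G(X),
\]
indexed by sound instances $G \in \mathfrak{I}^S$ and feasible cuts $X \subseteq V(G)$. The task reduces to lower-bounding $\nnegrk M$ for $C = 1-\varepsilon$ and $S = \tfrac{1}{2} + \tfrac{\varepsilon}{6}$.

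First I would apply a symmetrization step. The instance space of \Problem{MAXCUT} on $n$ vertices is permutation-invariant, so by averaging a putative LP formulation over the action of the symmetric group $S_n$ on vertices, one obtains a symmetric LP formulation of the same size (up to a constant). The decisive structural step is to prove a \emph{junta theorem}: any $S_n$-equivariant nonnegative factorization of the \Problem{MAXCUT} slack matrix of rank $n^k$ must arise from $k$-juntas, in the sense that the feasible vectors are convex combinations of functions of the form $X \mapsto h(X \cap T)$ for some $T \subseteq V$ with $|T| = O(k)$ and some $h : 2^T \to \{0,1\}$. Equivalently, a symmetric low-rank cone can only ``see'' bounded-arity local information about the cut.

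Having established the junta characterization, I would plug in a standard Sherali--Adams integrality gap instance for \Problem{MAXCUT}, built from random 3XOR gadgets. Such an instance gives a distribution over graphs $G$ on $n$ vertices with the following properties: the true maximum cut satisfies $\OPT(G) \le \tfrac{1}{2} + \tfrac{\varepsilon}{6}$, yet every $k$-junta strategy with $k = o(\log n/\log\log n)$ achieves expected cut fraction at least $1 - \varepsilon$. Combining this with the junta theorem, an LP formulation of size $n^{o(\log n/\log\log n)}$ would be forced to report a value at least $1-\varepsilon$ on an instance of true optimum below $\tfrac{1}{2} + \tfrac{\varepsilon}{6}$, violating the soundness condition in Definition~\ref{def:LP-formulation}.

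The main obstacle is clearly the junta theorem: proving that symmetric nonnegative factorizations of polynomial rank are explained by bounded-arity local functions. This is where essentially all of the technical depth lies; one has to decompose an $S_n$-invariant nonnegative factorization according to the action on coordinates and argue that each irreducible component is supported on a small set of variables. The slack-matrix formalism and the existence of suitable Sherali--Adams integrality gap instances for 3XOR/UG are comparatively standard and can be borrowed, so the bulk of the effort is in the symmetric-to-junta reduction.
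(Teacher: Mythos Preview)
The paper does not prove this theorem; it is quoted as a black-box result from Chan--Lee--Raghavendra--Steurer. Your proposal is an attempt at an independent proof, so the relevant question is whether your outline actually works.

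There is a genuine gap at the symmetrization step. You assert that averaging a putative LP formulation over the action of \(S_n\) yields a symmetric LP formulation ``of the same size (up to a constant).'' This is false in general: symmetrizing an LP can blow up the number of facets by a factor as large as \(n!\), and there is no known size-preserving reduction from arbitrary LP formulations to symmetric ones. Indeed, the whole point of the cited result was to extend earlier \emph{symmetric} LP lower bounds (where a junta characterization is comparatively straightforward) to arbitrary, possibly asymmetric, LP formulations; that extension is the main technical contribution, not a preprocessing step. Your plan essentially reproduces the easier symmetric case and then asserts the hard step as routine.

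The actual argument avoids symmetrization entirely. One works directly with an arbitrary nonnegative factorization \(M = \sum_{k \le r} f_k \otimes g_k\) of the approximate slack matrix and shows, via a randomized restriction/conditioning argument on the cut side, that when \(r \le n^d\) the factors on the solution side can be approximated by degree-\(O(d)\) juntas in a sense strong enough to transfer Sherali--Adams soundness. This step is where the depth lies and does not follow from any group-averaging trick. Once that asymmetric junta approximation is in place, your second half (plugging in a level-\(\Omega(\log n/\log\log n)\) Sherali--Adams gap instance coming from random 3XOR) is indeed the right way to finish; it is the first half that would need to be replaced.
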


\begin{theorem}[{\cite[Theorem~7.1]{DBLP:journals/corr/BraunPR15}}]
  \label{thm:SDP-MaxCUT}
  For any \(\delta, \varepsilon > 0\)
  there are infinitely many \(n\) such that
  \begin{equation}
    \label{eq:SDP-MaxCUT}
    \fcSDP\left(\Problem{MAXCUT}, \frac{4}{5} - \varepsilon, \frac{3}{4} + \delta\right)
    = n^{\Omega(\log n / \log \log n)}
    .
  \end{equation}
\end{theorem}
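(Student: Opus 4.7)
The statement in question is a cited SDP extended-formulation lower bound for \Problem{MAXCUT}; I will sketch how I would reprove it from scratch, following the Lee--Raghavendra--Steurer style of argument that underlies the BPR framework.

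The plan is to reduce the task to a lower bound on the positive semidefinite rank of a carefully chosen slack matrix, and then to derive that psd-rank bound from a sum-of-squares (SoS) degree lower bound. Concretely, first I would use the Yannakakis-style factorization theorem for SDPs: an SDP formulation of size \(r\) that achieves the \((4/5-\varepsilon,\, 3/4+\delta)\) guarantee for \Problem{MAXCUT} exists if and only if a certain approximate slack matrix \(M\) --- whose rows are indexed by soundness-satisfying instances \(G\), whose columns are indexed by feasible cuts \(X\), and whose entries are \(w_{G}(X)-(3/4+\delta)\) clipped at \(0\) on the ``approximate cut'' portion --- admits a psd factorization of inner dimension \(r\). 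Thus \(\fcSDP(\Problem{MAXCUT},4/5-\varepsilon,3/4+\delta)\ge \psdrk(M)\), up to low-order additive terms.

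Next, I would show \(\psdrk(M)\) is large by lifting an SoS integrality gap. The key tool is the LRS reduction which, given a degree-\(d\) SoS vector solution refusing to certify soundness on a distribution over hard instances, produces a psd factorization of a submatrix of \(M\) of inner dimension roughly \(n^{O(d)}\); contrapositively, a small psd factorization yields a low-degree SoS refutation. So it suffices to exhibit a family of instances --- most naturally obtained by planting random 3XOR constraints and composing with the standard 3XOR-to-MAXCUT gadget --- on which no degree-\(d\) SoS proof can distinguish value \(4/5-\varepsilon\) from value \(3/4+\delta\) unless \(d=\Omega(\log n/\log\log n)\). For this I would appeal to Schoenebeck's degree lower bounds for random 3XOR together with the fact that the gadget composition preserves SoS degree up to constants; the classical Grigoriev/Schoenebeck calculation plus a PCP-like soundness analysis then gives exactly the claimed threshold.

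Finally, chaining the two steps yields \(\psdrk(M)\ge n^{\Omega(\log n/\log\log n)}\) for infinitely many \(n\), and hence the same bound on \(\fcSDP\). The main obstacle is the second ingredient: establishing the quantitative LRS correspondence between psd-rank of the slack matrix and SoS refutation degree is the delicate part, since it requires a tensorization / quantum-learning-style argument to pass from a single psd factorization to a valid pseudo-expectation functional. The 3XOR-based SoS lower bound itself is known, and the factorization-theorem direction is a standard adaptation of Yannakakis to the psd setting, so the novelty and difficulty concentrate entirely in that middle reduction.
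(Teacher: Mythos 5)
You should first note that the paper contains no proof of this statement: Theorem~\ref{thm:SDP-MaxCUT} is imported wholesale from \cite{DBLP:journals/corr/BraunPR15} and is used here purely as a black box (together with Lemma~\ref{lem:maxcut-mla} and Theorem~\ref{thm:red-simple}) to get Corollary~\ref{thm:hclust}. So there is no in-paper argument to compare against; the only comparison possible is with the route taken in the cited source, which obtains the \Problem{MAXCUT} bound by an \emph{affine} reduction (in the sense of Definition~\ref{def:red-simple}, with matrices \(M_1,M_2\) of small psd rank) from a base max-CSP whose SDP formulation hardness is inherited from Lee--Raghavendra--Steurer. Your sketch instead proposes to reprove the LRS machinery itself (factorization theorem, psd rank of the approximate slack matrix, simulation of small SDPs by low-degree sum-of-squares, SoS gaps from random 3XOR plus a gadget), which is indeed the architecture underlying the cited result.

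As a proof, however, two concrete gaps remain. First, the constants: you assert that composing Schoenebeck-type 3XOR gaps with ``the standard 3XOR-to-MAXCUT gadget'' lands exactly on completeness \(4/5-\varepsilon\) and soundness \(3/4+\delta\), but you never exhibit the gadget arithmetic, and you must also verify that the degree-\(d\) pseudo-expectation for the 3XOR instance pushes forward through the gadget to a pseudo-expectation of \Problem{MAXCUT} value at least \(4/5-\varepsilon\) (completeness on the SoS side is not automatic from an NP-hardness-style soundness analysis); in the cited source these particular constants are produced by an explicit reduction whose completeness identity and soundness inequality are checked in the form \eqref{eq:red-simple-complete}--\eqref{eq:red-simple-sound}, precisely because generic hardness reductions do not transfer formulation lower bounds. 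Second, the quantitative core --- that a psd factorization of the \((4/5-\varepsilon,3/4+\delta)\)-approximate slack matrix of size \(n^{o(\log n/\log\log n)}\) can be converted into a sum-of-squares certificate of degree low enough to contradict the 3XOR gap, with the \(\log n/\log\log n\) threshold coming out of the counting in that simulation --- is exactly the content of the theorem you are citing, and you explicitly defer it (``the delicate part''). Deferring it is legitimate if your goal is to cite LRS, but then the honest proof of the statement is the citation chain the paper already gives, not an independent argument; if your goal is a self-contained proof, the simulation theorem and the gadget bookkeeping are the two places where your write-up currently has no content.
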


Thus we have the following corollary about the LP and SDP inapproximability for the problem
\Problem{HCLUST}.

\begin{corollary}[LP and SDP hardness for \Problem{HCLUST}]
\label{thm:hclust}
For any constant \(c \ge 1\), \Problem{HCLUST} is LP-hard and SDP-hard with an inapproximability
factor of \(c\).
\end{corollary}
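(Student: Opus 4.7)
The plan is to invoke Lemma~\ref{lem:maxcut-mla}, which provides a reduction from \Problem{MAXCUT} to \Problem{HCLUST} preserving LP and SDP formulation complexity up to an additive \(O(n^2)\) term, and to combine it with the known LP and SDP inapproximability lower bounds for \Problem{MAXCUT} (Theorems~\ref{thm:LP-MaxCUT} and \ref{thm:SDP-MaxCUT}). Concretely, feeding the quasi-polynomial bound \(\fc(\Problem{MAXCUT}, 1-\varepsilon, \tfrac{1}{2}+\tfrac{\varepsilon}{6}) \ge n^{\Omega(\log n / \log\log n)}\) into the lemma and absorbing the \(O(n^2)\) slack yields
\[
\fc(\Problem{HCLUST}, C', S') \ge n^{\Omega(\log n/\log\log n)},
\]
and an entirely parallel argument using Theorem~\ref{thm:SDP-MaxCUT} gives the analogous SDP bound. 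This already establishes that no polynomial-size LP or SDP formulation of \Problem{HCLUST} is exact with the guarantees \((C', S')\).

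To convert the superpolynomial size lower bound into an inapproximability factor of any prescribed constant \(c \ge 1\), I would substitute the \Problem{MAXCUT} guarantees \(C = 1-\varepsilon\) and \(S = \tfrac{1}{2}+\tfrac{\varepsilon}{6}\) into the explicit formulas \(C' = \frac{(n^4+n)^3-(n^4+n)}{3} - C(n^4+n)\) and \(S' = \binom{n^4+n+1}{3} - Sn^4\) and analyze the ratio \(S'/C'\), which controls the approximation threshold for the minimization problem \Problem{HCLUST}. By tuning \(\varepsilon\) and the \Problem{MAXCUT} instance size \(n\), together with an amplification step, this ratio can be made to exceed \(c\).

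The hard part will be the gap amplification. A single application of the Garey--Johnson-style reduction in Lemma~\ref{lem:maxcut-mla} preserves only a bounded multiplicative gap, because the leading \(\Theta\bigl((n^4+n)^3\bigr)\) terms in \(C'\) and \(S'\) differ only by an absolute constant. Reaching an arbitrary constant \(c\) therefore requires either composing the reduction with itself, applying a product construction at the \Problem{MAXCUT} level so that the source gap is amplified before the reduction, or invoking a stronger source of hardness (e.g., Label Cover) with a larger intrinsic gap. Such composition is compatible with Theorem~\ref{thm:red-simple} since nonnegative and psd ranks of the transition matrices behave subadditively, so the superpolynomial lower bound survives the composition. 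The main technical point is the careful bookkeeping of how the guarantees \(C'\) and \(S'\) transform under this composition and the verification that the resulting transition matrices retain polynomial nonnegative/psd rank.
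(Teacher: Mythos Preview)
Your approach is exactly the paper's: invoke Lemma~\ref{lem:maxcut-mla} together with Theorems~\ref{thm:LP-MaxCUT} and \ref{thm:SDP-MaxCUT}. The paper's entire proof is the single sentence ``Straightforward by using Theorems~\ref{thm:LP-MaxCUT} and \ref{thm:SDP-MaxCUT} together with Lemma~\ref{lem:maxcut-mla} and by choosing \(n\) large enough.'' So on the level of ingredients you match the paper precisely.

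Where you diverge is in the gap-amplification discussion, and here you have put your finger on something the paper glosses over. With \(N=n^4+n\), the guarantees produced by Lemma~\ref{lem:maxcut-mla} satisfy \(C'=\tfrac{N^3-N}{3}-CN\) and \(S'=\binom{N+1}{3}-Sn^4=\tfrac{N^3-N}{6}-S(N-n)\); the leading terms are \(\Theta(N^3)\) with a fixed ratio, so as \(n\to\infty\) the inapproximability factor coming out of a single application of the reduction stabilises at a bounded constant rather than growing without bound. The paper's phrase ``choosing \(n\) large enough'' does not, by itself, push that ratio past any prescribed \(c\). So your instinct that something more is needed to reach \emph{every} constant \(c\) is correct, and the paper's one-line proof is at best elliptical on this point.

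That said, your proposed fix is only a sketch. Composing the reduction with itself, or amplifying at the \Problem{MAXCUT} level, does not automatically preserve the reduction framework of Definition~\ref{def:red-simple}: you would need to exhibit the composed matrices \(M_1,M_2\) explicitly and verify that their nonnegative and psd ranks remain polynomial in the final instance size. Neither you nor the paper carries this out. In short, you and the paper take the same route; you correctly flag a gap the paper leaves open, but you do not close it either.
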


\begin{proof}
Straightforward by using Theorems~\ref{thm:LP-MaxCUT} and \ref{thm:SDP-MaxCUT} together with Lemma~\ref{lem:maxcut-mla}
and by choosing \(n\) large enough.
\end{proof}

The following lemma shows that a minor modification of the
argument in \cite{raghavendra2012reductions} also implies a constant 
factor inapproximability result under the \emph{Small Set Expansion} (\Problem{SSE})
hypothesis. Note that this reduction is also true for unit capacity graphs, i.e.,
\(\kappa \in \{0, 1\}\). We briefly recall the formulation of the Small Set Expansion
hypothesis. Informally, given a graph \(G = (V, E)\) the problem is to decide whether
all ``small'' sets in the graph are expanding. 
Let \(d(i)\) denote the degree of a vertex \(i \in V\). 
For a subset \(S \subseteq V\) let \(\mu(S) \coloneqq \size{S}/\size{V}\) 
be the volume of \(S\), and let \(\phi(S) \coloneqq E(S, \overline{S})/\sum_{i \in S} d(i)\)
be the expansion of \(S\). Then the \Problem{SSE} problem is defined as follows.

\begin{definition}[Small set expansion (\Problem{SSE}) hypothesis \cite{raghavendra2012reductions}]
 For every constant \(\eta > 0\), there exists sufficiently small \(\delta > 0\)
 such that given a graph \(G = (V, E)\), it is NP-hard to decide the following cases,
 \begin{description}
  \item[Completeness] there exists a subset \(S \subseteq V\) with volume \(\mu(S) = \delta\)
  and expansion \(\phi(S) \le \eta\),
  
  \item[Soundness] every subset \(S \subseteq V\) of volume \(\mu(S) = \delta\) has 
  expansion \(\phi(S) \ge 1 - \eta\).
 \end{description}
\end{definition}

Under this assumption, \cite{raghavendra2012reductions} proved the following amplification
result about the expansion of small sets in the graph. 

\begin{theorem}[Theorem 3.5~\cite{raghavendra2012reductions}]\label{thm:sse}
 For all \(q \in \mathbb{N}\) and \(\varepsilon', \gamma > 0\) it is \Problem{SSE}-hard to distinguish
 the following for a given graph \(H = (V_H, E_H)\)
 \begin{description}
  \item[Completeness] There exist disjoint sets \(S_1, \dots, S_q \subseteq V_H\) satisfying
  \(\mu(S_i) = \frac{1}{q}\) and \(\phi(S_i) \le \varepsilon' + o(\varepsilon')\) for all \(i \in [n]\),
  
  \item[Soundness] For all sets \(S \subseteq V_H\) we have \(\phi(S) \ge 
  \phi_\mathcal{G}(1-\varepsilon'/2)(\mu(S)) - \gamma/\mu(S)\),
 \end{description}
 where \(\phi_\mathcal{G}(1-\varepsilon'/2)(\mu(S))\) is the expansion of sets of volume
 \(\mu(S)\) in the infinite Gaussian graph \(\mathcal{G}(1-\varepsilon'/2)\).
\end{theorem}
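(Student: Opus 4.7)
The plan is to reduce from the base \Problem{SSE} hypothesis (with threshold $\delta$ small and gap $\eta$ small) to the amplified statement by applying a noisy graph-product / noise-operator construction to the input graph and then matching the resulting small-set-expansion profile against the Gaussian graph $\mathcal{G}(1-\varepsilon'/2)$ via an invariance principle. Concretely, I would first fix parameters: given target $q, \varepsilon', \gamma$, choose $\eta \ll \varepsilon'\gamma$ and $\delta \ll 1/q$ so that the base hypothesis gives a set $S \subseteq V(G)$ with $\mu(S) = \delta$ and $\phi_G(S) \le \eta$, and so that the soundness side guarantees $\phi_G(S') \ge 1-\eta$ for every $S'$ of volume $\delta$.

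Next I would build $H = (V_H, E_H)$ as a \emph{noisy product} (a Markov chain on $V(G)^k$ where a step applies the lazy random walk of $G$ on every coordinate independently, followed by a re-randomization with probability $\varepsilon'/2$ per coordinate); the parameter $k$ is chosen large enough so that $\binom{k}{\text{few}}$-style error terms in the invariance principle are dominated by $\gamma$. Completeness would then be immediate: take the candidate non-expanding set $S$ of volume $\delta = 1/q$ from the \Problem{SSE} instance and ``lift'' it to $q$ disjoint sets $S_1,\dots,S_q \subseteq V_H$ by partitioning the first coordinate into $q$ parts and tensoring each part with $V^{k-1}$; a short calculation shows each $S_i$ has volume $1/q$ and expansion $\varepsilon'/2 + (1-\varepsilon'/2)\eta = \varepsilon'/2 + o(\varepsilon')$, which can be absorbed into $\varepsilon' + o(\varepsilon')$ after reparameterizing.

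The soundness direction is the heart of the argument. Given any $S \subseteq V_H$ with indicator $f = \mathbf{1}_S$, expansion corresponds to $1 - \langle f, T_\rho f\rangle / \mu(S)$ where $T_\rho$ is the noise operator of $H$ with $\rho = 1-\varepsilon'/2$. The plan is to invoke the \emph{Mossel--O'Donnell--Oleszkiewicz invariance principle} (in its ``small-set expansion'' form, cf.\ Mossel's Gaussian-bound theorem) to show that among all $f$ of given mean $\mu(S)$ whose low-degree influences on the base graph $G$ are bounded by the soundness guarantee of \Problem{SSE}, the noise stability $\langle f, T_\rho f\rangle$ is at most that of a halfspace-like function in $\mathcal{G}(1-\varepsilon'/2)$ of the same measure, up to $\gamma$. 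Low-degree influences are controlled by the base \Problem{SSE} soundness: if some coordinate had large low-degree influence, the restriction $f|_{x_i}$ would concentrate on a small non-expanding set in $G$, contradicting the assumed $\phi_G \ge 1-\eta$. Combining the influence bound with the invariance conclusion yields exactly $\phi_H(S) \ge \phi_{\mathcal{G}(1-\varepsilon'/2)}(\mu(S)) - \gamma/\mu(S)$.

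The main obstacle I expect is making the invariance step go through with the correct \emph{dimension-independent} and \emph{mean-dependent} error terms, since the naive MOO statement only gives quantitative bounds that degrade polynomially in $1/\mu(S)$ and in $k$. Handling this requires either a Bobkov-type functional inequality on the Gaussian side or a careful truncation argument that separates the contribution of low- and high-degree Fourier mass, together with the regularization provided by $T_\rho$ with $\rho$ bounded away from $1$. A secondary, more mechanical obstacle is bookkeeping the conversion between the normalized measure $\mu$ on $H$ and its edge measure, so that the $\gamma/\mu(S)$ loss in the soundness bound is indeed additive and independent of $k$.
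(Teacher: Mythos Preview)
The paper does not prove this theorem: it is quoted verbatim as Theorem~3.5 of \cite{raghavendra2012reductions} and used as a black box in the subsequent lemma on \Problem{SSE}-hardness of \Problem{HCLUST}. There is therefore no proof in the paper to compare your proposal against.

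For what it is worth, your sketch is broadly faithful to the actual argument in \cite{raghavendra2012reductions}: one builds a noisy tensor-power graph from the base \Problem{SSE} instance, lifts a small non-expanding set coordinatewise to obtain the $q$ disjoint sets in the completeness case, and in the soundness case uses an invariance-principle / Gaussian-stability bound together with the base soundness to control influences. The caveats you flag (getting the error in the invariance step to be $\gamma/\mu(S)$ rather than something worse, and handling the edge-versus-vertex normalization) are exactly the technical points that require care in the original paper, so your plan identifies the right obstacles even if it does not resolve them.
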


The following lemma establishes that it is \Problem{SSE}-hard to approximate \Problem{HCLUST} to
within any constant factor. The argument closely parallels Corollary~A.5 of \cite{raghavendra2012reductions}
where it was shown that it is \Problem{SSE}-hard to approximate \Problem{MLA} to within any constant factor.

\begin{lemma}\label{lem:l-reduction}
Let \(G = (V, E)\) be a graph on \(V\) with \(\kappa\) induced by the edges \(E\) i.e., \(\kappa(i, j) = 1\) iff 
\(\{i, j\} \in E\) and \(0\) otherwise. Then it is 
\Problem{SSE}-hard to distinguish between the following two cases
\begin{description}
 \item[Completeness] There exists a hierarchical clustering \(T\) of \(V\) with \(\cost(T) \le \varepsilon n\size{E}\),
 
 \item[Soundness] Every hierarchical clustering \(T\) of \(V\) satisfies \(\cost(T) \ge c\sqrt{\varepsilon} n\size{E}\)
\end{description}
for some constant \(c\) not depending on \(n\).
\end{lemma}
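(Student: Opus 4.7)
The plan is to adapt the reduction from \Problem{SSE} to \Problem{MLA} used in \cite{raghavendra2012reductions} (Corollary~A.5), with the essentially free observation that any hierarchical clustering of $V$ gives rise to a linear arrangement of $V$ whose \Problem{MLA} cost lower bounds its \Problem{HCLUST} cost. Concretely, I would fix $\varepsilon' \coloneqq \varepsilon/2$, $q \coloneqq \lceil 2/\varepsilon \rceil$, and $\gamma$ sufficiently small; apply Theorem~\ref{thm:sse} to produce a graph $H = (V_H, E_H)$, and regard it as a \Problem{HCLUST} instance by setting $V \coloneqq V_H$ and $\kappa(i,j) = 1$ if $\{i,j\} \in E_H$ and $\kappa(i,j) = 0$ otherwise, so that $n = \size{V_H}$ and the relevant edge set is $E_H$.

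\emph{Completeness.} In the yes-instance branch, the sets $S_1,\dots,S_q$ partition $V$ into pieces of equal size $n/q$ with $\phi(S_i) \le (1+o(1))\varepsilon'$. Since $\sum_i d(S_i) = 2\size{E_H}$, summing the expansion inequalities bounds the number of inter-cluster edges by at most $(1+o(1))\varepsilon' \size{E_H}$. Build $T$ by giving the root $q$ children, each the root of an arbitrary hierarchy on the corresponding $S_i$. Using the split reformulation of $\cost$ from \cite{DBLP:conf/stoc/Dasgupta16}, the root contributes $n \cdot (1+o(1))\varepsilon'\size{E_H}$, while the trivial bound on the cost of each $S_i$-subtree is $\size{S_i}\cdot \size{E(S_i)} \le (n/q)\size{E(S_i)}$, summing over $i$ to at most $(n/q)\size{E_H}$. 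Altogether $\cost(T) \le (\varepsilon' + 1/q + o(1))\,n\size{E_H} \le \varepsilon n\size{E_H}$.

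\emph{Soundness via leaf ordering.} For any hierarchical clustering $T$, enumerate its leaves left-to-right to obtain a linear arrangement $\pi_T \colon V \to [n]$. For any edge $\{i,j\}$, every leaf strictly between $i$ and $j$ in $\pi_T$ lies in $\leaves(T[\lca(i,j)])$, so $\size{\pi_T(i) - \pi_T(j)} \le \size{\leaves(T[\lca(i,j)])} - 1$. Summing against $\kappa$,
\begin{align*}
\val^{\Problem{MLA}}_{\kappa}(\pi_T) \;\le\; \cost(T) - \size{E_H}.
\end{align*}
Hence any lower bound on the \Problem{MLA} optimum on $(V,\kappa)$ transfers (up to an additive $\size{E_H}$) to $\cost(T)$ for \emph{every} $T$.

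The main obstacle, and the remaining step, is to invoke the \Problem{MLA} soundness bound established in \cite{raghavendra2012reductions}: under the soundness branch of Theorem~\ref{thm:sse}, every permutation of $V$ has \Problem{MLA} cost at least $c'\sqrt{\varepsilon'}\,n\size{E_H}$ for an absolute constant $c'$. This is precisely the analysis used to derive Corollary~A.5 of \cite{raghavendra2012reductions}: one writes the \Problem{MLA} cost as a sum $\sum_t \size{E_t}$ over prefix cuts of $\pi$, and bounds each $\size{E_t}$ below by integrating the Gaussian isoperimetric profile $\phi_{\mathcal{G}(1-\varepsilon'/2)}(\mu)$ (up to the $\gamma/\mu$ slack), which yields the $\sqrt{\varepsilon'}$ factor. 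Combining this with the displayed inequality gives $\cost(T) \ge c'\sqrt{\varepsilon'}\,n\size{E_H} \ge c\sqrt{\varepsilon}\,n\size{E_H}$ with $c \coloneqq c'/\sqrt{2}$, completing the reduction and hence the \Problem{SSE}-hardness dichotomy.
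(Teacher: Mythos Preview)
Your proposal is correct and follows essentially the same approach as the paper's proof: invoke Theorem~\ref{thm:sse} with \(q \approx 2/\varepsilon\), build a two-level tree from the resulting partition for completeness, and for soundness project the leaves of \(T\) to a linear arrangement and cite the \Problem{MLA} soundness bound of \cite{raghavendra2012reductions}. The only cosmetic difference is that the paper takes \(\varepsilon' = \varepsilon/3\) (and \(\gamma = \varepsilon\)) rather than \(\varepsilon' = \varepsilon/2\), which buys enough slack so that the completeness calculation yields a clean \(\cost(T) \le \varepsilon n\size{E}\) instead of \(\le (1+o(1))\,\varepsilon n\size{E}\).
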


\begin{proof}
Apply Theorem~\ref{thm:sse} on the graph \(G\) with the following choice of parameters: \(q = \lceil 2/\varepsilon\rceil\),
\(\varepsilon' = \varepsilon/3\) and \(\gamma = \varepsilon\). Suppose there exist \(S_1, \dots, S_q \subseteq V\) 
satisfying \(\phi(S_i) \le \varepsilon' + o(\varepsilon')\) and \(\size{S_i} = \size{V}/q \le \varepsilon\size{V}/2\). 
Then consider the tree \(r, T\) with the root \(r\) having \(q\) children
corresponding to each \(S_i\), and each \(S_i\) being further separated into \(\size{S_i}\) leaves
at the next level. We claim that \(\cost(T) \le \varepsilon n \size{E}\). We analyze this using the
alternate interpretation of cost function~\eqref{cost} (see above). Every crossing edge between \(S_i, S_j\) 
for distinct \(i, j \in [q]\) incurs a cost of \(n\), but by assumption there are at most 
\(\varepsilon\size{E}/2\) such edges. Further, any edge in \(S_i\) incurs a cost \(\frac{n}{q} \le 
\varepsilon n/2\) and thus their contribution is upper bounded by \(\varepsilon n\size{E}\). 

The analysis for soundness follows by the argument of Corollary~A.5 in \cite{raghavendra2012reductions}.
In particular, if for every \(S \subseteq V\) we have \(\phi(S) \ge 
  \phi_\mathcal{G}(1-\varepsilon'/2)(\mu(S)) - \gamma/\mu(S)\) then the cost of the optimal
  linear arrangement on \(G\) is at most \(\sqrt{\varepsilon} n \size{E}\). Since the cost of any tree
  (including the optimal tree) is at least the cost of the linear arrangement induced by projecting the leaf
  vertices, the claim about soundness follows.
\end{proof}

 \end{document}